\theoremstyle{definition}
\newtheorem{definition}{Definition}[section]
\newtheorem{theorem}{Theorem}[section]
\newtheorem{proposition}[theorem]{Proposition}
\def\eqref#1{equation~\ref{#1}}
\def\1{\bm{1}}
\DeclareMathAlphabet{\mathsfit}{\encodingdefault}{\sfdefault}{m}{sl}
\SetMathAlphabet{\mathsfit}{bold}{\encodingdefault}{\sfdefault}{bx}{n}
\definecolor{red}{rgb}{0,0,0}
\title{\textcolor{red}{On permutation-invariant neural networks}}
\author{\name Masanari Kimura \email m.kimura@unimelb.edu.au \\
      \addr The University of Melbourne
      \AND
      \name Ryotaro Shimizu \email r2shimizu@ucsd.edu \\
      \addr University of California San Diego \\
      \addr ZOZO Research
      \AND
      \name Yuki Hirakawa \email yuki.hirakawa@zozo.com \\
      \addr ZOZO Research 
      \AND
      \name Ryosuke Goto \email ryosuke.goto@zozo.com \\
      \addr ZOZO Research 
      \AND
      \name Yuki Saito \email yuki.saito@zozo.com \\
      \addr ZOZO Research 
      }
\begin{document}

\maketitle

\begin{abstract}
Conventional machine learning algorithms have traditionally been designed under the assumption that input data follows a vector-based format, with an emphasis on vector-centric paradigms.
However, as the demand for tasks involving set-based inputs has grown, there has been a paradigm shift in the research community towards addressing these challenges.
In recent years, the emergence of neural network architectures such as Deep Sets and Transformers has presented a significant advancement in the treatment of set-based data.
These architectures are specifically engineered to naturally accommodate sets as input, enabling more effective representation and processing of set structures.
Consequently, there has been a surge of research endeavors dedicated to exploring and harnessing the capabilities of these architectures for various tasks involving the approximation of set functions.
This comprehensive survey aims to provide an overview of the diverse problem settings and ongoing research efforts pertaining to neural networks that approximate set functions.
By delving into the intricacies of these approaches and elucidating the associated challenges, the survey aims to equip readers with a comprehensive understanding of the field.
Through this comprehensive perspective, we hope that researchers can gain valuable insights into the potential applications, inherent limitations, and future directions of set-based neural networks.
\textcolor{red}{
Indeed, from this survey we gain two insights: i) Deep Sets and its variants can be generalized by differences in the aggregation function, and ii) the behavior of Deep Sets is sensitive to the choice of the aggregation function.
From these observations, we show that Deep Sets, one of the well-known permutation-invariant neural networks, can be generalized in the sense of a quasi-arithmetic mean.}
\end{abstract}

%==================================================
% Introduction
%==================================================
\section{Introduction}
In recent years, machine learning has achieved significant success in many fields, and many typical machine learning algorithms handle vectors as their input and output~\citep{zhou2021machine,islam2022machine,erickson2017machine,jordan2015machine,mitchell1997machine}.
For example, some of these applications include image recognition~\citep{sonka2014image,minaee2021image,guo2016deep}, natural language processing~\citep{strubell2019energy,young2018recent,otter2020survey,deng2018deep}, and recommendation systems~\citep{portugal2018use,melville2010recommender,zhang2019deep}.

However, with the advancement of the field of machine learning, there has been a growing emphasis on the research of algorithms that handle more complex data structures in recent years.
In this paper, we consider machine learning algorithms that deal with sets~\citep{hausdorff2021set,levy2012basic,enderton1977elements} as one such data structure.
% Here are some examples of set data structures:
\textcolor{red}{Here is a couple of examples of set data structures:}
\begin{itemize}
    \item {\bf Set of vector data.} For example a set of image vectors.
    \item {\bf Point cloud data.} Point cloud data consists of a set of data points, each represented by its spatial coordinates in a multi-dimensional space.
\end{itemize}
Considering machine learning algorithms that handle sets allows us to leverage the representations of these diverse types of data.
Certainly, the goal here is to utilize machine learning models to approximate set functions. 
Set functions, in this context, are mathematical functions that operate on sets of elements or data points. These functions capture various properties, relationships, or characteristics within a given set. However, when dealing with complex data and large sets, it can be challenging to directly model or compute these set functions using traditional model architectures.
Therefore, we need to consider a specialized model architecture specifically designed for the approximation of set functions.

In particular, a notable characteristic of set functions, when compared to vector functions, is their permutation invariance.
Permutation invariance, in the context of set functions or set-based data, means that the output of the function remains the same regardless of the order in which elements of the set are arranged.
In other words, if you have a set of data points and apply a permutation (rearrangement) to the elements within the set, a permutation-invariant function will produce the same result. This property is crucial when dealing with sets of data where the order of elements should not affect the function's evaluation.
In Section~\ref{sec:preliminaries} we introduce a more formal definition.
Popular conventional neural network architectures like VGG~\citep{simonyan2014very} and ResNet~\citep{he2016deep}, \textcolor{red}{or more recently proposed ResNeXt~\citep{xie2017aggregated}, EfficientNet~\citep{tan2019efficientnet} and ResNest~\citep{,zhang2022resnest}} do not inherently possess the permutation-invariant property.
Hence, the primary research interest lies in determining what neural network architectures can be adopted to achieve the permutation-invariant property while maintaining performance and expressive capabilities similar to those achieved by conventional models.
In Section~\ref{sec:model_architectures}, we introduce such model architectures and provide the overview of the objective tasks in Section ~\ref{sec:tasks}.
Furthermore, there have been several theoretical analyses of neural network approximations for such permutation-invariant functions, and Section~\ref{sec:theoretical_analysis} introduces them.
To evaluate such research, datasets for performance assessment of approximate set functions are essential. In Section~\ref{sec:datasets}, we list some of the well-known datasets for this purpose.
\textcolor{red}{
Finally, we propose a novel generalization of Deep Sets in Section~\ref{sec:power_deep_sets}.
}

%The following outlines the organization of this paper.
%\begin{itemize}
%    \item In Section~\ref{sec:preliminaries}, we introduce the notation and background knowledge necessary for this paper.
%    \item In Section~\ref{sec:model_architectures}, we introduce the neural network architectures used to approximate set functions.
%    \item In Section~\ref{sec:tasks}, we provide an overview of the tasks that can be addressed by approximating set functions.
%    \item In Section~\ref{sec:theoretical_analysis}, we present several theoretical analyses of approximating set functions.
%    \item In Section~\ref{sec:datasets}, we list commonly used datasets for approximating set functions.
%    \item Finally, in Section~\ref{sec:conclusion}, we provide a summary of the survey, discuss the challenges in existing research, and explore potential future research directions.
%\end{itemize}

\textcolor{red}{
Our contributions are summarized as follows.
\begin{itemize}
    \item We provide a comprehensive survey of permutation-invariant neural network architectures and the tasks/datasets they handle (Section~\ref{sec:model_architectures}, \ref{sec:tasks} and \ref{sec:datasets}).
    \item We also highlight existing results of theoretical analysis related to the generalization of Deep Sets in the sense of Janossy pooling (Section~\ref{sec:theoretical_analysis}).
    \item Finally, based on the above survey results we remark that Deep Sets and its variants can be generalized in the sense of quasi-arithmetic mean.
    We use this concept to propose a special class of Deep Sets, namely H\"{o}lder's Power Deep Sets (Section~\ref{sec:power_deep_sets}).
\end{itemize}
}

\section{\textcolor{red}{Preliminary and related concepts}}
\label{sec:preliminaries}
First, we introduce the necessary definitions and notation.
%\begin{definition}
\textcolor{red}{
Let $\mathcal{V} \coloneqq \{1,\dots,|\mathcal{V}|\}$ be the ground set.
}
%\end{definition}
\begin{definition}
    Let \textcolor{red}{$\varphi\colon\mathcal{V} \to \mathbb{R}^d$} be the mapping from each element of the ground set $\mathcal{V}$ to the corresponding $d$-dimensional vector with respect to the indices as $\varphi(s_i) = \bm{s}_i \in \mathbb{R}^d$ \textcolor{red}{for all $s_i \in \mathcal{S}\subseteq\mathcal{V}$}.
    Furthermore, when it is clear from the context, we identify $\mathcal{S}\subseteq\mathcal{V}$ with the set of vectors obtained by this mapping as $\mathcal{S} = \{s_1,\dots,s_{|\mathcal{S}|}\} = \{\varphi(s_1),\dots,\varphi(s_{|\mathcal{S}|})\} = \{\bm{s}_1,\dots,\bm{s}_{|\mathcal{S}|}\}$.
\end{definition}
\textcolor{red}{
For the case of dealing with subsets, we also provide the following definitions.
}
\begin{definition}
    Denote the set of all subsets of a set $\mathcal{V}$, known as the power set of $\mathcal{V}$, by $2^{\mathcal{V}}$.
\end{definition}

\subsection{\textcolor{red}{Permutation invariance}}
There are several key differences between functions that take general vectors as inputs and functions that take sets as inputs.
\textcolor{red}{
\begin{definition}[Tuple]
    A tuple, or an ordered $n$-tuple is a set of $n$ elements with an order associated with them.
    If $n$ elements are represented by $x_1$, $x_2,\dots, x_n$, then we write the ordered $n$-tuple as $(x_1, x_2,\to,x_n)$.
\end{definition}
}
\textcolor{red}{
The concept of set permutations is important when dealing with set functions.
We therefore introduce the following definition of the set of all permutations of any subset.
}
\textcolor{red}{
\begin{definition}
    Let $\Pi_{\mathcal{S}}$ be the set of all permutations of a tuple $\mathcal{S}$.
\end{definition}
}
\textcolor{red}{
Let $\Phi\colon\cup^{|\mathcal{V}|}_{k=1}\mathcal{V}^k\to\mathbb{R}$ be a function defined on tuples.
}
\begin{definition}[Permutation invariant]
    \label{def:permutation_invariant}
    \textcolor{red}{A function $\Phi\colon\cup^{|\mathcal{V}|}_{k=1}\mathcal{V}^k\to\mathbb{R}$} is said to be permutation invariant if $\Phi(\mathcal{S}) = \Phi(\pi_{\mathcal{S}}\mathcal{S})$ \textcolor{red}{for any tuple $\mathcal{S}$} and its arbitrary permutation $\pi_{\mathcal{S}} \in \Pi_{\mathcal{S}}$.
\end{definition}
\textcolor{red}{Here, $\Phi$ can be perceived as a set function if and only if $\Phi$ is permutation invariant.
Conversely, a set function $f\colon 2^\mathcal{V}\to\mathbb{R}$ induces a pertmutation invariant function.
 on tuples.}
%\begin{definition}[Permutation invariant, set-pair input case]
%    \textcolor{red}{A function $\Phi\colon 2^{\mathcal{V}\times\mathcal{V}} \to \mathbb{R}$} is said to be permutation invariant if $\Phi(\mathcal{S}, \mathcal{T}) = \Phi(\pi_{\mathcal{S}}\mathcal{S}, \pi_{\mathcal{T}}\mathcal{T})$ for any $\mathcal{S},\mathcal{T} \subseteq \mathcal{V}$ and arbitrary permutations $\pi_{\mathcal{S}} \in \Pi_{\mathcal{S}}$ and $\pi_{\mathcal{T}} \in \Pi_{\mathcal{T}}$.
%\end{definition}
\textcolor{red}{
On the other hand, function that changes its value depending on the permutation of the input set is called a permutation-sensitive function.
}
Neural networks tasked with approximating set functions face unique challenges and requirements compared to conventional vector-input functions. In order to accurately model and capture the characteristics of sets, these networks need to fulfill the aforementioned properties. First, permutation invariance ensures that the output of the network remains consistent regardless of the order in which elements appear in the set. This is crucial for capturing the inherent structure and compositionality of sets, where the arrangement of elements does not affect the overall meaning or outcome. Second, equivariance to set transformations guarantees that the network's behavior remains consistent under operations such as adding or removing elements from the set. This property ensures that the network can adapt to changes in set size without distorting its output.
%Finally, the output of the network should be invariant to repeated elements, as the presence of duplicate elements should not impact the resulting function approximation.
By satisfying these properties, neural networks can effectively model and approximate set functions, enabling them to tackle a wide range of set-based tasks in various domains.

\subsection{\textcolor{red}{Permutation \textcolor{red}{equivariance}}}
\textcolor{red}{
Along with permutation invariance, another important concept is permutation \textcolor{red}{equivariance}.
Permutation \textcolor{red}{equivariant} is defined as follows.
\begin{definition}[Permutation \textcolor{red}{equivariant}]
    Let $\mathcal{U} = \cup^{|\mathcal{V}|}_{k=1}\mathcal{V}^k$.
    A function $\Phi\colon \mathcal{U}\times\mathcal{U}\to\mathcal{U}$ is said to be permutation \textcolor{red}{equivariant} if permutation of the input instances permutes the output labels as
    \begin{align}
        f(\{s_{\pi_{\mathcal S}(1)},\dots, s_{\pi_{\mathcal S}(|\mathcal{S}|)}) = \{f_{\pi_{\mathcal{S}}(1)}(\mathcal{S}),\dots,f_{\pi_{\mathcal{S}}(|\mathcal{S}|)}(\mathcal{S})\}.
    \end{align}
\end{definition}
This \textcolor{red}{equivariance} property is important in the supervised setting~\citep{zaheer2017deep}.
In recent studies, it has been theoretically shown that permutation invariant transformations can be constructed by combining multiple permutation \textcolor{red}{equivariance} transformations~\citep{fei2022dumlp}.
}

%==================================================
% Architectures
%==================================================
\section{\textcolor{red}{Model architectures for approximating set functions}}
\label{sec:model_architectures}
In this section, \textcolor{red}{we give an overview of neural network architectures} approximating set functions.
Table~\ref{tab:architectures} summarizes architectures \textcolor{red}{as well as the corresponding applications}.

In particular, we focus on architectures following \textcolor{red}{the idea of} Deep Sets, which demonstrated universality results for permutation-invariant inputs.
However, prior to that, several similar studies on related architectures also exist~\citep{gens2014deep,cohen2016group}.
For example, invariance can be achieved by pose normalization using an equivariant detector~\citep{lowe2004distinctive,jaderberg2015spatial}, or by averaging a possibly nonlinear function over a group~\citep{reisert2008group,manay2006integral,kondor2007novel}.

\begin{table}[]
    \centering
    \scalebox{0.74}{
    \begin{tabular}{cp{6cm}p{6cm}}
        % header
        \toprule
         Architecture &  Novelties and contributions & Applied tasks\\
         \midrule
         % DeepSets
         Deep Sets~\citep{zaheer2017deep} & The universality result for permutation-invariance and sum-decomposability. & \begin{tabular}{l} General set function approximation \\ Point cloud classification \\ Set expansion \\ Set retrieval \\ Image tagging \\ Set anomaly detection \end{tabular}\\ \hline
         % PointNet
         PointNet~\citep{qi2017pointnet} & The max-decomposition architecture. & \begin{tabular}{l} Point cloud classification \\ Point cloud segmentation \\ \end{tabular} \\ \hline
         % PointNet++
         PointNet++~\citep{qi2017pointnet++} & & \\ \hline
         % SetNet
         SetNet~\citep{zhong2018compact} & Utilizing NetVLAD layer for the set retrieval task. & \begin{tabular}{l} Set retrieval \end{tabular} \\ \hline
         % Set Transformer
         Set Transformer~\citep{lee2019set} & Utilizing Transformer architecture for permutation-invariant inputs. & \begin{tabular}{l} General set function approximation \\ Point cloud classification \\ Set anomaly detection \end{tabular}\\ \hline
         % Deep Set Prediction Networks, DSPN
         DSPN~\citep{zhang2019deep} & A model that predicts a set of vectors from another vector. & \begin{tabular}{l} Set reconstruction \\ Bounding box prediction
         \end{tabular} \\ \hline
         % iDSPN
         iDSPN~\citep{zhang2021multiset} & The novel concept of exclusive multiset-equivariance. & \begin{tabular}{l}
         Class specific numbering \\ Random multisets reconstruction \\ Object property prediction
         \end{tabular} \\ \hline
         % Set VAE
         SetVAE~\citep{kim2021setvae} & The VAE-based set generation model. & \begin{tabular}{l} Set generation \end{tabular} \\ \hline
         % Slot Attention
         Slot Attention~\citep{locatello2020object} & A new variant of permutation-invariant attention mechanism. & \begin{tabular}{l} Object discovery \\ Set prediction
         \end{tabular} \\ \hline
         % Deep Sets++ and Set Transformer++
         Deep Sets++ \& Set Transformer++~\citep{zhang2022set} & The Set Normalization as the alternative normalization layer. & \begin{tabular}{l} General set function approximation \\ Point cloud classification \\ Set anomaly detection \end{tabular}\\ \hline
         % PointCLIP
         PointCLIP~\citep{zhang2022pointclip} & CLIP for permutation-invariant inputs. & \begin{tabular}{l} Point cloud classification \end{tabular} \\ \hline
         % Perceiver
         \textcolor{red}{Perceiver~\citep{jaegle2021perceiver}} & \textcolor{red}{Introduction of the computationally efficient learnable query in the framework of Set Transformer.} & \begin{tabular}{l}
              \textcolor{red}{General set function approximation} \\
              \textcolor{red}{Point cloud classification}
         \end{tabular} \\ \hline
         % Perceiver IO
         \textcolor{red}
         {Perceiver IO~\citep{jaegle2021perceiverio}} &
         \textcolor{red}{Perceiver for multimodal outputs.} &
         \begin{tabular}{l}
              \textcolor{red}{General set function approximation} \\
              \textcolor{red}{Multimodal set embedding}
         \end{tabular} \\ \hline
         % Perceiver VL
         \textcolor{red}
         {Perceiver VL~\citep{tang2023perceiver}} &
         \textcolor{red}{Perceiver-based architecture for vision and language tasks.} &
         \begin{tabular}{l}
              \textcolor{red}{Set retrieval}
         \end{tabular} \\
         \bottomrule
    \end{tabular}
    }
    \caption{Neural network architectures for approximating set functions.}
    \label{tab:architectures}
\end{table}

\subsection{Deep Sets}
One seminal work for approximating set functions by neural networks is Deep Sets~\citep{zaheer2017deep}.
The framework of Deep Sets is written as
\begin{align}
    f(\mathcal{S}) \coloneqq \rho\left(\sum_{\bm{s}\in\mathcal{S}}\phi(\bm{s})\right), \label{eq:deep_sets}
\end{align}
for set $\mathcal{S}$, and two functions $\phi$, $\rho$.
\textcolor{red}{We can see} that Deep Sets architecture satisfies the permutation-invariant~\ref{def:permutation_invariant}.
Moreover it can \textcolor{red}{represent} the set function by using arbitrary neural networks $\phi$ and $\rho$.
It is also known that Deep Sets have the universality for permutation-invariant and sum-decomposability (see Section~\ref{sec:sum_decomposability} for more details).

\textcolor{red}{
One fundamental property for approximating set functions is the sum-decomposability.
\begin{definition}
    \label{def:sum_decomposable}
    \textcolor{red}{The function $f$ is said to be sum-decomposable} if there are functions $\rho$ and
    $\phi$ such that
    \begin{align}
        f(\mathcal{S}) = \rho\left(\sum_{\bm{s} \in \mathcal{S}}\phi(\bm{s})\right) \label{eq:sum_decomposition}
    \end{align}
    for any $\mathcal{S} \subseteq \mathcal{V}$.
    In this case, we say that $(\rho, \phi)$ is a sum-decomposition of $f$.
    Given a sum-decomposition $(\rho, \phi)$, we write $\Phi(\mathcal{S}) \coloneqq \sum_{\bm{s} \in \mathcal{S}}\phi(\bm{s})$.
     With this notation, we can write Eq.~\ref{eq:sum_decomposition} as $f(\mathcal{S}) = \rho(\Phi(\mathcal{S}))$.
     We may also refer to the function $\rho\circ\Phi$ as a sum-decomposition.
\end{definition}
Then, we can see that the architecture of Deep Sets satisfies the sum-decomposability.
}

\subsection{PointNet and PointNet++}
The most well-known architecture developed for processing point cloud data is \textcolor{red}{PointNet}~\citep{qi2017pointnet}.
One of the important differences between PointNet and Deep Sets is their pooling operation.
For instance, PointNet employs global max pooling, while global sum pooling is adopted in Deep Sets.
This implies that we can write PointNet architecture as follows:
\begin{align}
    f(\mathcal{S}) \coloneqq \rho\left(\max_{\bm{s}\in\mathcal{S}}\phi(\bm{s})\right). \label{eq:pointnet}
\end{align}
Therefore, we can express the architectures of Deep Sets and PointNet in a unified notation.

\textcolor{red}{A} function that can be written in the form of Eq.~\ref{eq:deep_sets} is referred to as sum-decomposable, and the function that can be written in the form of Eq.~\ref{eq:pointnet} is called max-decomposable.

The universality result of sum-decomposability in Deep Sets suggests that a similar result holds for max-decomposable functions, as indicated in subsequent studies~\citep{wagstaff2022universal}.

\subsection{Set Transformer}

In recent years, the effectiveness of Transformer architectures~\citep{vaswani2017attention,kimura2019interpretation,lin2022survey} has been reported in various tasks that neural networks tackle, such as natural language processing~\citep{kalyan2021ammus,wolf2019huggingface,kitaev2020reformer,beltagy2020longformer}, computer vision~\citep{han2022survey,khan2022transformers,dosovitskiy2020image,arnab2021vivit,zhou2021deepvit}, and time series analysis~\citep{wen2022transformers,zhou2021informer,zerveas2021transformer}.
Set Transformer~\citep{lee2019set} \textcolor{red}{utilize} the Transformer architecture to handle permutation-invariant inputs.
Similar architectures have also been proposed for point cloud data~\citep{guo2021pct,park2022fast,zhang2022patchformer,liu2023flatformer}.

The architectures of Deep Sets and PointNet, as evident from Eq.~\ref{eq:deep_sets} and \ref{eq:pointnet}, operate by independently transforming each element of the input set and then aggregating them.
However, this approach neglects the relationships between elements, leading to a limitation.
On the other hand, the Set Transformer addresses this limitation by introducing an attention mechanism, which takes into account the relationships between two elements in the input set.
We can write this operation as follows.
\textcolor{red}{
\begin{align}
    f(\mathcal{S}) = \rho\left(\frac{1}{\tau(|\mathcal{S}|, 2)}\sum_{\mathcal{T} \in \mathcal{S}_{(2)}}\phi(\bm{t}_1,\bm{t}_2)\right), \quad (\bm{t}_1,\bm{t}_2 \in \mathcal{T}) \label{eq:set_transformer}
\end{align}
where $\tau(|\mathcal{S}|, 2) = \frac{|\mathcal{S}|}{(|\mathcal{S}| - 2)!}$.
Eq.~\ref{eq:set_transformer} can be viewed as performing permutation-invariant operations on \textcolor{red}{2-subsets} of permutations of the input set.
Indeed, $\phi(\bm{t}_1, \bm{t}_2)$ is usually split up in attention weights $w(\bm{t}_1,\bm{t}_2)$ and values $v(\bm{t}_2)$ and a softmax acts on the weights.
}

As introduced in Section~\ref{sec:sum_decomposability}, recent research has revealed that Deep Sets, PointNet, Set Transformer, and their variants can be regarded as special cases of a function class called Janossy Pooling~\citep{murphy2018janossy}.
Moreover, there have been several discussions regarding the generalization of Transformer and Deep Sets~\citep{kim2021transformers,maron2018invariant}.

\subsection{Deep Sets++ and Set Transformer++}

Many neural network architectures include normalization layers such as Layer Norm~\citep{ba2016layer}, BatchNorm~\citep{ioffe2015batch,bjorck2018understanding} or others~\citep{wu2018group,salimans2016weight,huang2017arbitrary}.
SetNorm\textcolor{red}{~\citep{zhang2022set}} is used for neural networks that take sets as input, based on the result that the normalization layer is permutation-invariant only when the transformation part of the normalization layer deforms all the features with different scales and biases for each feature.
Specifically, applying SetNorm to Deep Sets and Set Transformer, referred to as Deep Sets++ and Set Transformer++ respectively, has been shown experimentally to achieve \textcolor{red}{improvements of original Deep Sets and Set Transformer}.
Furthermore, \textcolor{red}{\citet{zhang2022set}} also releases a dataset called Flow-RBC, which comprises sets of measurement results of red blood cells of patients, aimed at predicting anemia.

\subsection{DSPN and iDSPN}
Deep Set Prediction Networks (DSPN)~\citep{zhang2019deep} propose a model for predicting a set of vectors from another set of vectors.
The proposed decoder architecture leverages the fact that the gradients of the set functions with respect to the set are permutation-invariant, and it is effective for tasks such as predicting a set of bounding boxes for a single input image.
Also, iDSPN~\citep{zhang2021multiset} introduced the concept of exclusive multiset-equivariance to allow for the arbitrary ordering of output elements with respect to duplicate elements in the input set.
They also demonstrated that by constructing the encoder of DSPN using Fspool~\citep{zhang2019fspool}, the final output for the input set satisfies exclusive multiset-\textcolor{red}{equivariance}.

\begin{figure}
    \centering
    \includegraphics[width=0.95\linewidth]{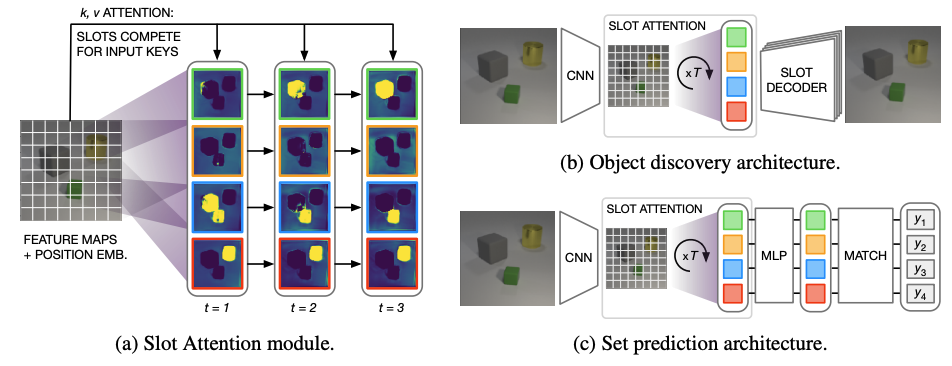}
    \caption{Slot Attention module and example applications to unsupervised object discovery and supervised set prediction with labeled targets, from Figure 1 of \cite{locatello2020object}.
    \textcolor{red}{This figure is cited to illustrate the behavior of the Slot Attention module.}}
    \label{fig:slot_attention}
\end{figure}

\subsection{SetVAE}
SetVAE~\citep{kim2021setvae} is the set generation model based on Variational Auto-Encoder (VAE)~\citep{kingma2013auto} that takes into account exchangeability, variable-size sets, interactions between elements, and hierarchy.
Here, hierarchy refers to the relationships between subsets within a set, such as the hierarchical structure of elements in the set.
The concept of the Hierarchical VAE was introduced in the context of high-resolution image generation~\citep{sonderby2016ladder,vahdat2020nvae}, and this study utilizes it for set generation.
\textcolor{red}{
As an application of the idea of SetVAE, SCHA-VAE~\citep{giannone2022scha} for generating a few shot image is also proposed.
}

\subsection{PointCLIP}
One of the learning strategies that has garnered significant attention in the field of machine learning in recent years is Contrastive Language-Image Pre-training (CLIP)~\citep{radford2021learning,shen2021much,luo2022clip4clip}.
PointCLIP~\citep{zhang2022pointclip} adopts CLIP for permutation-invariant neural networks.
PointCLIP encodes point cloud data using CLIP and achieves category classification for 3D data by examining their positional relationships with category texts.
Furthermore, PointCLIPv2~\citep{zhu2023pointclip} is an improvement of PointCLIP, achieved through a dialogue system.

%\begin{figure}
%    \centering
%    \includegraphics[width=0.95\linewidth]{assets/slot_attention_figure.png}
%    \caption{Slot Attention and its variants.}
%    \label{fig:slot_attention_figure}
%\end{figure}

\subsection{Slot Attention}
The Slot Attention~\citep{locatello2020object} mechanism was proposed to obtain representations of arbitrary objects in images or videos in an unsupervised manner.
Slot Attention employs an iterative attention mechanism to establish a mapping from its inputs to the slots (see Fig.~\ref{fig:slot_attention}).
The slots are initially set at random and then refined at each iteration to associate with specific parts or groups of the input features. The process involves randomly sampling initial slot representations from a common probability distribution.
It is proven that Slot Attention is
\begin{itemize}
    \item[i)] permutation invariance with respect to the input;
    \item[ii)] permutation equivariance with respect to the order of the slots.
\end{itemize}

\cite{zhang2022unlocking} pointed out two issues with slot attention: the problem of single objects being bound to multiple slots (soft assignments) and the problem of multiple slots processing similar inputs, resulting in multiple slots having averaged information about the properties of a single object (Lack of tiebreaking).
To address these issues, they leverage the observation that part of the slot attention processing can be seen as one step of the Sinkhorn algorithm~\citep{sinkhorn1964relationship}, and they propose a method to construct slot attention to be exclusive multiset-\textcolor{red}{equivariant} without sacrificing computational efficiency.

\cite{chang2022object} argue that slot attention suffers from the issue of unstable backward gradient computation because it performs sequential slot updates during the forward pass.
Specifically, as training progresses, the spectral norm of the model increases.
Therefore, they experimentally demonstrated that by replacing the iterative slot updates with implicit function differentiation at the fixed points, they can achieve stable backward computation without the need for ad hoc learning stabilization techniques, including gradient clipping~\citep{pascanu2013difficulty,zhang2019gradient}, learning rate warmup~\citep{goyal2017accurate,liu2019variance} or adjustment of the number of iterative slot updates.
\cite{kipf2021conditional} point out that initializing slots through random sampling from learnable Gaussian distributions during the sequential updating process may lead to instability in behavior.
Based on this, \cite{jia2022improving} propose stabilizing the behavior by initializing slots with fixed learnable queries, namely BO-QSA.
\cite{vikstrom2022learning} propose the ViT architecture and a corresponding loss function within the framework of Masked Auto Encoder to acquire object-centric representations.
Furthermore, DINOSAUR~\citep{seitzer2022bridging} learns object-centric representations based on higher-level semantics by optimizing the reconstruction loss with ViT features.
Slotformer~\citep{wu2022slotformer} is proposed as a transformer architecture that predicts slots autoregressively, and it has been reported to achieve high performance.
There are many other variants of slot attention along with their respective applications, such as SIMONe~\citep{kabra2021simone}, EfficientMORL~\citep{emami2021efficient}, OSRT~\citep{sajjadi2022object}, OCLOC~\citep{yuan2022unsupervised}, SAVi~\citep{kipf2021conditional} or SAVi++~\citep{elsayed2022savi++}.

% Perceiver
\textcolor{red}{
\subsection{Perceiver}
In the attention mechanism, increasing the length of the sequence is an important issue to be addressed.
Perceiver~\citep{jaegle2021perceiver} tackles this problem by introducing computationally efficient learnable queries.
Although this framework can guarantee computational efficiency with the expressive power like Set Transformer, a limitation is that it is only applicable to unimodal classification tasks.
Perceiver IO~\citep{jaegle2021perceiverio} handles multimodal output by adding the decoder to Perceiver.
Perceiver VL~\citep{tang2023perceiver} also allows Perceiver-based architectures to be applied to vision and language tasks.
In addition, Framingo~\citep{alayrac2022flamingo}, one of the foundation models for various vision and language downstream tasks, employs the iterative latent cross-attention proposed by Perceiver in its architecture.
}

%==================================================
% Tasks of approximation set functions
%==================================================
\section{Tasks of approximating set functions}
\label{sec:tasks}
In this section, we organize the tasks addressed by neural networks that approximate set functions.
Figure~\ref{fig:taxonomy_set_nn} shows the taxonomy of approximating set functions.
\begin{figure}
    \centering
    \includegraphics[width=0.99\linewidth]{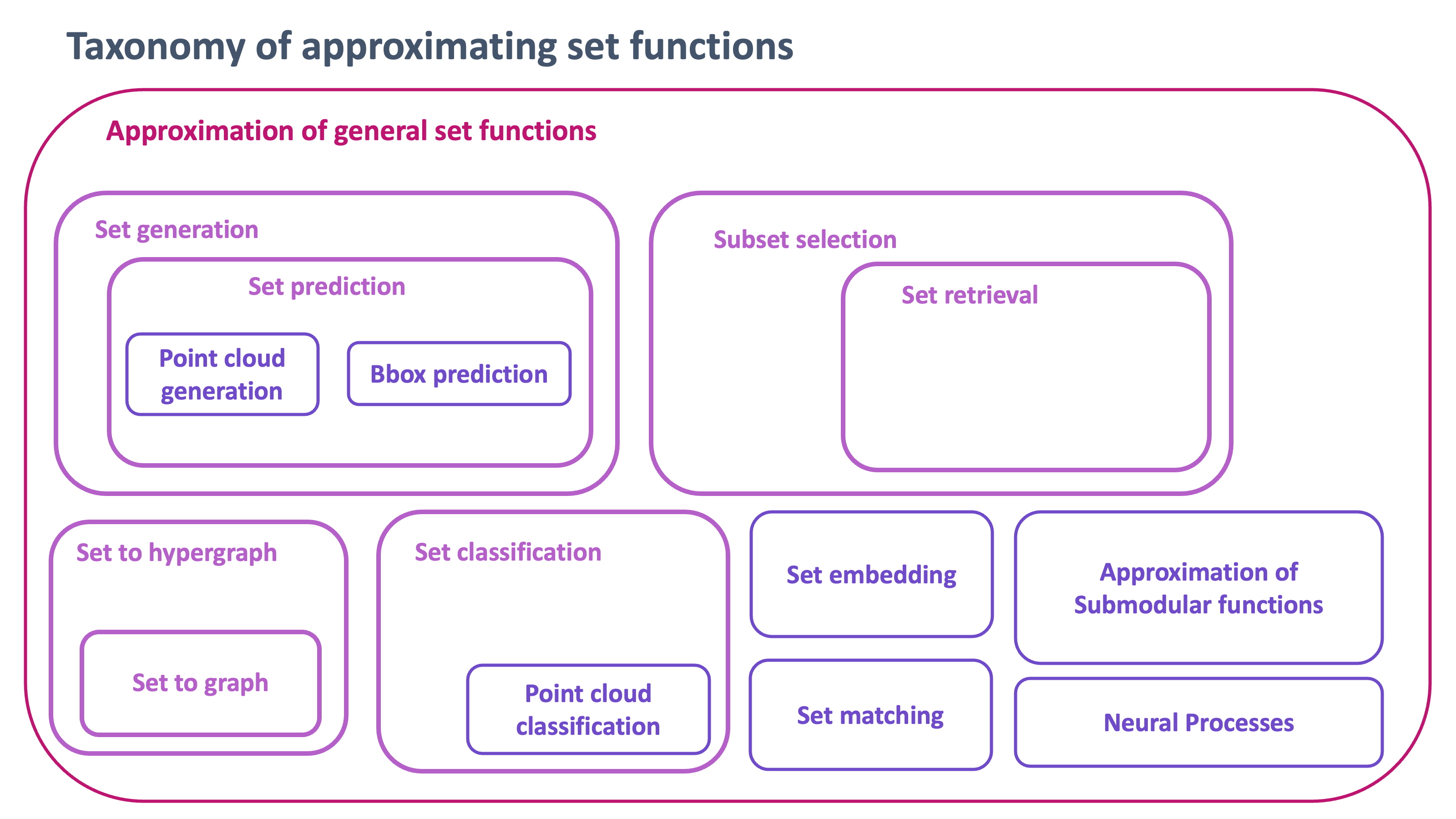}
    \caption{Taxonomy of approximating set functions.
    \textcolor{red}{Several tasks can be considered as special cases of other tasks. For example, set retrieval, which is a set version of image retrieval, can be regarded as a kind of subset selection that extracts a subset from a set.}}
    \label{fig:taxonomy_set_nn}
\end{figure}
\subsection{Point cloud processing}
Deep Sets, PointNet, and Set Transformer can be generalized in terms of the differences in the aggregation operations of elements within a set.
However, specific aggregation operations are also proposed when the input set consists of point clouds.
CurveNet~\citep{xiang2021walk} proposes to treat point clouds as undirected graphs and represent curves as walks within the graph, thereby aggregating the points.

\subsection{Set retrieval and subset selection}
There exists a set retrieval task that generalizes the image retrieval task~\citep{datta2008image,smeulders2000content,rui1999image} to sets.
The goal of the set retrieval system is to search and retrieve sets from the large pool of sets~\citep{zhong2018compact}.

\paragraph{Subset selection}

Subset selection is the task of selecting a subset of elements from a given set in a way that retains some meaningful criteria or properties.
\cite{ou2022learning} introduced the low-cost annotation method for subset selection and demonstrated its effectiveness.

SetNet~\citep{zhong2018compact} is an architecture designed for set retrieval, which uses NetVLAD layer~\citep{jin2021multi} instead of conventional pooling layers.
In this paper, the Celebrity Together dataset is proposed specifically for set retrieval.

\subsection{Set generation and prediction}
Methods for set prediction can be broadly categorized into the following two approaches:
\begin{itemize}
    \item distribution matching: approximates $P(\mathcal{Y}|\bm{x})$ for a set $\mathcal{Y}$ and an input vector $\bm{x}$;
    \item minimum assignment: calculates loss function between the assigned pairs.
\end{itemize}

\paragraph{Distribution matching}
Deep Set Prediction Networks (DSPN)~\citep{zhang2019deep} propose a model for predicting a set of vectors from another set of vectors.
The proposed decoder architecture leverages the fact that the gradients of the set functions with respect to the set are permutation-invariant, and it is effective for tasks such as predicting a set of bounding boxes for a single input image.
Also, iDSPN~\citep{zhang2021multiset} introduced the concept of exclusive multiset-\textcolor{red}{equivariance} to allow for the arbitrary ordering of output elements with respect to duplicate elements in the input set.
PointGlow~\citep{sun2020pointgrow} applies the flow-based generative model for point cloud generation.
\textcolor{red}{
With similar ideas, \citet{bilovs2021scalable} propose a continuous normalizing flow for sequential invariant vector data, and \citet{zwartsenberg2023conditional} propose Conditional Permutation Invariant Flows, which extend it to allow conditional generation.
}

\paragraph{Minimum assignment}
In the minimum assignment approach, there is freedom in choosing the distance function, but \textcolor{red}{the latent set
prediction} (LSP) framework~\citep{preechakul2021set} relaxes this and \textcolor{red}{provides convergence analysis}.
SetVAE~\citep{kim2021setvae} is the set generation model based on VAE~\citep{kingma2013auto} that takes into account exchangeability, variable-size sets, interactions between elements, and hierarchy.

\cite{carion2020end} consider object detection as a bounding box set prediction problem and propose an assignment-based method using a transformer, called Detection Transformer (DETR).
Inspired by this identification of object detection and set prediction, many studies have been conducted using similar strategies~\citep{hess2022object,carion2020end,ge2021ota,misra2021end}.
It has been reported that DETR can achieve SOTA performance, but its long training time is known to be a bottleneck.
\cite{sun2021rethinking} reconsidered the difficulty of DETR optimization and pointed out two causes of slow convergence: Hungarian loss and Transformer cross-attention mechanism.
They also proposed several methods to solve these problems and showed their effectiveness through experiments.

\cite{zhang2020set} pointed out that methods optimizing assignment-based set loss inadvertently restrict the learnable probability distribution during the loss function selection phase and assume implicitly that the generated target follows a unimodal distribution on the set space, and proposed techniques to address these limitations.

\subsection{Set matching}
The task of estimating the degree of matching between two sets is referred to as set matching.

Set matching can be categorized into two cases: the homogeneous case and the heterogeneous case. In the homogeneous case, both input sets consist of elements from the same category or type. On the other hand, in the heterogeneous case, the input sets contain elements from different categories or types. \cite{saito2020exchangeable} proposes a novel approach for the heterogeneous case, which has not been addressed before.
To solve set matching problems, it often relies on learning with negative sampling, and \citet{kimura2022generalization} provides theoretical analyses for such problem settings.
Furthermore, recent research has also reported the task-specific distribution shift in set matching tasks~\citep{10281385}.

\subsection{\textcolor{red}{Neural processes}}
The Neural Process family~\citep{garnelo2018neural,garnelo2018conditional,jha2022neural}, which is an approximation of probabilistic processes using neural networks, has been studied extensively.
\cite{garnelo2018conditional} first introduced the idea of conditional Neural Processes which model the conditional predictive distribution $p(f(T) | T, C)$, where $C$ is the labeled dataset and $T$ is the unlabeled dataset.
One of the necessary conditions for defining a probabilistic process is permutation invariance. In the case of CNPs, to fulfill this condition, the encoder-decoder parts employ the architecture of Deep Sets.
However, the output of CNPs consists of a pair of prediction mean and standard deviation, and it is not possible to sample functions like in actual probabilistic processes.
Neural Processes (NPs)~\citep{garnelo2018neural} enable function sampling by introducing latent variables into the framework of CNPs.

\cite{kim2018attentive} showed that NPs are prone to underfitting.
They argued that this problem can be solved by introducing an attention mechanism, and proposed Attentive Neural Processes.

\subsection{Approximating submodular functions}
Submodular set function~\citep{fujishige2005submodular,lovasz1983submodular,krause2014submodular} is one of the important classes of set functions, and there exist many applications.
First, we introduce the definitions and known results for the submodular set function.

\begin{definition}
    \textcolor{red}{A function $f\colon 2^\mathcal{V}\to\mathbb{R}$} is called submodular if it satisfies
    \begin{align}
        f(\mathcal{S}) + f(\mathcal{T}) \geq f(\mathcal{S} \cup \mathcal{T}) + f(\mathcal{S} \cap \mathcal{T}),
    \end{align}
    for any $\mathcal{S}, \mathcal{T} \subseteq \mathcal{V}$.
\end{definition}
\begin{definition}
    \textcolor{red}{A function $f\colon 2^\mathcal{V}\to\mathbb{R}$} is supermodular if $-f$ is submodular.
\end{definition}
\begin{definition}
    A function that is both submodular and supermodular is called modular.
\end{definition}
If \textcolor{red}{$f\colon 2^\mathcal{V}\to\mathbb{R}$} is a modular function, we have
\begin{align}
    f(\mathcal{S}) + f(\mathcal{T}) = f(\mathcal{S} \cap \mathcal{T}) + f(\mathcal{S} \cup \mathcal{T}),
\end{align}
for any $\mathcal{S}, \mathcal{T} \subseteq \mathcal{V}$.
\begin{proposition}[\citep{fujishige2005submodular}]
    If \textcolor{red}{$f\colon 2^\mathcal{V}\to\mathbb{R}$} is modular, it may be written as
    \begin{align}
        f(\mathcal{S}) &= f(\emptyset) + \sum_{\bm{s} \in \mathcal{S}}\left(f(\{\bm{s}\}) - f(\emptyset)\right) \\
        &= c + \sum_{\bm{s} \in \mathcal{S}} \phi(\bm{s}), \label{eq:modular_function}
    \end{align}
    for some \textcolor{red}{$\phi\colon\mathcal{V}\to\mathbb{R}$}.
\end{proposition}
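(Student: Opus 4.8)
The plan is to proceed by induction on the cardinality $|\mathcal{S}|$, using the modular equality $f(\mathcal{S}) + f(\mathcal{T}) = f(\mathcal{S} \cap \mathcal{T}) + f(\mathcal{S} \cup \mathcal{T})$ (valid for all $\mathcal{S}, \mathcal{T} \subseteq \mathcal{V}$, since $f$ is both sub- and supermodular) as the engine that lets us strip one element off of $\mathcal{S}$ at a time. The target claim is that the additive formula
\begin{align}
    f(\mathcal{S}) = f(\emptyset) + \sum_{\bm{s} \in \mathcal{S}}\bigl(f(\{\bm{s}\}) - f(\emptyset)\bigr) \label{eq:mod_target}
\end{align}
holds for every $\mathcal{S} \subseteq \mathcal{V}$, after which we simply read off $c \coloneqq f(\emptyset)$ and $\phi(\bm{s}) \coloneqq f(\{\bm{s}\}) - f(\emptyset)$ to recover the stated form.

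First I would dispose of the base case: when $\mathcal{S} = \emptyset$ the sum in \eqref{eq:mod_target} is empty, so the right-hand side collapses to $f(\emptyset)$ and the identity is trivial. For the inductive step, assume \eqref{eq:mod_target} holds for all subsets of size $k$, and let $\mathcal{S}$ have size $k+1$. The key move is to select any element $\bm{s}_0 \in \mathcal{S}$, set $\mathcal{S}' \coloneqq \mathcal{S} \setminus \{\bm{s}_0\}$, and apply the modular equality to the pair $\mathcal{S}'$ and $\{\bm{s}_0\}$. Because $\bm{s}_0 \notin \mathcal{S}'$ these two sets are disjoint, so their intersection is $\emptyset$ and their union is $\mathcal{S}$, which yields
\begin{align}
    f(\mathcal{S}) = f(\mathcal{S}') + f(\{\bm{s}_0\}) - f(\emptyset). \label{eq:mod_peel}
\end{align}

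Substituting the inductive hypothesis for $f(\mathcal{S}')$ into \eqref{eq:mod_peel} and absorbing the leftover term $f(\{\bm{s}_0\}) - f(\emptyset)$ into the sum turns the sum over $\mathcal{S}'$ into a sum over $\mathcal{S}$, giving exactly \eqref{eq:mod_target} for $\mathcal{S}$ and closing the induction. I do not expect a genuine obstacle here; the only point requiring care is the disjointness observation that makes $\mathcal{S}' \cap \{\bm{s}_0\} = \emptyset$, since that is precisely what forces the $f(\emptyset)$ correction term to appear and keeps the telescoping clean. A secondary bookkeeping subtlety is that the decomposition must be shown independent of the choice of $\bm{s}_0$, but this is automatic because \eqref{eq:mod_target} is manifestly symmetric in the elements of $\mathcal{S}$, so any peeling order produces the same result.
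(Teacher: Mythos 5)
Your proof is correct. Note that the paper offers no proof of this proposition at all---it is stated as a known result with a citation to Fujishige (2005)---so there is nothing internal to compare against; your induction (peel off $\bm{s}_0$, apply the modular identity to the disjoint pair $\mathcal{S}\setminus\{\bm{s}_0\}$ and $\{\bm{s}_0\}$ so that the intersection term contributes exactly $f(\emptyset)$, then telescope) is precisely the standard textbook argument for this fact, and every step checks out. One remark: your closing concern about independence of the choice of $\bm{s}_0$ is not a gap that needs closing---the inductive step derives the target identity, which makes no reference to any ordering or choice, for every set of size $k+1$, so no consistency check across peeling orders is required in the first place.
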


From the above definitions and results, we have the following proposition for permutation-invariant neural networks.
\begin{proposition}
    \textcolor{red}{We assume that $\rho(\bm{s}) = \bm{s}$ for $\bm{s} \in \mathcal{S}$.}
    Then, for permutation-invariant neural networks, we have
    \begin{itemize}
        \item[i)] \textcolor{red}{The architecture of Deep Sets takes the form of} the modular function;
        \item[ii)] \textcolor{red}{The architecture of PointNet takes the form of} the submodular function,
    \end{itemize}
    for all $\bm{x} \in \mathcal{V}$.
\end{proposition}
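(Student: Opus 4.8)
The plan is to substitute the assumed identity response function $\rho$ into each architecture and then verify, directly from the defining relations in the respective definitions, that the resulting set function is modular (for Deep Sets) and submodular (for PointNet). Throughout I would take $\phi\colon\mathcal{V}\to\mathbb{R}$ to be scalar-valued so that both pooled functions map into $\mathbb{R}$, matching the codomain required by the definitions of modular and submodular functions.

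For part i), setting $\rho$ to the identity in Eq.~\ref{eq:deep_sets} collapses Deep Sets to $f(\mathcal{S}) = \sum_{\bm{s}\in\mathcal{S}}\phi(\bm{s})$, which is exactly the shape of Eq.~\ref{eq:modular_function} with constant $c=0$. To confirm this is modular I would verify the equality $f(\mathcal{S})+f(\mathcal{T}) = f(\mathcal{S}\cup\mathcal{T})+f(\mathcal{S}\cap\mathcal{T})$ by inclusion-exclusion bookkeeping on the summation: an element lying in both $\mathcal{S}$ and $\mathcal{T}$ contributes $\phi$ twice to each side, while an element in exactly one of them contributes $\phi$ once to each side, so the two sides agree term by term. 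This establishes modularity.

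For part ii), setting $\rho$ to the identity in Eq.~\ref{eq:pointnet} collapses PointNet to $f(\mathcal{S}) = \max_{\bm{s}\in\mathcal{S}}\phi(\bm{s})$. The key structural facts are that the maximum distributes over unions, $f(\mathcal{S}\cup\mathcal{T}) = \max\{f(\mathcal{S}), f(\mathcal{T})\}$, and that it is monotone under inclusion, so that $\mathcal{S}\cap\mathcal{T}\subseteq\mathcal{S}$ and $\mathcal{S}\cap\mathcal{T}\subseteq\mathcal{T}$ force $f(\mathcal{S}\cap\mathcal{T})\le\min\{f(\mathcal{S}),f(\mathcal{T})\}$. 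Combining these with the elementary identity $\max\{a,b\}+\min\{a,b\}=a+b$ gives $f(\mathcal{S}\cup\mathcal{T})+f(\mathcal{S}\cap\mathcal{T}) \le \max\{f(\mathcal{S}),f(\mathcal{T})\}+\min\{f(\mathcal{S}),f(\mathcal{T})\} = f(\mathcal{S})+f(\mathcal{T})$, which is precisely the submodularity inequality.

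The set-counting and the $\max$/$\min$ manipulation are both straightforward; the parts that need care are the boundary conventions and the scalar assumption. Specifically, the max-pooling argument needs a convention for $\mathcal{S}\cap\mathcal{T}=\emptyset$ (taking $\max_{\emptyset}\phi = -\infty$, or equivalently treating $f(\emptyset)$ as a lower bound, keeps the inequality valid), and one must make explicit that $\phi$ is real-valued, since otherwise $\max$ over vectors, and hence the notion of submodularity itself, is ill-defined. I expect this bookkeeping around the empty set and the codomain of $\phi$ to be the only genuine obstacle; the algebraic content of both parts is immediate once the identity $\rho$ reduces each architecture to its pooling operation.
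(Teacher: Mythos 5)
Your proof is correct and follows essentially the same route as the paper's: part i) identifies the identity-$\rho$ Deep Sets with the modular form $c + \sum_{\bm{s}\in\mathcal{S}}\phi(\bm{s})$ at $c=0$, and part ii) is exactly the paper's argument that $\max$ over the union equals $\max\{f(\mathcal{S}),f(\mathcal{T})\}$, $\max$ over the intersection is at most $\min\{f(\mathcal{S}),f(\mathcal{T})\}$, and $\max + \min = $ sum. Your added bookkeeping (verifying the modular equality by inclusion--exclusion, the $\max_{\emptyset}\phi=-\infty$ convention, and the explicit scalar-valued $\phi$) addresses edge cases the paper silently glosses over, but does not change the substance of the argument.
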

\begin{proof}
\textcolor{red}{
    i) Let $c = 0$ and $\rho(\bm{s}) = \bm{s}$ in Eq.~\ref{eq:modular_function}, we can confirm the statement.
    }
    
    \textcolor{red}{
    ii) For $\mathcal{S}, \mathcal{T} \subseteq \mathcal{V}$, let
    \begin{align*}
        a \coloneqq \max_{\bm{s}\in\mathcal{S}}\phi(\bm{s}),\quad
        b \coloneqq \max_{\bm{s}\in\mathcal{T}}\phi(\bm{s}),\quad
        c \coloneqq \max_{\bm{s}\in\mathcal{S}\cap\mathcal{T}}\phi(\bm{s}),\quad
        d \coloneqq \max_{\bm{s}\in\mathcal{S}\cup\mathcal{T}}\phi(\bm{s}).
    \end{align*}
    Then we need to show
    \begin{align*}
        a + b \geq c + d.
    \end{align*}
    Here, we can see that $a, b \geq c$ and thus $\min(a, b) \geq c$.
    Similarly, we have $\max(a, b) = d$, and
    \begin{align*}
        a + b &= \min(a, b) + \max(a, b) \\
        &\geq c + d.
    \end{align*}
    Then, we have the proof.
    }
\end{proof}

Deep Submodular Functions~\citep{dolhansky2016deep} is one of the seminal works on learning-based submodular functions.
Furthermore, \citet{tschiatschek2016learning} proposes a probability model where the energy function is represented by a parametric submodular function.

\subsection{Person re-identification}
Person re-identification~\citep{zheng2015scalable,liao2015person,zheng2017unlabeled} can be viewed as an approximation problem of set functions since it involves selecting the target element from a set of person images as input.
The HAP2S loss~\citep{yu2018hard} is proposed with the aim of efficient point-to-set metric learning for the Person re-identification task.

As a variant task of person re-identification, there is also group re-identification~\citep{wei2009associating,zheng2014group,lisanti2017group}, which involves identifying groups of individuals in images or videos.
\cite{lisanti2017group} introduced a visual descriptor that achieves invariance both to the number of subjects and to their displacement within the image in this task.
\textcolor{red}{\citet{xiong2023similarity} addresses GroupReID between RGB and IR images.}

\subsection{Other tasks}
\label{subsec:other_tasks}

\paragraph{Metric learning}
In traditional video-based action recognition methods, task recognition often involves extracting subtasks and performing temporal alignment. However, \cite{wang2022hybrid} suggests that, in some cases, the order of subtasks may not be crucial, and there could be alternative approaches that can achieve similar results.
To perform distance learning between the query video and the contrastive videos, a set matching metric is introduced~\citep{wang2022hybrid}.

\cite{sinha2023deepemd} propose the permutation-invariant transformer-based model that can estimate the Earth Mover's Distance in quadratic order with respect to the number of elements.
They report the effectiveness of the Sinkhorn algorithm in cases where there are constraints on computational costs, as increasing the number of iterations in the Sinkhorn algorithm improves accuracy compared to their proposed algorithm.
\cite{cuturi2013sinkhorn} \textcolor{red}{proposes} a parallelizable Sinkhorn algorithm operating on multiple pairs of histograms that function within the GPU environment.

\paragraph{XAI}
Explainable Artificial Intelligence (XAI) aims to explain the behavior of machine learning models~\citep{gunning2019xai,tjoa2020survey,kimura2020new}.
Several studies are exploring the combination of approximating set functions and XAI techniques.
\cite{cotter2018interpretable} and \cite{cotter2019shape} introduce an architecture for approximating interpretable set functions that maintain performance comparable to Deep Sets.

\textcolor{red}{
\paragraph{Federated learning}
Federated learning~\citep{kairouz2021advances,li2021survey,konevcny2016federated} is a framework of machine learning in which data is distributed without aggregation, and its use is expected to improve the efficiency of data processing.
Federated Learning is motivated by a general interest in issues such as data privacy, data minimization, and data access rights.
\citet{amosy2024late} proposed the application of Deep Sets to federated learning and reported its usefulness.
}

%==================================================
% Theoretical analysis
%==================================================
\section{Theoretical analysis of approximating set functions}
\textcolor{red}{
In this section we review several known theoretical results on the approximation of set functions.
}
\label{sec:theoretical_analysis}
\subsection{Sum-decomposability and Janossy pooling}
\label{sec:sum_decomposability}

\begin{figure}
    \centering
    \includegraphics[width=0.75\linewidth]{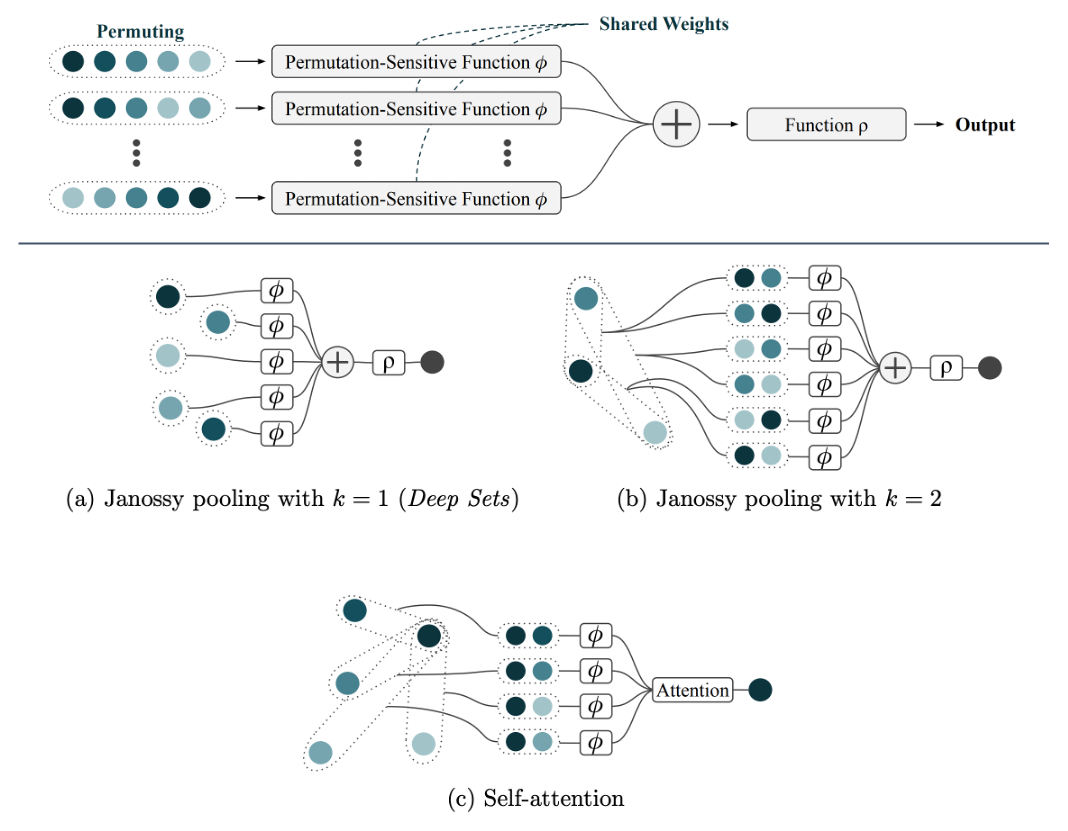}
    \caption{Top panel: The Janossy pooling framework with the same permutation-sensitive network to each possible permutation of the input set, from Figure 1 of \cite{wagstaff2022universal}.
    Bottom panel: Different versions and variants of Janossy pooling, from Figure 2 of \cite{wagstaff2022universal}.
    \textcolor{red}{These figures are cited to highlight that Janossy pooling is a generalization of Deep Sets and self-attention.}
    }
    \label{fig:janossy_pooling}
\end{figure}

\begin{definition}
    Let $(\rho, \phi)$ be a sum-decomposition.
    Write $\mathcal{Z}$ for the domain of $\rho$ (which is also the codomain of $\phi$, and the space in which the summation happens in Eq.~\ref{eq:sum_decomposition}.
    We refer to $\mathcal{Z}$ as the latent space of the sum-decomposition $(\rho, \phi)$.
\end{definition}
\begin{definition}
    Given a space $\mathcal{Z}$, we say that $f$ is sum-decomposable via $\mathcal{Z}$ if $f$ has a sum-decomposition whose latent space is $\mathcal{Z}$.
\end{definition}
\begin{definition}
    \textcolor{red}{The function $f$ is said to be continuously sum-decomposable} when there exists a sum-decomposition $(\rho, \phi)$ of $f$ such that both $\rho$ and $\phi$ are continuous.
    $(\phi, \rho)$ is then a continuous sum-decomposition of $f$.
\end{definition}

% Achieving permutation invariance poses a fundamental challenge in the design of models, as it requires finding the right trade-off between expressive power and maintaining the desired property.
The objective is to construct models that can effectively represent diverse functions while ensuring that the output remains unchanged when the input elements are permuted.
This delicate equilibrium guarantees that the models can capture the inherent complexities of the problem at hand while upholding the crucial aspect of permutation invariance.

One unifying framework of methods that learns either strictly permutation-invariant functions or suitable approximations is Janossy pooling~\citep{murphy2018janossy}.
Janossy pooling is renowned for its remarkable expressiveness, and its universality can be readily illustrated. It is capable of representing any permutation-invariant function, making it a highly versatile framework. This exceptional property highlights the ability of Janossy pooling to capture intricate relationships and patterns within sets. With its flexibility and effectiveness, Janossy pooling serves as a valuable tool for modeling and analyzing permutation-invariant functions, offering a broad spectrum of applications across diverse domains.
\begin{definition}[Janossy pooling~\citep{murphy2018janossy}]
    For any set $\mathcal{S} \subseteq \mathcal{V}$ and its permutations $\Pi_{\mathcal{S}}$, Janossy pooling is defined as the aggregation of outputs of the permutation-sensitive function $\Phi(\mathcal{S})$ for all possible permutations:
    \begin{align}
        \hat{f}(\mathcal{S}) = \frac{1}{|\Pi_{\mathcal{S}}|}\sum_{\pi_{\mathcal{S}}\in\Pi_{\mathcal{S}}}\Phi(\pi_{\mathcal{S}}(\mathcal{S})).
    \end{align}
\end{definition}
We can also consider the post-process $\rho$ as
\begin{align}
    f(\mathcal{S}) = \rho\left(\hat{f}(\mathcal{S})\right),
\end{align}
and this is the form of sum-decomposable~\ref{def:sum_decomposable}, and permutation-invariant~\ref{def:permutation_invariant}.

It is obvious that the computational complexity of Janossy pooling scales at least linearly in the size of $\Pi_{\mathcal{S}}$, which is $|\mathcal{S}|!$.
To address this problem, the following strategies are discussed~\citep{murphy2018janossy}:
\begin{enumerate}
    \item[i)] sorting: considering only a single canonical permutation, which is obtained by sorting the inputs;
    \item[ii)] sampling: aggregating over a randomly-sampled subset of permutations;
    \item[iii)] restricting permutation to \textcolor{red}{$k$-subsets}: for some $k < |\mathcal{S}|$, let $\mathcal{S}_{\{k\}}$ denote the set of all \textcolor{red}{$k$-subsets} from $\mathcal{S}$, and
    \begin{align}
        \hat{f}(\mathcal{S}) = \frac{1}{\tau(|\mathcal{S}|, k)}\sum_{\mathcal{T}\in\mathcal{S}_{\{k\}}}\Phi(\mathcal{T}), \label{eq:restricted_janossy_pooling}
    \end{align}
    where $\tau(|\mathcal{S}|, k) = \frac{|\mathcal{S}|}{(|\mathcal{S}| - k)!}$.
\end{enumerate}
The computational complexity of Eq.~\ref{eq:restricted_janossy_pooling} is $\mathcal{O}(|\mathcal{S}|^k)$, and for sufficiently small $k$ this gives far fewer that $|\mathcal{S}|!$.
Note that the third strategy is the generalization of many practical models~\citep{zaheer2017deep,qi2017pointnet,qi2017pointnet++,lee2019set}.
Indeed, the case of $k=1$ is equivalent to Deep Sets~\cite{zaheer2017deep}, and many other current neural network architectures resemble the case of $k=2$ (see Fig.~\ref{fig:janossy_pooling}).
\textcolor{red}{
The function $\hat{f}(\mathcal{S})$ in Eq.~\ref{eq:restricted_janossy_pooling} is called $k$-ary Janossy pooling.
}
\begin{theorem}
    Let $f\colon \mathbb{R}^M \to \mathbb{R}$ be continuous and permutation invariant.
    Then $f$ has a continuous $k$-ary Janossy representation via $\mathbb{R}^M$ for any choice of $k$.
\end{theorem}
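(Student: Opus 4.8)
The plan is to read the argument $\bm{x}=(x_1,\dots,x_M)\in\mathbb{R}^M$ as a multiset of $M$ scalars, so that permutation invariance of $f$ is precisely invariance under the symmetric group $S_M$ acting by coordinate permutation, and then to exhibit an explicit latent encoding of dimension $M$ that separates multisets. The structural insight I would exploit is that the $k$-ary aggregate collapses to a constant multiple of the $1$-ary (Deep Sets) aggregate, so the problem reduces to the known $k=1$ universality \citep{zaheer2017deep,wagstaff2022universal}. Concretely, take the inner map $\phi\colon\mathbb{R}\to\mathbb{R}^M$, $\phi(x)=(x,x^2,\dots,x^M)$, and define the permutation-sensitive $k$-ary map $\Phi(t_1,\dots,t_k)\coloneqq\sum_{j=1}^{k}\phi(t_j)$, which is continuous and symmetric, hence well defined on the members of $\mathcal{S}_{\{k\}}$.

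First I would carry out the counting step. Summing $\Phi$ over $\mathcal{S}_{\{k\}}$ and dividing by $\tau(|\mathcal{S}|,k)$, every coordinate $x_\ell$ is visited the same number of times, so the aggregate collapses to a fixed multiple of the power sums:
\begin{align}
\hat{f}(\mathcal{S})=\frac{1}{\tau(M,k)}\sum_{\mathcal{T}\in\mathcal{S}_{\{k\}}}\Phi(\mathcal{T})=\frac{k}{M}\left(p_1,\dots,p_M\right),\qquad p_j=\sum_{\ell=1}^{M}x_\ell^{\,j}.
\end{align}
The factor $k/M$ is independent of $\bm{x}$, which is exactly the collapse to the $k=1$ aggregate; the entire content of this step is that each element occurs in a fixed number of the aggregated tuples.

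Next I would establish injectivity of the encoding on multisets. Since $k/M\neq 0$, the map $\hat{f}$ separates multisets if and only if the power-sum map $\bm{x}\mapsto(p_1,\dots,p_M)$ does. By Newton's identities the power sums determine the elementary symmetric polynomials $(e_1,\dots,e_M)$, and by Vieta's formulas these are the coefficients of the monic degree-$M$ polynomial whose roots are exactly $x_1,\dots,x_M$; hence $\hat{f}$ is constant on each $S_M$-orbit and takes distinct values on distinct orbits. Because $f$ is $S_M$-invariant, it is constant on the fibers of $\hat{f}$, so there is a well-defined $\rho_0$ on the image of $\hat{f}$ with $f=\rho_0\circ\hat{f}$. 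This already has the form of a $k$-ary Janossy representation with latent space $\mathbb{R}^M$, valid for every $k$, once $\rho_0$ is shown to extend continuously.

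The hard part will be the continuity of the outer map $\rho$. Injectivity alone does not make the inverse continuous, so I would show that the power-sum map is proper: since the $e_j$ bound the magnitude of the roots (a Cauchy-type root bound), the preimage of any compact set is bounded and closed, hence compact. A continuous, proper injection into a locally compact Hausdorff space is a closed embedding, so $\hat{f}$ is a homeomorphism onto its closed image; therefore $\rho_0$, the composition of $f$ with the inverse of $\hat{f}$ on that image, is continuous. The Tietze extension theorem then extends $\rho_0$ to a continuous $\rho\colon\mathbb{R}^M\to\mathbb{R}$, and assembling the pieces gives continuous $\Phi$ and $\rho$ realizing $f$ as a continuous $k$-ary Janossy representation via $\mathbb{R}^M$ for every $k$.
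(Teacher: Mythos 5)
Your proposal is correct, and it supplies far more than the paper does: the paper's entire ``proof'' of this theorem is the single remark that it follows ``by examining the definition of Janossy representations,'' i.e.\ an appeal to the fact that a $1$-ary (Deep Sets) sum-decomposition can be promoted to a $k$-ary Janossy representation, with the $1$-ary universality itself imported from \citet{wagstaff2022universal} and the Deep Sets theorems quoted later in the same section. You make both halves explicit. The promotion half is your counting step: with the symmetric choice $\Phi(t_1,\dots,t_k)=\sum_{j=1}^{k}\phi(t_j)$, every index of the input occurs in the same number of aggregated tuples, so the $k$-ary pool equals a fixed nonzero multiple of $\sum_{\ell}\phi(x_\ell)$; your constant $k/M$ is exactly right both for ordered $k$-tuples normalized by $\tau(M,k)=M!/(M-k)!$ and for unordered $k$-subsets normalized by $\binom{M}{k}$, so nothing depends on the paper's ambiguous convention. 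The universality half you prove from scratch with the moment map $\phi(x)=(x,x^2,\dots,x^M)$, Newton's identities plus Vieta for injectivity on multisets, and properness, closed embedding, and Tietze extension for continuity of $\rho$. Two points of hygiene, neither fatal: first, $\hat f$ is not injective on $\mathbb{R}^M$ but only on $S_M$-orbits, so the step ``a continuous proper injection into a locally compact Hausdorff space is a closed embedding'' must be applied to the induced map on the quotient $\mathbb{R}^M/S_M$ (which is locally compact Hausdorff, and the induced map inherits properness), or replaced by a direct sequential argument: if $\hat f(\bm{x}_n)\to\hat f(\bm{x})$, properness traps $(\bm{x}_n)$ in a compact set, every accumulation point has the same power sums as $\bm{x}$ and is therefore a permutation of $\bm{x}$, hence $f(\bm{x}_n)\to f(\bm{x})$, which is precisely continuity of $\rho_0$ on the image. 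Second, properness needs no Cauchy-type root bound: the second latent coordinate is $p_2=\|\bm{x}\|_2^2$, so a bounded image already forces a bounded preimage. As for what each route buys: the paper's (and the cited source's) reduction is modular and one line long once the $1$-ary theorem is available, whereas your argument is self-contained, pins down the latent dimension $M$ concretely, and makes visible exactly where continuity and permutation invariance of $f$ are used.
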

\begin{theorem}
    Let $f\colon \mathbb{R}^M \to \mathbb{R}$ be continuous and permutation invariant.
    Then $f$ has a continuous $M$-ary Janossy representation via $\mathbb{R}$.
\end{theorem}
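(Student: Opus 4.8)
The plan is to exploit the fact that in the $M$-ary (i.e.\ full-arity) Janossy pooling of a size-$M$ set the permutation-sensitive function is permitted to read all $M$ elements at once, so it may simply be taken to be $f$ itself. Identifying $\mathbf{x}\in\R^M$ with a tuple $\mathcal{S}$ of $M$ real numbers, I would choose the permutation-sensitive map $\Phi\colon\R^M\to\R$ to be $f$ and the post-processing map $\rho\colon\R\to\R$ to be the identity. Because the codomain of $\Phi$ is $\R$, the latent space of the resulting representation is exactly $\R$, as the statement requires.

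First I would unwind the definition of the $M$-ary representation for a set of cardinality $M$: since $k=M=|\mathcal{S}|$ the only $M$-subset is the whole tuple and $\tau(|\mathcal{S}|,|\mathcal{S}|)=|\mathcal{S}|!=|\Pi_{\mathcal{S}}|$, so the pooling reduces to an average over all reorderings,
\[
    \hat{f}(\mathcal{S}) = \frac{1}{|\Pi_{\mathcal{S}}|}\sum_{\pi_{\mathcal{S}}\in\Pi_{\mathcal{S}}}\Phi(\pi_{\mathcal{S}}(\mathcal{S})).
\]
Substituting $\Phi=f$ and invoking Definition~\ref{def:permutation_invariant}, each summand equals $f(\mathcal{S})$, the $|\Pi_{\mathcal{S}}|$ identical terms collapse against the normalization, and one obtains $\hat{f}(\mathcal{S})=f(\mathcal{S})$. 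Applying $\rho=\mathrm{id}_{\R}$ then reproduces $f$ exactly. The remaining bookkeeping is that continuity comes for free: $\Phi=f$ is continuous by hypothesis and $\rho=\mathrm{id}_{\R}$ is trivially continuous, so the representation is continuous.

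I do not expect a genuine analytic obstacle; the only ``hard part'' is conceptual, namely recognizing that the Janossy framework merely declines to force $\Phi$ to be invariant but never forbids it from being invariant. At full arity this lets $\Phi$ absorb all of $f$, no information is discarded by the pooling, and the latent space may shrink all the way to $\R$. This is in sharp contrast with the low-arity regime of the preceding theorem (and with the $k=1$ Deep Sets case), where $\Phi$ sees only a few coordinates at a time, the aggregation genuinely loses information, and the latent dimension $M$ is needed to keep the aggregated embedding injective on multisets of $M$ reals, as achieved for instance by the power-sum map $s\mapsto(s,s^2,\dots,s^M)$. The single point deserving care is to check that the normalization $1/|\Pi_{\mathcal{S}}|$ is the one induced by $\tau(|\mathcal{S}|,k)$ at $k=|\mathcal{S}|$, which it is.
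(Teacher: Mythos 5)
Your proof is correct and is exactly the argument the paper intends: the paper's entire justification is that the result follows "by examining the definition of Janossy representations," which is precisely what you do by taking $\Phi = f$, $\rho = \mathrm{id}_{\mathbb{R}}$, and letting permutation invariance collapse the average over all $M!$ orderings back to $f$. Your additional care about the normalization constant is sensible (the paper's $\tau$ has an apparent typo, missing a factorial), but it is immaterial since any constant can be absorbed into a rescaled, still-continuous $\rho$.
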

\textcolor{red}{
The proof of the above theorem is given by examining the definition of Janossy representations.
}

\subsection{Expressive power of Deep Sets and PointNet}
Recall that the network architectures of PointNet $f_{\mathrm{PN}}$ and Deep Sets $f_{\mathrm{DS}}$ are given as
\begin{align}
    f_{\mathrm{PN}} &= \rho\left(\max_{\bm{s} \in \mathcal{S}}\phi(\bm{s})\right), \\
    f_{\mathrm{DS}} &= \rho\left(\sum_{\bm{s} \in \mathcal{S}} \phi(\bm{s})\right).
\end{align}
In addition, we can consider the normalized version of Deep Sets $f_{\mathrm{N-DS}}$ as
\begin{align}
    f_{\mathrm{N\text{-} DS}} &= \rho\left(\frac{1}{|\mathcal{S}|}\sum_{\bm{s} \in \mathcal{S}} \phi(\bm{s})\right).
\end{align}
\cite{bueno2021representation} provides the comparison of representation power and universal approximation theorems of PointNet and Deep Sets.
Briefly, 
\begin{itemize}
    \item PointNet (normalized Deep Sets) has the capability to uniformly approximate functions that exhibit uniform continuity in relation to the Hausdorff (Wasserstein) metric.
    \item When input sets are allowed to be of arbitrary size, only constant functions can be uniformly approximated by both PointNet and normalized Deep Sets simultaneously.
    \item Even when the cardinality is fixed to a size of $k$, there exists a significant disparity in the approximation capabilities. Specifically, PointNet is unable to uniformly approximate averages of continuous functions over sets, such as the center-of-mass or higher moments, for $k \geq 3$. Furthermore, an explicit lower bound on the error for the learnability of these functions by PointNet is established.
\end{itemize}

% \subsection{Sufficient and necessary conditions for Deep Sets to be universal}
In the following, we give the reproduction of the key statements of Deep Sets~\citep{zaheer2017deep}.
\begin{theorem}
    \label{thm:sum_decomposable}
    % Let $f\colon 2^{\mathcal{V}} \to \mathbb{R}$ where $\mathcal{V}$ is countable.
    % Then, $f$ is sum-decomposable.
    \textcolor{red}{
    The function $f\colon 2^{\mathcal{V}} \to \mathbb{R}$ is sum-decomposable if and only if $f$ is a set function.
    }
\end{theorem}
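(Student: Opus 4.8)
The plan is to prove the two implications separately, noting that the forward direction is immediate from the definitions while the reverse direction carries the substantive content and hinges crucially on the finiteness of the ground set $\mathcal{V}$.

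First I would dispatch the easy direction. Suppose $f$ is sum-decomposable, so that $f(\mathcal{S}) = \rho\left(\sum_{\bm{s} \in \mathcal{S}}\phi(\bm{s})\right)$ for some $\rho,\phi$ as in Definition~\ref{def:sum_decomposable}. The inner summation ranges over the elements of $\mathcal{S}$ and is manifestly insensitive to any reordering of those elements, so $f(\mathcal{S}) = f(\pi_{\mathcal{S}}\mathcal{S})$ for every permutation $\pi_{\mathcal{S}}$. Hence $f$ is permutation invariant in the sense of Definition~\ref{def:permutation_invariant}, and since it is defined on $2^{\mathcal{V}}$ with values in $\mathbb{R}$, it is a set function.

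The reverse direction is where the work lies. Given an arbitrary set function $f\colon 2^{\mathcal{V}} \to \mathbb{R}$, I would construct an explicit sum-decomposition. Writing $N \coloneqq |\mathcal{V}|$ and recalling $\mathcal{V} = \{1,\dots,N\}$, I set $\phi(\bm{s}_i) \coloneqq 2^{-i}$, so that $\Phi(\mathcal{S}) = \sum_{\bm{s}_i \in \mathcal{S}} 2^{-i}$. The key claim is that this encoding map $\Phi$ is injective on $2^{\mathcal{V}}$: each value $\Phi(\mathcal{S})$ is the finite binary fraction whose $i$-th bit equals $1$ exactly when $i \in \mathcal{S}$, and uniqueness of binary representations guarantees that distinct subsets produce distinct reals. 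With injectivity in hand, the image $\{\Phi(\mathcal{S}) : \mathcal{S} \subseteq \mathcal{V}\}$ consists of $2^{N}$ distinct points, so I may define $\rho$ on these points by $\rho(\Phi(\mathcal{S})) \coloneqq f(\mathcal{S})$ and extend it arbitrarily elsewhere. This assignment is well-defined precisely because $\Phi$ is injective, and it yields $f(\mathcal{S}) = \rho(\Phi(\mathcal{S})) = \rho\left(\sum_{\bm{s} \in \mathcal{S}}\phi(\bm{s})\right)$, exhibiting $f$ as sum-decomposable.

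The main obstacle is the injectivity of the latent encoding $\Phi$; everything else is bookkeeping. The choice $\phi(\bm{s}_i) = 2^{-i}$ reduces this obstacle to uniqueness of binary expansions, but I would emphasize that finiteness of $\mathcal{V}$ is essential: it is what permits a single scalar latent coordinate to separate all $2^{N}$ subsets. For infinite or continuous ground sets the analogous claim fails absent continuity hypotheses and a sufficiently large latent space, so the clean \emph{if and only if} asserted here is genuinely a finite-domain phenomenon.
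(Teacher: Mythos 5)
Your proof is correct and takes essentially the same route as the paper: an injective scalar encoding $\Phi(\mathcal{S})=\sum_{\bm{s}_i\in\mathcal{S}}2^{-i}$ followed by $\rho = f\circ\Phi^{-1}$ on the image (extended arbitrarily), and in fact you make explicit both the choice of $\phi$ that the paper only posits (``if we can choose $\phi$ so that $\Phi$ is invertible'') and the trivial forward direction that the paper omits. One minor caveat on your closing commentary rather than the proof itself: for a \emph{countably infinite} ground set the non-continuous claim does not fail, since an encoding such as $\phi(s)=4^{-c(s)}$ keeps $\Phi$ injective even over infinite subsets (digits $0,1$ in base $4$ admit no carry ambiguity); it is continuity of the decomposition, not decomposability per se, that is the genuinely dimension- and cardinality-sensitive issue.
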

\begin{proof}
    \textcolor{red}{Let $\Phi(\mathcal{S}) \coloneqq \sum_{\bm{s} \in \mathcal{S}}\phi(\bm{s})$.}
    Since $\mathcal{V}$ is countable, each $s \in \mathcal{V}$ can be mapped to a unique element in $\mathbb{N}$ by a bijective function $c\colon \mathcal{V}\to\mathbb{N}$.
    If we can choose $\phi$ so that $\Phi$ is invertible, then we can write $\rho = f\circ\Phi^{-1}$, and $f = \rho \circ \Phi$.
    Then $f$ is sum-decomposable via $\mathbb{R}$.
\end{proof}
\begin{theorem}
    Let $M\in\mathbb{N}$, and $f\colon [0,1]^M\to\mathbb{R}$ be a continuous permutation-invariant function.
    Then $f$ is continuously sum-decomposable via $\mathbb{R}^{M+1}$.
\end{theorem}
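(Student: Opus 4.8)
The plan is to exhibit an explicit continuous $\phi\colon[0,1]\to\mathbb{R}^{M+1}$ whose associated sum map $\Phi$ separates multisets, and then to factor $f$ through $\Phi$. First I would take the power-sum embedding $\phi(x) = (1, x, x^2, \ldots, x^M)$, so that $\Phi(\mathcal{S}) = \sum_{\bm{s}\in\mathcal{S}}\phi(\bm{s}) = (M, p_1, \ldots, p_M)$ collects the cardinality together with the first $M$ power sums $p_j = \sum_{i} x_i^j$; this uses exactly $M+1$ coordinates, matching the claimed latent space. The key algebraic fact is that $p_1,\ldots,p_M$ determine the multiset $\{x_1,\ldots,x_M\}$: Newton's identities express the elementary symmetric polynomials $e_1,\ldots,e_M$ as polynomials in $p_1,\ldots,p_M$, and $e_1,\ldots,e_M$ are precisely the coefficients of $\prod_{i}(z-x_i)$, whose roots recover the $x_i$ up to order. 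Hence $\Phi$ is injective on $[0,1]^M$ modulo permutation.

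Second, I would upgrade this set-theoretic injectivity to a continuous inverse. Because $\Phi$ is itself permutation-invariant it descends to a map $\bar\Phi$ on the quotient $[0,1]^M/S_M$ of multisets (equivalently, on sorted tuples), which is compact; $\bar\Phi$ is then a continuous bijection onto its image $K = \Phi([0,1]^M)\subseteq\mathbb{R}^{M+1}$. Since $K$ is Hausdorff and the domain is compact, a continuous bijection is automatically a homeomorphism, so $\bar\Phi^{-1}\colon K \to [0,1]^M/S_M$ is continuous.

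Third, since $f$ is continuous and permutation-invariant it likewise factors through the quotient as a continuous $\bar f$, and I would define $\rho_0 \coloneqq \bar f \circ \bar\Phi^{-1}\colon K \to \mathbb{R}$, which is continuous and satisfies $f(\mathcal{S}) = \rho_0(\Phi(\mathcal{S}))$ for every $\mathcal{S}$. Finally, $K$ is a compact, hence closed, subset of $\mathbb{R}^{M+1}$, so by the Tietze extension theorem $\rho_0$ extends to a continuous $\rho\colon\mathbb{R}^{M+1}\to\mathbb{R}$; the pair $(\rho,\phi)$ is then a continuous sum-decomposition of $f$ with latent space $\mathbb{R}^{M+1}$, as required.

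The main obstacle I expect is the continuity of the inverse $\bar\Phi^{-1}$ rather than the algebra. Injectivity modulo permutation is standard symmetric-function theory, but one must be careful to pass to the quotient by $S_M$ so that $\bar\Phi$ is a genuine bijection before invoking compactness; only then does continuity of the inverse come for free. Once $K$ is identified as compact, both the homeomorphism argument and the Tietze extension are routine.
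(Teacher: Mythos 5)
Your proof is correct and complete. Be aware that the paper does not actually prove this theorem: it is stated as a reproduction of a known result from Deep Sets (Zaheer et al., 2017), and the only nearby proof (of the preceding theorem, for set functions on a countable ground set) follows the same skeleton of making $\Phi$ invertible and setting $\rho = f\circ\Phi^{-1}$, but ignores continuity entirely. Your argument is exactly the standard one from the literature the paper cites: the power-sum embedding $\phi(x)=(1,x,x^2,\dots,x^M)$, injectivity modulo $S_M$ via Newton's identities, continuity of $\bar\Phi^{-1}$ by compactness of the quotient $[0,1]^M/S_M$ together with the fact that a continuous bijection from a compact space onto a Hausdorff image is a homeomorphism, and a Tietze extension of $\rho$ off the compact image $K$. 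This is precisely the rigorous treatment given by Wagstaff et al.\ (2019; 2022), which repairs the continuity-of-the-inverse gap in the original Deep Sets sketch, so you have correctly located where the real difficulty lies. One cosmetic remark: for fixed input size $M$ the first coordinate of your $\Phi$ is constantly equal to $M$ and carries no information; it is only needed when set sizes vary, and dropping it yields the sharper result that latent dimension $M$ already suffices for fixed-size inputs, so your construction proves the stated $M+1$ bound a fortiori.
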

\begin{theorem}
    Deep Sets can represent any continuous permutation-invariant function of $M$ elements if the dimension of the latent space is at least $M+1$.
\end{theorem}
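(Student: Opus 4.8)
The plan is to recognize that this statement is, up to a routine dimension-lifting argument, a direct reformulation of the immediately preceding theorem. First I would unpack what it means for Deep Sets to \emph{represent} $f$: by Eq.~\ref{eq:deep_sets}, a Deep Sets model with latent space $\mathbb{R}^{L}$ is precisely a pair $(\rho,\phi)$ with $\phi\colon[0,1]\to\mathbb{R}^{L}$ and $\rho\colon\mathbb{R}^{L}\to\mathbb{R}$ continuous such that $f(\mathcal{S})=\rho\big(\sum_{\bm{s}\in\mathcal{S}}\phi(\bm{s})\big)$. By the definition of a continuous sum-decomposition, this is exactly the assertion that $f$ is continuously sum-decomposable via $\mathbb{R}^{L}$. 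Hence the claim ``Deep Sets with latent dimension at least $M+1$ can represent $f$'' is equivalent to ``$f$ is continuously sum-decomposable via $\mathbb{R}^{L}$ for every $L\geq M+1$,'' and it is this equivalence that I would state first.

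The second step invokes the preceding theorem directly: any continuous permutation-invariant $f\colon[0,1]^{M}\to\mathbb{R}$ is continuously sum-decomposable via $\mathbb{R}^{M+1}$. This supplies a continuous sum-decomposition $(\rho,\phi)$ with latent dimension exactly $M+1$, i.e.\ a Deep Sets realization of $f$ at the critical width, so the only remaining work is the ``at least'' quantifier.

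It remains to push the latent dimension up from $M+1$ to any $L\geq M+1$, which I would do by composing with the canonical coordinate maps. Given the decomposition $(\rho,\phi)$ via $\mathbb{R}^{M+1}$, let $\iota\colon\mathbb{R}^{M+1}\hookrightarrow\mathbb{R}^{L}$ pad with zeros and let $\pi\colon\mathbb{R}^{L}\to\mathbb{R}^{M+1}$ project onto the first $M+1$ coordinates, so that $\pi\circ\iota$ is the identity. Setting $\tilde\phi\coloneqq\iota\circ\phi$ and $\tilde\rho\coloneqq\rho\circ\pi$, both maps are continuous, and since summation commutes with the linear map $\iota$ we obtain $\tilde\rho\big(\sum_{\bm{s}\in\mathcal{S}}\tilde\phi(\bm{s})\big)=\rho\big(\pi\big(\iota\big(\sum_{\bm{s}\in\mathcal{S}}\phi(\bm{s})\big)\big)\big)=\rho\big(\sum_{\bm{s}\in\mathcal{S}}\phi(\bm{s})\big)=f(\mathcal{S})$. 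This exhibits a Deep Sets realization with latent space $\mathbb{R}^{L}$ for every $L\geq M+1$, completing the argument.

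I expect the only genuine mathematical content — the injective symmetric embedding and the continuous inverse underlying sum-decomposability via $\mathbb{R}^{M+1}$ — to be entirely absorbed into the preceding theorem, which I am assuming, so there is no hard computational step here. The one point I would be careful to state is the interpretation of ``represent'': here it means realizability in the architectural form of Eq.~\ref{eq:deep_sets} with continuous $\rho$ and $\phi$, rather than exact implementation by a fixed finite network. Under that reading the statement is a clean corollary, and the mildest obstacle is simply making the dimension-lifting step precise so that the full ``at least'' quantifier is justified and not merely the critical case $L=M+1$.
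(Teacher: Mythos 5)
Your proof is correct and matches the paper's treatment: the paper states this theorem without proof, presenting it as an immediate architectural rephrasing of the preceding theorem (continuous sum-decomposability via $\mathbb{R}^{M+1}$), which is exactly the reduction you carry out. Your zero-padding/projection argument ($\tilde\phi=\iota\circ\phi$, $\tilde\rho=\rho\circ\pi$) is a valid way to make the ``at least $M+1$'' quantifier explicit --- a detail the paper leaves implicit --- and your reading of ``represent'' as exact realizability with continuous $\rho$ and $\phi$ is the intended one.
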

In addition, later work~\citep{han2022universal} gives explicit bounds on the number of parameters with respect to the dimension and the target accuracy $\epsilon$.
\textcolor{red}{
In the following, a set function $f\colon 2^\mathcal{V}\to\mathbb{R}$ said to be permutation-invariant and differentiable if it satisfies Def.~\ref{def:permutation_invariant} and $f(\varphi(s_1),\dots,\varphi(s_{|\mathcal{S}|}))$ is differentiable with mapping $\varphi:\mathcal{V}\to\mathbb{R}^d$.
}
\begin{theorem}[\cite{han2022universal}]
    Let $f\colon 2^{\mathcal{V}} \to \mathbb{R}$ be a continuously-differentiable, permutation-invariant function.
    Let $0 < \epsilon < \|\nabla f\|_2\sqrt{|\mathcal{S}|d}\mathcal{S}^{-\frac{1}{d}}$, for any $\mathcal{S} \subseteq \mathcal{V}$ with the mapping $\varphi\colon \left[1, |\mathcal{V}|\right] \to \mathbb{R}^d$, where $\|\nabla f\|_2 = \max_{\mathcal{S}}\|f(\mathcal{S})\|_2$.
    Then, there exists $\phi\colon \mathbb{R}^d \to \mathbb{R}^M$, $\rho\colon \mathbb{R}^M \to \mathbb{R}$, such that
    \begin{align}
        \left|f(\mathcal{S}) - \rho\left(\sum_{\bm{s} \in \mathcal{S}}\phi(\bm{s})\right)\right| = \left|f(\mathcal{S}) - \rho\left(\sum^{|\mathcal{S}|}_{i=1}\phi(\varphi(s_i))\right)\right| < \epsilon,
    \end{align}
    where $M$, the number of feature variables, satisfies the bound
    \begin{align}
        M \leq \frac{2^N(\|\nabla f\|^2_2|\mathcal{S}|d)^{|\mathcal{S}|d/2}}{\epsilon^{|\mathcal{S}|d}|\mathcal{S}|!}.
    \end{align}
\end{theorem}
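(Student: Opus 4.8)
The plan is to treat $f$ as a Lipschitz function on the tuple space and then realize the sum-pooling encoding $\Phi(\mathcal{S})=\sum_{\bm{s}\in\mathcal{S}}\phi(\bm{s})$ through a covering/histogram construction, with the feature count $M$ governed by the covering number of the permutation-quotiented domain at a resolution set by $\epsilon$. First I would upgrade continuous differentiability to a quantitative bound: since $f$ is continuously differentiable on the compact image of $\varphi$ and permutation invariant, integrating $\nabla f$ along straight-line paths gives $|f(\mathcal{S})-f(\mathcal{S}')|\le\|\nabla f\|_2\,\mathrm{dist}(\mathcal{S},\mathcal{S}')$ for equal-size configurations, where the distance is Euclidean on the $|\mathcal{S}|d$-dimensional tuple space, minimized over relabelings to respect invariance. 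This is the mechanism by which a finite-resolution encoding of the multiset yields a uniform error bound on $f$.

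Next I would take $\phi$ to be a one-hot (or smoothed bump) indicator for a grid partition of the single-element domain $\varphi(\mathcal{V})\subseteq\mathbb{R}^d$ at side length $\delta$. Then $\Phi(\mathcal{S})$ is precisely the histogram counting how many elements of $\mathcal{S}$ land in each cell, a permutation-invariant sufficient statistic that pins down the multiset up to resolution $\delta$. I would define $\rho$ on the finitely many attainable histograms by returning $f$ evaluated at any representative configuration consistent with that histogram. The Lipschitz bound caps the resulting error at $\|\nabla f\|_2\sqrt{|\mathcal{S}|d}\,\delta$, so choosing $\delta\asymp\epsilon/(\|\nabla f\|_2\sqrt{|\mathcal{S}|d})$ pushes it below $\epsilon$; the hypothesis $\epsilon<\|\nabla f\|_2\sqrt{|\mathcal{S}|d}\,|\mathcal{S}|^{-1/d}$ ensures this $\delta$ is small enough for the grid to be nondegenerate.

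To obtain the bound on $M$, I would count the configurations $\rho$ must distinguish. The cells meeting the domain number at most a covering number of order $(\mathrm{diam}/\delta)^{d}$ per coordinate block, and passing from cell occupancies to size-$|\mathcal{S}|$ multisets introduces the symmetry saving $1/|\mathcal{S}|!$. Substituting $\delta\asymp\epsilon/(\|\nabla f\|_2\sqrt{|\mathcal{S}|d})$ converts the diameter-to-resolution ratio into $(\|\nabla f\|_2^2|\mathcal{S}|d)^{|\mathcal{S}|d/2}/\epsilon^{|\mathcal{S}|d}$, and collecting the covering-number constants into the prefactor $2^N$ reproduces the stated bound.

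The hard part will be making the sum-pooled histogram a faithful encoding while keeping $\rho$ well-defined and the count tight. One must check that distinct multisets at resolution $\delta$ map to distinct histograms, that the value assigned by $\rho$ is insensitive (within the allowed error) to the representative chosen, and that the transition from cell counts to multisets yields exactly the factorial saving and the $\sqrt{|\mathcal{S}|d}$ diameter factor rather than a weaker constant. The conceptual covering argument is standard; the real effort lies in this careful bookkeeping of constants.
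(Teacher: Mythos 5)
You should know at the outset that the paper never proves this theorem: it reproduces the statement verbatim from the cited work of Han et al.\ (2022), so your proposal can only be compared against the argument in that source. That argument is in fact the one you outline: a Lipschitz estimate obtained from $\nabla f$ on a compact domain, a uniform grid of side $\delta \approx \epsilon/(\|\nabla f\|_2\sqrt{|\mathcal{S}|d})$, an error bound obtained by matching points cell by cell (giving tuple-space distance $\delta\sqrt{|\mathcal{S}|d}$, hence error $\|\nabla f\|_2\sqrt{|\mathcal{S}|d}\,\delta < \epsilon$), and a count of discretized configurations modulo permutations, where the factor $2^{N}$ (with $N$, left undefined in the statement, standing for $|\mathcal{S}|$) comes from bounding the multiset coefficient $\binom{K+|\mathcal{S}|-1}{|\mathcal{S}|} \le 2^{|\mathcal{S}|}K^{|\mathcal{S}|}/|\mathcal{S}|!$, valid precisely because the hypothesis on $\epsilon$ forces the number of cells $K$ to exceed $|\mathcal{S}|$. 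You also correctly read through the statement's typos (e.g.\ $\|\nabla f\|_2 = \max_{\mathcal{S}}\|\nabla f(\mathcal{S})\|_2$ and $|\mathcal{S}|^{-1/d}$).

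The one genuine flaw is in your final accounting: you conflate the feature dimension $M$ (the codomain dimension of $\phi$) with the number of configurations that $\rho$ must distinguish. In your construction $\phi$ is a one-hot indicator over the $K \approx (\|\nabla f\|_2\sqrt{|\mathcal{S}|d}/\epsilon)^d$ grid cells, so $M = K$; the quantity you compute in your third paragraph — cells raised to the number of points, divided by $|\mathcal{S}|!$, times $2^{N}$ — is instead the number of attainable histograms, i.e.\ the size of the lookup table defining $\rho$. These differ enormously ($K$ versus roughly $2^{|\mathcal{S}|}K^{|\mathcal{S}|}/|\mathcal{S}|!$), so your derivation does not "reproduce the stated bound" for your own $M$; it reproduces it for a different quantity. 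The theorem is nevertheless satisfied by your construction, because under the hypothesis $K > |\mathcal{S}|$ one checks easily that $K \le 2^{|\mathcal{S}|}K^{|\mathcal{S}|}/|\mathcal{S}|!$ — indeed your histogram encoding yields a far smaller $M$ than the stated bound, which is the value one gets when, as in the cited construction, the features are indexed by equivalence classes of discretized configurations rather than by single cells. To repair the write-up, either keep $M = K$ and verify that inequality explicitly, or re-index the features by multisets so that your count really is $M$; as written, the bookkeeping in the last step is inconsistent with the construction in the second.
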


Furthermore, \cite{wagstaff2022universal} provides a more precise analysis.
\begin{theorem}[\cite{wagstaff2022universal}]
\label{thm:universal_approximation_set_function}
    Let $M, N\in\mathbb{N}$, with $M > N$.
    Then, there exists continuous permutation-invariant functions $f\colon \mathbb{R}^M \to \mathbb{R}$ which are not continuously sum-decomposable via $\mathbb{R}^N$.
\end{theorem}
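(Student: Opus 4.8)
The plan is to argue by contradiction: suppose that \emph{every} continuous permutation-invariant $f\colon\mathbb{R}^M\to\mathbb{R}$ admitted a continuous sum-decomposition $f=\rho\circ\Phi$ with $\Phi(\mathcal{S})=\sum_{\bm{s}\in\mathcal{S}}\phi(\bm{s})$ and $\phi\colon\mathbb{R}\to\mathbb{R}^N$ continuous, and then extract a contradiction from $N<M$. The natural arena is the space of size-$M$ multisets of reals, which I would identify with the closed sorted region $\Delta=\{(x_1,\dots,x_M):x_1\le\cdots\le x_M\}\subset\mathbb{R}^M$. Its interior $\Delta^\circ$ (the strict inequalities) is a nonempty open subset of $\mathbb{R}^M$, and distinct points of $\Delta^\circ$ are distinct as multisets, so $\Phi$ restricted to $\Delta^\circ$ is a continuous map into $\mathbb{R}^N$ whose injectivity is exactly the question of whether the encoder separates multisets.

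The topological heart of the argument is Brouwer's invariance of domain, which implies that there is no continuous injection from a nonempty open subset of $\mathbb{R}^M$ into $\mathbb{R}^N$ when $N<M$. Applied to $\Phi|_{\Delta^\circ}$, this shows the encoder cannot be injective: there exist distinct $X\neq Y$ in $\Delta^\circ$ with $\Phi(X)=\Phi(Y)$, whence any sum-decomposition forces $f(X)=\rho(\Phi(X))=\rho(\Phi(Y))=f(Y)$. Thus, for a fixed $\phi$, the only representable functions are those constant on the fibres of $\Phi$, i.e.\ functions that fail to separate at least one pair of distinct multisets. This already explains, conceptually, why the latent dimension must reach $M$.

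To promote this into a single universal counterexample $f$, I would first pin down the encoder with a swap lemma: if $\phi(a)=\phi(b)$ for some $a\neq b$, then appending any common size-$(M-1)$ multiset $X$ gives $\Phi(\{a\}\cup X)=\Phi(\{b\}\cup X)$, so every representation of a swap-sensitive target must agree on these two distinct inputs; choosing a target that does distinguish single-element changes therefore forces $\phi$ to be injective. I would then select an explicit hard target, the simplest being an elementary-symmetric function such as $f(\mathcal{S})=\prod_{i}x_i$, which already defeats the base case $M=2$, $N=1$, and show that factoring it through a sum-encoder into $\mathbb{R}^N$ imposes a functional equation on $\phi$ along the fibres of $f$ that admits no globally continuous solution on $\mathbb{R}$; for $f=x_1x_2$ this is the computation that $\phi(x_1)+\phi(x_2)=h(x_1x_2)$ forces $\phi(x)=k\log|x|+c$, which is singular at $0$.

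I expect the main obstacle to be precisely this last bridge. Invariance of domain only produces \emph{some} colliding pair, but a scalar-valued $f$ has $(M-1)$-dimensional level sets and so need not separate the abstractly produced pair; one cannot simply demand that $f$ be injective on $\Delta$, which is impossible for a scalar function once $M\ge2$. The real work is to exhibit a concrete $f$ whose forced factorisation localises the collision to a region where $f$ genuinely distinguishes points, and to carry the $M=2$, $N=1$ functional-equation/transversality computation through to all $N<M$ while keeping $\phi$ continuous on the whole line (including near the problematic zeros of the product). Extending that rigidity argument to arbitrary $M$ and $N$ is where the difficulty concentrates.
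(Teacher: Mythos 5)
First, a point of reference: the paper you were given does not prove Theorem~\ref{thm:universal_approximation_set_function} at all --- it states it and defers to \citet{wagstaff2022universal} --- so your attempt has to be judged against that original argument. Your skeleton is in fact the same as theirs: restrict to the open sorted cone $\Delta^\circ=\{x_1<\cdots<x_M\}$ and invoke Brouwer's invariance of domain to forbid a continuous injection of this open subset of $\mathbb{R}^M$ into $\mathbb{R}^N$ when $N<M$. But the obstacle you flag at the end is not a residual technicality; it is the entire content of the theorem, and your proposal does not overcome it. Invariance of domain gives nothing until you produce a map that is \emph{forced to be injective on an $M$-dimensional open set}. Your swap lemma only forces injectivity of $\phi\colon\mathbb{R}\to\mathbb{R}^N$, which is no obstruction for any $N\geq 1$ (continuous injective curves exist in every dimension), and sums of an injective $\phi$ need not be injective on $\Delta^\circ$. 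Your fallback target $f(\mathcal{S})=\prod_i x_i$ does not close the gap either: the passage from $\rho\bigl(\phi(x_1)+\phi(x_2)\bigr)=x_1x_2$ to the Pexider equation $\phi(x_1)+\phi(x_2)=h(x_1x_2)$ is the \emph{reverse} of what the hypothesis gives (the hypothesis makes the product a function of the $\phi$-sum, not the $\phi$-sum a function of the product), so even the $M=2$, $N=1$ case needs a further argument, and, as you admit, the scheme has no visible extension to general $M>N$.

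The single missing idea is that the right target is $f=\max$, because the constraint $\max=\rho\circ\Phi$, imposed on \emph{all} points of $\mathbb{R}^M$, can be propagated through every order statistic by an element-replacement trick, and this is what forces injectivity of the sum map itself. Concretely, suppose $\Phi(X)=\Phi(Y)$ with $X=(x_1<\cdots<x_M)$ and $Y=(y_1<\cdots<y_M)$ in $\Delta^\circ$. Applying $\rho$ gives $x_M=y_M$, hence $\phi(x_M)=\phi(y_M)$, hence $\sum_{i<M}\phi(x_i)=\sum_{i<M}\phi(y_i)$. Now pad both truncated inputs with a common value $t<\min(x_{M-1},y_{M-1})$: the two padded size-$M$ inputs again have equal $\Phi$-values, so applying $\rho$ gives $x_{M-1}=y_{M-1}$. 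Iterating (strip the matched top $k$ elements, pad with $k$ common small values, apply $\rho$) yields $x_i=y_i$ for every $i$, i.e.
\begin{align*}
    \Phi(X)=\Phi(Y)\ \Longrightarrow\ X=Y \qquad\text{on }\Delta^\circ,
\end{align*}
so $\Phi|_{\Delta^\circ}$ is a continuous injection of a nonempty open subset of $\mathbb{R}^M$ into $\mathbb{R}^N$, and the invariance-of-domain step you already have produces the contradiction. This replacement lemma is precisely how \citet{wagstaff2022universal} localize the collision: rather than hunting for a target that happens to separate an abstractly produced colliding pair, one chooses $\max$, whose decomposition constraint forces the encoder to separate \emph{everything} on the cone. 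Without this lemma (or a substitute for it), your proposal is a correct framing of the problem, but it is not yet a proof.
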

This implies that for Deep Sets to be capable of representing arbitrary continuous functions on sets of size $M$, the dimension of the latent space $N$ must be at least $M$.
A similar statement is also true for models based on max-decomposition, such as PointNet~\citep{qi2017pointnet}.
\begin{definition}
    \textcolor{red}{The function $f$ is said to be max-decomposable} if there are functions $\rho$ and $\phi$ such that
    \begin{align}
        f(\mathcal{S}) = \rho\left(\max_{\bm{s}\in\mathcal{S}}(\phi(\bm{s}))\right),
    \end{align}
    where $\max$ is taken over each dimension independently in the latent space.
\end{definition}
\begin{theorem}
    \label{thm:universal_approximation2}
    Let $M > N \in \mathbb{N}$.
    Then there exist continuous permutation-invariant functions $f\colon \mathbb{R}^M \to \mathbb{R}$ which are not max-decomposable via $\mathbb{R}^N$.
\end{theorem}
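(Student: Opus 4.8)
The plan is to prove the statement by producing a single explicit witness and showing that it defeats \emph{every} candidate max-decomposition. I would take $f(x_1,\dots,x_M)=\sum_{i=1}^M x_i$, which is manifestly continuous and permutation invariant, and suppose for contradiction that $f$ is max-decomposable via $\mathbb{R}^N$ with $N<M$: that is, there are continuous $\phi\colon\mathbb{R}\to\mathbb{R}^N$ and $\rho\colon\mathbb{R}^N\to\mathbb{R}$ with $f(x_1,\dots,x_M)=\rho(\Psi(x_1,\dots,x_M))$, where $\Psi(x_1,\dots,x_M)\coloneqq \max_i \phi(x_i)$ is the coordinate-wise maximum in $\mathbb{R}^N$. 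The only feature of the decomposition I will exploit is that $f$ must be constant on the fibres of $\Psi$; in particular, continuity of $\rho$ is not needed for this (negative) direction, which streamlines the argument.

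The heart of the proof is a counting observation about coordinate-wise maxima. For a tuple $(x_1,\dots,x_M)$ write $m_j\coloneqq\max_i\phi_j(x_i)$ and call index $i$ \emph{active} if $\phi_j(x_i)=m_j$ for some coordinate $j\in\{1,\dots,N\}$. On the open set of tuples for which each coordinate maximum $m_j$ is attained at a \emph{unique} index, there are at most $N$ active indices; since $M>N$, at least one index $k$ is then \emph{strictly dominated}, i.e.\ $\phi_j(x_k)<m_j$ for every $j$. Strict domination is an open condition, so on a whole neighbourhood I may vary $x_k$ (holding the other entries fixed) while keeping it dominated in every coordinate; throughout this variation each $m_j$, hence $\Psi$, hence $\rho\circ\Psi=f$, is unchanged. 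This contradicts $\partial f/\partial x_k=1$, so no such $\phi,\rho$ exist. This parallels the sum-decomposition impossibility (Theorem~\ref{thm:universal_approximation_set_function}): there the obstruction is non-injectivity of sum-pooling into $\mathbb{R}^N$, here it is the insensitivity of coordinate-wise max-pooling to dominated elements.

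The step I expect to be the main obstacle is making the domination argument uniform over \emph{all} continuous $\phi$, i.e.\ guaranteeing that the "generic" open set (unique coordinate maxima, with a strictly dominated element) is non-empty. This can fail only for degenerate $\phi$ — for instance when some $\phi_j$ is constant, so that coordinate is tied at its maximum by every index. Such degeneracies only shrink the effective range of $\Psi$, and I would dispatch them by a reduction on $N$: drop every constant coordinate and re-run the argument with the surviving $N'\le N<M$ coordinates, choosing the $M$ sample points where those coordinates take distinct values, which is possible precisely because each surviving $\phi_j$ is non-constant. I would carry out this bookkeeping carefully, since it is where continuity of $\phi$ (rather than mere measurability) is actually used. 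As a cross-check, the same conclusion follows from a dimension count: restricted to the sorted simplex $\{x_1\le\dots\le x_M\}$, the map $\Psi$ sends an $M$-dimensional set into $\mathbb{R}^N$, so its fibres have dimension at least $M-N\ge 1$, and the dominated-element direction exhibits such a fibre that is not contained in a level set of $\sum_i x_i$.
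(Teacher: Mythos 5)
Your witness $f(x_1,\dots,x_M)=\sum_i x_i$ and the general strategy are workable, but the step you flagged as the main obstacle is a genuine gap, and your proposed repair (dropping constant coordinates) does not close it. Note first that the survey states this theorem without proof (it is imported from \citet{wagstaff2022universal}), so your argument must stand entirely on its own. It does not: the "degenerate" $\phi$ for which no tuple has a strictly dominated index are not only those with constant coordinates. Take $N=2$, $M=3$ and
\begin{align*}
    \phi_1(x) = \begin{cases} 0, & |x|\le 1,\\ -(|x|-1)^2, & |x|>1,\end{cases}
    \qquad
    \phi_2(x) = \begin{cases} |x|-\tfrac{1}{2}, & |x|\le \tfrac{1}{2},\\ 0, & |x|\ge \tfrac{1}{2}.\end{cases}
\end{align*}
Both coordinates are continuous and non-constant, yet \emph{no} tuple of \emph{any} length has a strictly dominated index: if both coordinate maxima equal $0$, a dominated element would need $\phi_1(x_k)<0$ and $\phi_2(x_k)<0$, i.e.\ $|x_k|>1$ and $|x_k|<\tfrac12$ simultaneously, which is impossible; if the coordinate-$1$ maximum is negative then all $|x_i|>1$, so every element is tied at $\phi_2=0$; if the coordinate-$2$ maximum is negative then all $|x_i|<\tfrac12$, so every element is tied at $\phi_1=0$. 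In particular your "generic" open set (unique coordinate maxima) is empty for $M\ge 3$, your reduction never triggers (no coordinate is constant), and one cannot even pick three points on which both coordinates take distinct values, so the sampling step of your fallback fails too. Your dimension-count cross-check also does not rescue this: a fibre of $\Psi$ of dimension $\ge M-N$ could a priori lie inside a level set of $\sum_i x_i$; ruling that out is exactly the dominated-direction claim, which is circular here. The moral is that the failure mode of max-pooling is not only strict domination but also plateaus/ties, and your perturbation mechanism is blind to the latter.

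The standard fix keeps your witness but replaces local perturbation by \emph{duplication}, which is immune to ties and needs no continuity, openness, or genericity at all. Given any $\phi\colon\mathbb{R}\to\mathbb{R}^N$, pick $M$ \emph{distinct} reals $x_1,\dots,x_M$. For each coordinate $j$ choose one index attaining $\max_i\phi_j(x_i)$; this gives a set $S$ of at most $N<M$ indices. Pick $k\notin S$ and $i_0\in S$, and let $Y$ be the tuple obtained from $X$ by replacing $x_k$ with a second copy of $x_{i_0}$. Every coordinate maximum of $X$ is still attained in $Y$ (by the untouched indices of $S$), and no larger value has been introduced (the value $\phi(x_{i_0})$ was already present), so $\max_i\phi(y_i)=\max_i\phi(x_i)$ coordinate-wise and hence $\rho\left(\max_i\phi(y_i)\right)=\rho\left(\max_i\phi(x_i)\right)$ for any $\rho$; but $\sum_i y_i-\sum_i x_i=x_{i_0}-x_k\neq 0$. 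This contradicts $f=\rho\circ\Psi$ and proves the theorem; applied to the plateau example above (e.g.\ the tuple $(0.6,0.7,0.8)$, where all elements are tied in both coordinates), it goes through exactly where your argument stalls. This duplication argument is essentially the one in the cited literature, so the correct proof is close to yours in spirit but differs in the one mechanism that actually has to carry the load.
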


\begin{figure}
    \centering
    \includegraphics[width=0.95\linewidth]{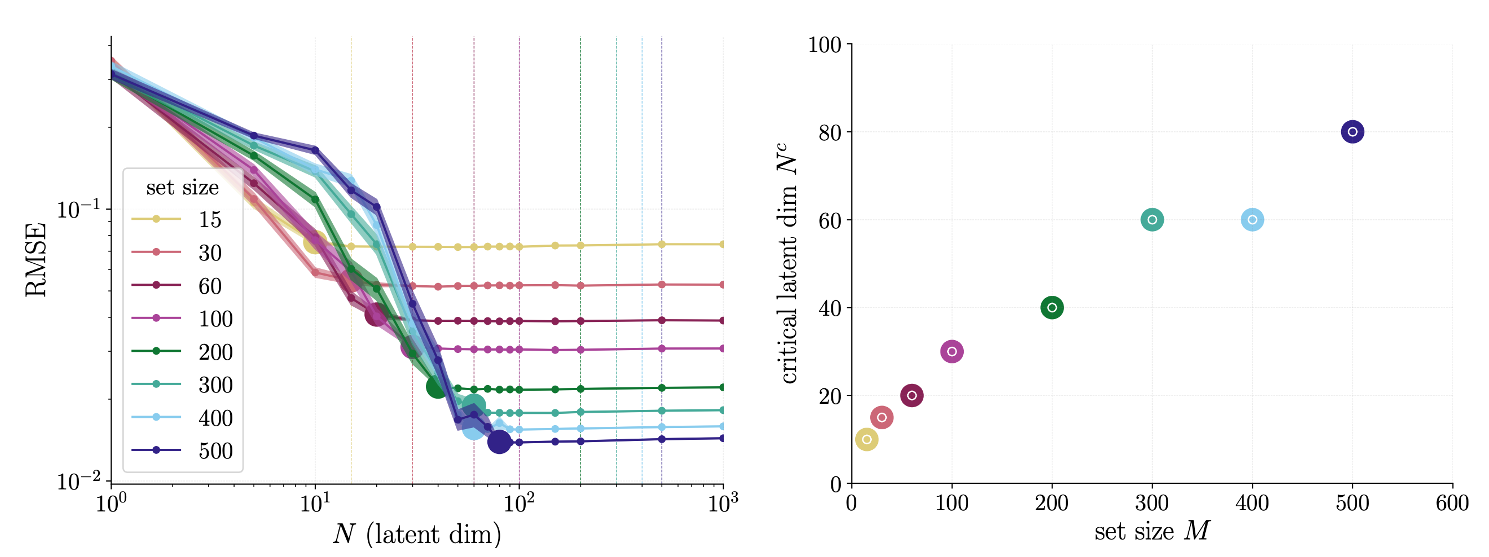}
    \caption{Illustrative toy example, from Figure 3 of \cite{wagstaff2019limitations}.
    \textcolor{red}{Right panel}: Test performance \textcolor{red}{of Deep Sets} on median estimation depending
on the latent dimension, and dashed lines indicate $N = M$. \textcolor{red}{Here, RMSE is the Root Mean Squared Error.}
    \textcolor{red}{Left panel}: Extracted critical points, and the colored data points depict minimum latent dimension for optimal performance for different set sizes.
    \textcolor{red}{These figures are cited to show experimental results that confirm Theorem~\ref{thm:universal_approximation_set_function}.}
    }
    \label{fig:limitations_of_representing_functions_on_sets}
\end{figure}

Figure~\ref{fig:limitations_of_representing_functions_on_sets} shows the illustrative example for the above theorems.
Theorem~\ref{thm:universal_approximation_set_function} implies that the number of input elements $M$ has an influence on the required latent dimension $N$.
The neural network, which has the architecture of Deep Sets, is trained to predict the median of a set of values.
The input sets are randomly drawn from either a uniform, a Gaussian, or a Gamma distribution.
This figure shows the relationship between different latent dimensions $N$, the input set size $M$, and the predictive performance, and it can be seen that
\begin{itemize}
    \item The error monotonically decreases with the latent space dimension for every set size;
    \item Once a specific point is surpassed (referred to as the critical point), enlarging the dimension of the latent space no longer leads to a further reduction in error;
    \item As the set size increases, the latent dimension at the critical point also grows.
\end{itemize}
It should be noted that the critical points are observed when $N < M$.
The reason behind this phenomenon lies in the fact that the models do not acquire an algorithmic solution for computing the median. Instead, they learn to estimate it based on samples drawn from the input distribution encountered during training.

\subsubsection{The Choice of Aggregation}
The Deep Set architecture exhibits invariance due to the inherent invariance of the aggregation function).
Theoretical justification for summing the embeddings $\phi(\bm{s})$ is provided by the sum-decomposability (see Section~\ref{sec:sum_decomposability} for more details).
In practice, mean or max-pooling operations are commonly employed, offering simplicity and invariance as well as numerical advantages for handling varying population sizes and controlling input magnitude for downstream layers.
This section explores alternative approaches and their respective properties.
\begin{proposition}[Sum Isomorphism~\citep{soelch2019deep}]
    Theorem~\ref{thm:sum_decomposable} can be extended to aggregations of the form $\alpha_g = g\circ \sum \circ g^{-1}$, i. e. summations in an isomorphic space.
\end{proposition}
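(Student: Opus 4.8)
The plan is to show that conjugating the summation by an isomorphism is nothing more than a reparametrisation of the inner and outer maps, so that decomposability via $\alpha_g$ collapses to ordinary sum-decomposability and Theorem~\ref{thm:sum_decomposable} applies verbatim. First I would pin down the meaning of the generalized aggregation. Taking the latent space $\mathcal{Z}$ as the domain of the isomorphism $g$ (so that $g^{-1}$ exists on $\mathcal{Z}$), I would say that $f\colon 2^{\mathcal{V}}\to\mathbb{R}$ is $\alpha_g$-decomposable if there are maps $\phi,\rho$ with
\[
f(\mathcal{S}) = \rho\!\left(g\!\left(\sum_{\bm{s}\in\mathcal{S}} g^{-1}(\phi(\bm{s}))\right)\right),
\]
i.e.\ the embeddings are pushed into the isomorphic copy of $\mathcal{Z}$ by $g^{-1}$, summed there, and pulled back by $g$ before $\rho$ acts.

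Next I would establish the equivalence that $f$ is $\alpha_g$-decomposable if and only if $f$ is sum-decomposable. For the forward direction, given such a pair $(\rho,\phi)$, set $\tilde\phi := g^{-1}\circ\phi$ and $\tilde\rho := \rho\circ g$; unwinding the definitions gives $f(\mathcal{S}) = \tilde\rho\big(\sum_{\bm{s}\in\mathcal{S}}\tilde\phi(\bm{s})\big)$, an ordinary sum-decomposition. For the reverse, given a sum-decomposition $(\tilde\rho,\tilde\phi)$, set $\phi := g\circ\tilde\phi$ and $\rho := \tilde\rho\circ g^{-1}$; using $g^{-1}\circ g = \mathrm{id}$ pointwise recovers the displayed $\alpha_g$-form. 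Bijectivity of $g$ is exactly what makes both substitutions well-defined and invertible, so the relationship is a genuine biconditional rather than a one-sided inclusion. Permutation invariance needs no separate argument: the aggregate $\sum_{\bm{s}\in\mathcal{S}} g^{-1}(\phi(\bm{s}))$ is a sum over the set and hence order-independent, and post-composing with the fixed maps $g$ and $\rho$ preserves this.

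Finally I would invoke Theorem~\ref{thm:sum_decomposable}, which says $f$ is sum-decomposable if and only if it is a set function, and chain it with the equivalence just proved to conclude that $f$ is $\alpha_g$-decomposable if and only if it is a set function. This is the claimed extension, and it inherits the construction of an invertible $\Phi$ from the countability of $\mathcal{V}$ used in the proof of Theorem~\ref{thm:sum_decomposable} without having to redo it.

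The main obstacle is not analytical but definitional: one must type $g$ so that $g^{-1}\circ\phi$ and the summation live in the same space and $g$ is a genuine bijection there, since otherwise the reparametrisation $\tilde\phi=g^{-1}\circ\phi$, $\tilde\rho=\rho\circ g$ is ill-posed. Once $\mathcal{Z}$ is fixed as the domain of $g$, the remainder is bookkeeping of compositions, and the only place where bijectivity is indispensable is in securing the reverse implication, and thus the full biconditional.
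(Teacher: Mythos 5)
Your proposal is correct and is essentially the paper's own argument: the paper's one-line identity $\rho \circ \sum \circ \phi = (\rho \circ g^{-1}) \circ g \circ \sum \circ g^{-1} \circ (g \circ \phi)$ is exactly your reparametrisation $\tilde\phi = g^{-1}\circ\phi$, $\tilde\rho = \rho\circ g$ (and its inverse), which converts $\alpha_g$-decompositions into sum-decompositions and vice versa before invoking Theorem~\ref{thm:sum_decomposable}. You merely spell out the bookkeeping (typing of $g$, both directions of the biconditional, permutation invariance) that the paper leaves implicit.
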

\begin{proof}
    From $\rho \circ \sum \circ \phi = (\rho \circ g^{-1}) \circ g \circ \sum \circ g^{-1} \circ (g \circ \phi)$, sum decompositions can be constructed from $\alpha_g$-decompositions and vice versa.
\end{proof}
This class includes mean (with $g((s_1,\dots,s_{n+1})) = (s_1,\dots,s_n)/s_{n+1}$, $g^{-1}(\bm{s}) = (\bm{s}^\top, 1)^\top$) and $\mathrm{logsumexp}$ ($L\Sigma E$) with $g = \ln$.
Interestingly, $L\Sigma E$ can behave as $\max$ or linear function of summation.

We can observe that divide-and-conquer operations also yield invariant aggregations.
\textcolor{red}{
Here, \citet{soelch2019deep} claims as the commutative and associative binary operations like addition and
multiplication yield invariant aggregations.
This means that order invariance is equivalent to conquering being invariant to division.
}
In the context of aggregation, order invariance is equivalent to the conquering step remaining invariant to division.
This concept extends beyond the realm of basic arithmetic operations and includes logical operators such as any or all, as well as sorting operations that generalize $\max$, $\min$, and percentiles like the median.
While these sophisticated aggregations may not be practical for typical first-order optimization, it is worth noting that aggregation techniques can encompass a wide range of complexities.
\cite{soelch2019deep} propose the learnable aggregation functions, namely recurrent aggregations.
\begin{definition}[Recurrent and Query Aggregation~\citep{soelch2019deep}]
    A recurrent aggregation is a function $f(\mathcal{S}) = \bm{a}$ that can be written recursively as:
    \begin{align*}
        \bm{q}_t &= \mathrm{query}(\bm{q}_{t-1}, \bm{a}_{t-1}) \\
        \hat{w}_{i,t} &= \mathrm{attention}(\bm{m}_i, \bm{q}_t) \\
        \bm{w}_t &= \mathrm{normalize}(\hat{\bm{w}}_t) \\
        \bm{a}_t &= \mathrm{reduce}(\{w_{i,t}, \bm{m}_i\}) \\
        \bm{a} &= g(\bm{a}_{1\colon T}),
    \end{align*}
    where $\bm{m}_i = \phi(\bm{s}_i)$ is an embedding of the input population $\{\bm{s}_i\}$ and $\bm{q}_1$ is a constant.
\end{definition}
If the reduced operation remains invariant and the normalized operation is \textcolor{red}{equivariant}, both recurrent and query aggregations maintain invariance (see Figure~\ref{fig:recurrent_aggregation}).
Empirical studies show that learnable aggregation functions introduced in this work are more robust in their performance and more consistent in their estimates with growing population sizes. 
\textcolor{red}{
The key insight of this work is that Deep Sets is highly sensitive to the choice of aggregation functions.
From this observation, we propose a special class of Deep Sets that has not been studied before in Section~\ref{sec:power_deep_sets}.
}

\begin{figure}
    \centering
    \includegraphics[width=0.98\linewidth]{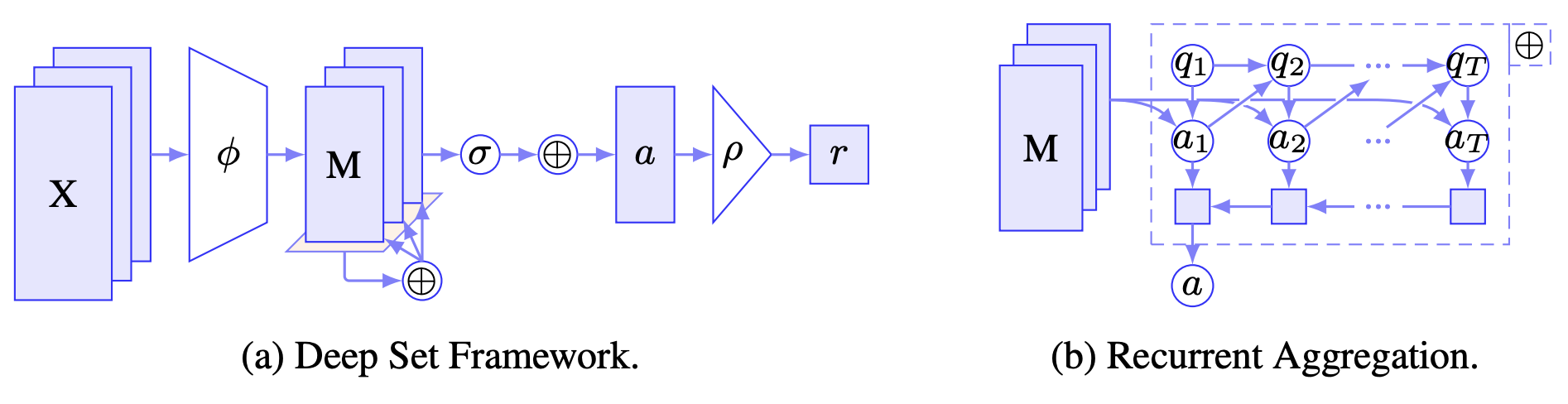}
    \caption{Deep Set architecture and Recurrent aggregation function from Figure 1 of \cite{soelch2019deep}.
    \textcolor{red}{This figure is cited to give an overview of their proposed Recurrent aggregation module.}
    }
    \label{fig:recurrent_aggregation}
\end{figure}

%==================================================
% Datasets
%==================================================
\section{Datasets}
\label{sec:datasets}
In this section, we introduce some commonly used datasets for evaluating the performance of neural networks that approximate set functions.

\paragraph{Flow-RBC~\citep{zhang2022set}:} 
The Flow-RBC dataset comprises 98,240 training examples and 23,104 test examples. Each input set represents the distribution of 1,000 red blood cells (RBCs).
Each RBC is characterized by volume and hemoglobin content measurements.
The task involves regression, aiming to predict the corresponding hematocrit level measured on the same blood sample.
In a blood sample, there are various components, including red blood cells, white blood cells, platelets, and plasma.
The hematocrit level quantifies the percentage of volume occupied by red blood cells in the blood sample.
\paragraph{Celebrity Together dataset~\citep{zhong2018compact}:}
The Celebrity Together dataset consists of images depicting multiple celebrities together, making it a suitable choice for evaluating set retrieval methods.
Unlike other face datasets that only include individual face crops, Celebrity Together comprises full images with multiple labeled faces.
The dataset contains a total of 194k images and 546k faces, with an average of 2.8 faces per image.
\paragraph{SHIFT15M~\citep{kimura2023shift15m}} SHIFT15M is a dataset designed specifically for assessing models in set-to-set matching scenarios, considering distribution shift assumptions. It allows for evaluating model performance across different levels of dataset shifts by adjusting the magnitude.
The dataset contains a total of 2.5m sets and 15m fashion items.
\paragraph{CLEVR~\citep{johnson2017clevr}:} CLEVR dataset is a synthetic Visual Question Answering dataset.
It contains images of 3D-rendered objects; each image comes with a number of highly compositional questions that fall into different categories.
Those categories fall into 5 classes of tasks: Exist, Count, Compare Integer, Query Attribute, and Compare Attribute.
The CLEVR dataset consists of: a training set of 70k images and 700k questions, a validation set of 15k images and 150k questions, a test set of 15k images and 150k questions about objects, answers, scene graphs, and functional programs for all train and validation images and questions.
Each object present in the scene, aside from position, is characterized by a set of four attributes: 2 sizes: large, and small, 3 shapes: square, cylinder, and sphere, 2 material types: rubber, and metal, 8 color types: gray, blue, brown, yellow, red, green, purple, cyan, resulting in 96 unique combinations.

\paragraph{ShapeNet~\citep{chang2015shapenet}:} ShapeNet is a large scale repository for 3D CAD models. The repository contains over 300M models with 220,000 classified into 3,135 classes arranged using WordNet hypernym-hyponym relationships. ShapeNet Parts subset contains 31,693 meshes categorized into 16 common object classes (i.e. table, chair, plane, etc.). Each shape's ground truth contains 2-5 parts (with a total of 50 part classes).

\paragraph{ModelNet40~\citep{wu20153d}:} ModelNet40 dataset contains 12,311 pre-aligned shapes from 40 categories, which are split into 9,843 for training and 2,468 for testing.

\textcolor{red}{
\paragraph{3D MNIST~\citep{xu2016convolutional}:} 3D MNIST is a 3D version of the MNIST database of handwritten digits.
This dataset contains 3D point clouds generated from the original images of the MNIST dataset, and it contains 6,000 instances.
}

%==================================================
% Novel generalization of Deep Sets: H\"{o}lder's Power Deep Sets
%==================================================
\section{\textcolor{red}{A Special Class of Deep Sets: H\"{o}lder's Power Deep Sets}}
\label{sec:power_deep_sets}
\textcolor{red}{
As mentioned in the previous sections, Deep Sets and PointNet can be generalized as differences in aggregation functions.
Here we consider the following quasi-arithmetic mean:
\begin{align}
    M_f(x_1,x_2,\dots,x_n) \coloneqq f^{-1}\left(\frac{f(x_1) + f(x_2) + \dots + f(x_n)}{n}\right),
\end{align}
for $n$ numbers and $f$ is the injective and continuous function.
In particular, let $x_1,x_2,\dots,x_n$ be positive numbers and $f(\cdot) = (\cdot)^p$ for $p\in\mathbb{R}$, we can write the parametric form as
\begin{align}
    M_p(x_1,x_2,\dots,x_n) \coloneqq \left(\frac{1}{n}\sum^n_{i=1}x_i^p\right)^{1/p}. \label{eq:power_deepsets}
\end{align}
The function $M^p$ is called the power mean or H\"{o}lder mean~\citep{bickel2015mathematical}, and has many applications in machine learning~\citep{van2014renyi,kimura2021alpha,kimura2021generalized,kimura2022information,oh2016generalized,nelson2017assessing}.
By taking specific values of $p$, we have the following special cases:
\begin{itemize}
    \item $M_{-\infty}(x_1,x_2,\dots,x_n) = \lim_{p\to -\infty}M_p(x_1,x_2,\dots,x_n) = \min\{x_1,x_2,\dots,x_n\}$.
    \item $M_{-1}(x_1,x_2,\dots,x_n) = n / (\frac{1}{x_1} + \frac{1}{x_2} + \cdots + \frac{1}{x_n})$ (harmonic mean).
    \item $M_0(x_1,x_2,\dots,x_n) = \lim_{p \to \infty}(x_1,x_2,\dots,x_n) = \sqrt[n]{x_1 + x_2 + \cdots + x_n}$ (geometric mean).
    \item $M_1(x_1,x_2,\dots,x_n) = (x_1 + x_2 + \cdots + x_n) / n$ (arithmetic mean).
    \item $M_{+\infty} = \lim_{p\to +\infty}(x_1,x_2,\dots,x_n) = \max\{x_1,x_2,\dots,x_n\}$.
\end{itemize}
By using this function, we can define the generalized Deep Sets as follows.
\begin{definition}[H\"{o}lder's Power Deep Sets]
    For a set $\mathcal{S}\subseteq\mathcal{V}$ and $\phi\colon\mathbb{R}^d\to\mathbb{R}_+$, we can define the H\"{o}lder's Power Deep Sets is defined as
    \begin{align}
        f_{\mathrm{HP-DS}}(\mathcal{S}) \coloneqq \rho\left(M^\phi_p(\mathcal{S})\right),
    \end{align}
    where
    \begin{align}
        M^\phi_p(\mathcal{S}) = \left(\frac{1}{|\mathcal{S}|}\sum_{\bm{s}\in\mathcal{S}}\phi(\bm{s})^p\right)^{1/p}.
    \end{align}
\end{definition}
We can see that the Normalized Deep Sets and the PointNet are the special cases with $p=1$ and $p=+\infty$, respectively.
}
\textcolor{red}{
\subsection{Advantages of parametric form}
In our H\"{o}lder's Power Deep Sets, choosing a power exponent $p$ is equivalent to choosing an architecture such as Deep Sets or PointNet.
The most straightforward way to obtain the best estimation is to compare the performance of several of these architectures.
Then, for this formulation, we make the following expectations.
\begin{itemize}
    \item[i)] The optimal parameter $p$ depends on the dataset and problem setting.
    \item[ii)] With a good choice of $p$, better performance can be achieved than with Deep Sets or PointNet.
\end{itemize}
In the following numerical experiments, we confirm these expectations.
}

\textcolor{red}{
\subsection{Numerical experiments on optimization of generalized Deep Sets}
Next, we introduce the experimental results for our generalized Deep Sets.
\subsubsection{Experimental settings}
\paragraph{Datasets}
In our numerical experiments, we use three datasets, CelebA~\citep{zhong2018compact}, 3D MNIST~\citep{xu2016convolutional} and Flow-RBC~\citep{zhang2022set}.
See Section~\ref{sec:datasets} for the overview of each dataset.
The performance evaluation on CelebA and 3D MNIST is accuracy, while the performance evaluation on Flow-RBC is the mean absolute error.
All experiments are conducted on three trials with different random seeds, and the means and standard deviations of the evaluations are reported.
\paragraph{Architecture of neural networks}
In our experiments, we use simple Deep Sets~\citep{zaheer2017deep} consisting of three layers.
For this base architecture, we apply the generalized aggregation function in Eq.~\eqref{eq:power_deepsets}.
\paragraph{Computing resources}
Our experiments use a 16-core CPU with 60 GB memory and a single NVIDIA Tesla T4 GPU.
}

\begin{table}[!t]
    \centering
    \textcolor{red}{
    \caption{Experimental results for different values of $p$ in the H\"{o}lder's Power Deep Sets.
    Deep Sets and PointNet correspond to special cases with $p=1$ and $p=+\infty$, respectively.
    Here, $p=1$ and $p=+\infty$ give Deep Sets and PointNet, respectively.
    }
    \label{tab:generalized_deepsets_special_cases}
    \scalebox{0.82}{
    \begin{tabular}{c|lllllllll}
    \toprule
        dataset  & $p=-1$ & $p=0$ & $p=1$ (Deep Sets)& $p=2$ & $p=3$ & $p=+\infty$ (PointNet) \\ \midrule
        CelebA ($\uparrow$) & $0.871 ({\bf \pm 0.003})$ & $0.863 (\pm 0.009)$ & ${\bf 0.873} (\pm 0.008)$ & $0.863 (\pm 0.004)$ & $0.867 (\pm 0.012)$ & $0.862 (\pm 0.004)$ \\
        3D MNIST ($\uparrow$) & $0.746 (\pm 0.011)$ & $0.809 (\pm 0.003)$ & $0.810 (\pm 0.005)$ & ${\bf 0.824} (\pm 0.003)$ & $0.820 ({\bf \pm 0.002})$ & $0.823 (\pm 0.015)$ \\
        Flow-RBC ($\downarrow$) & $32.18 (\pm 2.855)$ & $30.59 (\pm 0.7857)$ & $29.58 (\pm 0.997)$ & $28.36 ({\bf \pm 0.179})$ & ${\bf 28.34} (\pm 0.301)$ & $45.84 (\pm 15.34)$ \\
    \bottomrule
    \end{tabular}
    }
    }
\end{table}

\begin{figure}
    \centering
    \textcolor{red}{
    \includegraphics[width=0.95\linewidth]{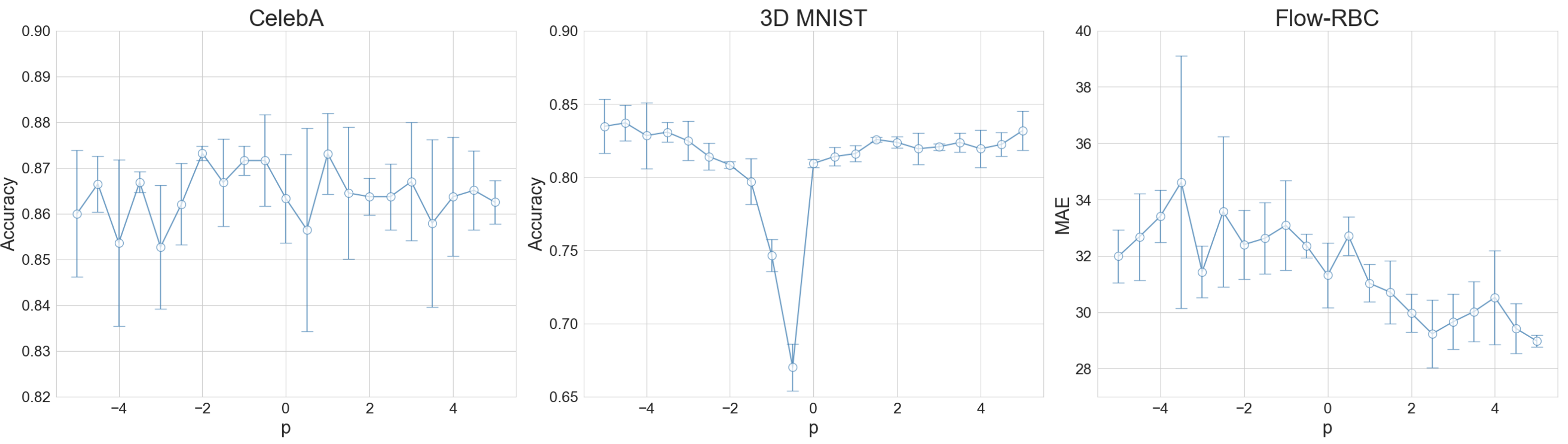}
    \caption{Performance evaluations with $p \in [-5, 5]$.}
    \label{fig:generalized_deepsets_p_search}
    }
\end{figure}

\textcolor{red}{
\subsubsection{Experimental results}
First, we confirm that changes in the power exponent $p$ in our formulation have an effect on performance.
Table~\ref{tab:generalized_deepsets_special_cases} shows the experimental results for different values of $p$ in the H\"{o}lder's Power Deep Sets.
We use $p=-1,0,1,2,3,+\infty$ in this experiments, and they correspond to the harmonic mean, geometric mean, arithmetic mean, quadratic mean, cubic mean and maximum.
Here, Deep Sets and PointNet correspond to special cases with $p=1$ and $p=+\infty$, respectively.
Here, $p=1$ and $p=+\infty$ give Deep Sets and PointNet, respectively.
This experiment shows that different $p$ achieve different evaluation performances.
In particular, we can observe that the parameters that achieve the best performance in terms of bias differ from those that are best in terms of variance.
}
\textcolor{red}{
More precisely, Figure~\ref{fig:generalized_deepsets_p_search} shows the performance evaluations with $p \in [-5, 5]$.
Experimental results are reported for different values of $p$ for every $0.5$ within this range.
From this figure, we can see that the optimal parameter $p$ depends on the dataset, and it does not necessarily correspond to well-known special cases such as Deep Sets or PointNet.
}

\textcolor{red}{
The above experimental results confirm expectations i) and ii).
The next question is how to find a good parameter $p$ efficiently.
Enumerating several special forms and comparing their performance, as in the experiment in Table~\ref{tab:architectures}, is the simplest way to do this, but it seems to provide too few choices.
An alternative way is to try all the values of $p$ within a certain range, as shown in Figure~\ref{fig:generalized_deepsets_p_search}, but this is computationally expensive when the search range is wide or when small changes in $p$ are examined.
To solve these problems, we consider directly optimizing the power exponent $p$ using Bayesian optimization and gradient descent.
In this experiments, we set $p \in [-10, 10]$.
For linear search, we compare the performance with different $p$ for every $0.5$ within this range.
For Bayesian optimization, we use the Optuna~\citep{akiba2019optuna} and set the number of trials to $30$.
For the gradient descent, we use the Adam optimizer~\citep{kingma2014adam} with $lr=0.001$.
}

\textcolor{red}{
Table~\ref{tab:bopt_and_gradient} shows the experimental results on the optimization of power exponent $p$ for the H\"{o}lder's Power Deep Sets.
In this table, we report the obtained $p$, the achieved evaluation performance and the required computation time.
In addition, Fig.~\ref{fig:optimization_history} shows the history of Bayesian optimization for each dataset.
From these results, we can see that comparable performance to linear search can be achieved in reasonable computation time by Bayesian optimization and gradient method.
}

\begin{table}[!t]
    \centering
    \textcolor{red}{
    \caption{Optimization of power exponent for the H\"{o}lder's Power Deep Sets.
    Each cell shows the obtained $p$, the achieved evaluation performance and the required computation time, respectively.}
    \label{tab:bopt_and_gradient}
    \begin{tabular}{c|lll}
        \toprule
         dataset & linear search & Bayesian optimization & gradient descent \\ \midrule
         CelebA ($\uparrow$) & \begin{tabular}{l} $p=-2.000$ \\ $0.873 (\pm 0.002)$ \\ $120,000$ [sec] \end{tabular} & \begin{tabular}{l} $p=-0.521$ \\ $0.867 (\pm 0.015)$ \\ $33,000$ [sec] \end{tabular} & \begin{tabular}{l} $p=0.106$ \\ $0.870 (\pm 0.009)$ \\ $3,000$ [sec] \end{tabular}\\ \hline
         3D MNIST ($\uparrow$) & \begin{tabular}{l} $p=-5.500$ \\ $0.843 (\pm 0.006)$ \\ $32,000$ [sec] \end{tabular} & \begin{tabular}{l} $p=-6.405$ \\ $0.842 (\pm 0.004)$ \\ $3,200$ [sec] \end{tabular} & \begin{tabular}{l} $p=0.091$ \\ $0.822 (\pm 0.011)$ \\ $800$ [sec] \end{tabular} \\ \hline
         Flow-RBC ($\downarrow$) & \begin{tabular}{l} $p=3.000$ \\ $28.34 (\pm 0.301)$ \\ $280,800$ [sec]
         \end{tabular} & \begin{tabular}{l} $p=-0.374$ \\ $33.54 (\pm 0.441)$ \\ $203,580$ [sec]
         \end{tabular} & \begin{tabular}{l} $p=2.850$ \\ $30.32 (\pm 2.685)$ \\ $7,020$ [sec]
         \end{tabular} \\
         \bottomrule
    \end{tabular}
    }
\end{table}
\begin{figure}[t]
    \centering
    \includegraphics[width=0.98\linewidth]{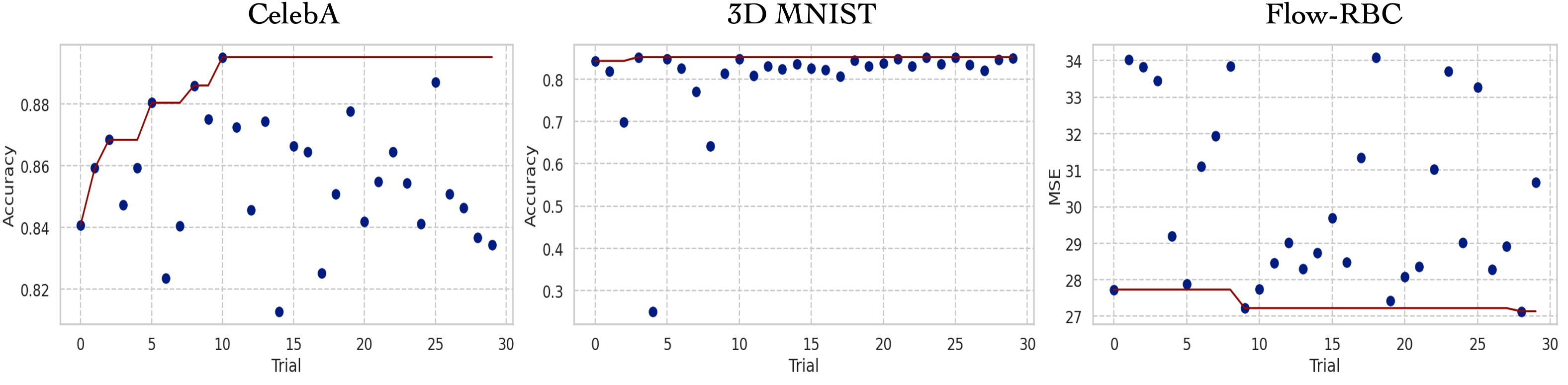}
    \caption{\textcolor{red}{Bayesian optimization history for each dataset.}}
    \label{fig:optimization_history}
\end{figure}

\textcolor{red}{
\subsection{Limitations of H\"{o}lder's Power Deep Sets}
Here we discuss several limitations of H\"{o}lder's Power Deep Sets.
Our architecture leverages the concept of the quasi-arithmetric mean to connect Deep Sets and PointNet in the linear case.
However, in order to establish the practical advantages, thorough experimental and theoretical analysis is required when nonlinear $\rho$ is used.
}

%==================================================
% Conclusion and discussion
%==================================================
\section{Conclusion and discussion}
\label{sec:conclusion}
Unlike typical machine learning models that handle vector data, when dealing with set data, it is crucial to ensure permutation-invariance.
In this survey, we \textcolor{red}{discussed} various neural network architectures that satisfy permutation-invariance and how they are beneficial for a range of tasks.
Permutation-invariant architectures not only enhance the ability of models to learn from and make predictions on set data but also open the door to more sophisticated handling of complex structures in machine learning. 
As we delve deeper into the potential of permutation-invariant neural networks and explore their adaptations for specific tasks, future research will likely focus on refining these models, addressing challenges, and uncovering novel applications. This dynamic landscape promises further advancements in machine learning, bridging the gap between vector and set data to unlock new opportunities for understanding and processing complex information structures.
\textcolor{red}{
\subsection{Future directions}
Future research topics for machine learning models that deal with sets or permutation-invariant vectors may include the following.
\begin{itemize}
    \item {\bf Research in specific areas is still insufficient}: For example, there is limited work on XAI and federated learning for neural networks dealing with sets mentioned in Section~\ref{subsec:other_tasks}. The connection between subfields, which are well studied in general machine learning, and models that approximate set functions is meaningful.
    \item {\bf Availability of more datasets}: As reviewed in Section~\ref{sec:datasets}, there are many available set datasets for machine learning, but they are still underdeveloped compared to computer vision and natural language processing. In particular, a de-facto standard dataset of the scale of ImageNet~\citep{deng2009imagenet,krizhevsky2012imagenet,russakovsky2015imagenet} in image classification would contribute greatly to the development of the field.
\end{itemize}
We also consider the following research questions concerning our generalized Deep Sets as follows.
\begin{itemize}
    \item {\bf Efficient parameter optimization}: The range of parameters introduced in our generalized Deep Sets is $\mathbb{R}$. In our experiments, however, we have a finite number of candidates or ranges and optimize within them. We expect that the usefulness of our formulation would be enhanced if a wider range of parameters could be searched efficiently.
    \item {\bf Further generalization}: Our H\"{o}lder's Power Deep Sets given in Eq.~\eqref{eq:power_deepsets} can be further generalized in the form of weighted mean as $(\frac{1}{|S|}\sum_{\bm{s}\in\mathcal{S}}w(\bm{s})\cdot\phi(\bm{s})^p)^{1/p}$, with $\sum_{\bm{s}\in\mathcal{S}}w(\bm{s}) = 1$.
    Considering such a generalization allows for more freedom in the model and may allow for further improvements.
    \item {\bf Theoretical analysis}: In this work, we investigated the behavior of generalized Deep Sets from numerical experiments. The theoretical analysis of this formulation is expected to contribute to further research.
    \item {\bf Comprehensive experiment}: We performed numerical experiments on three different datasets.
    Experimental results show that the optimal parameters vary depending on the dataset.
    We could potentially obtain further insights by conducting similar experiments on a larger number of datasets.
\end{itemize}
}

\clearpage
\bibliography{main}

\begin{thebibliography}{169}
\providecommand{\natexlab}[1]{#1}
\providecommand{\url}[1]{\texttt{#1}}
\expandafter\ifx\csname urlstyle\endcsname\relax
  \providecommand{\doi}[1]{doi: #1}\else
  \providecommand{\doi}{doi: \begingroup \urlstyle{rm}\Url}\fi

\bibitem[Akiba et~al.(2019)Akiba, Sano, Yanase, Ohta, and
  Koyama]{akiba2019optuna}
Takuya Akiba, Shotaro Sano, Toshihiko Yanase, Takeru Ohta, and Masanori Koyama.
\newblock Optuna: A next-generation hyperparameter optimization framework.
\newblock In \emph{Proceedings of the 25th ACM SIGKDD international conference
  on knowledge discovery \& data mining}, pp.\  2623--2631, 2019.

\bibitem[Alayrac et~al.(2022)Alayrac, Donahue, Luc, Miech, Barr, Hasson, Lenc,
  Mensch, Millican, Reynolds, et~al.]{alayrac2022flamingo}
Jean-Baptiste Alayrac, Jeff Donahue, Pauline Luc, Antoine Miech, Iain Barr,
  Yana Hasson, Karel Lenc, Arthur Mensch, Katherine Millican, Malcolm Reynolds,
  et~al.
\newblock Flamingo: a visual language model for few-shot learning.
\newblock \emph{Advances in Neural Information Processing Systems},
  35:\penalty0 23716--23736, 2022.

\bibitem[Amosy et~al.(2024)Amosy, Eyal, and Chechik]{amosy2024late}
Ohad Amosy, Gal Eyal, and Gal Chechik.
\newblock Late to the party? on-demand unlabeled personalized federated
  learning.
\newblock In \emph{Proceedings of the IEEE/CVF Winter Conference on
  Applications of Computer Vision}, pp.\  2184--2193, 2024.

\bibitem[Arnab et~al.(2021)Arnab, Dehghani, Heigold, Sun, Lu{\v{c}}i{\'c}, and
  Schmid]{arnab2021vivit}
Anurag Arnab, Mostafa Dehghani, Georg Heigold, Chen Sun, Mario Lu{\v{c}}i{\'c},
  and Cordelia Schmid.
\newblock Vivit: A video vision transformer.
\newblock In \emph{Proceedings of the IEEE/CVF international conference on
  computer vision}, pp.\  6836--6846, 2021.

\bibitem[Ba et~al.(2016)Ba, Kiros, and Hinton]{ba2016layer}
Jimmy~Lei Ba, Jamie~Ryan Kiros, and Geoffrey~E Hinton.
\newblock Layer normalization.
\newblock \emph{arXiv preprint arXiv:1607.06450}, 2016.

\bibitem[Beltagy et~al.(2020)Beltagy, Peters, and Cohan]{beltagy2020longformer}
Iz~Beltagy, Matthew~E Peters, and Arman Cohan.
\newblock Longformer: The long-document transformer.
\newblock \emph{arXiv preprint arXiv:2004.05150}, 2020.

\bibitem[Bickel \& Doksum(2015)Bickel and Doksum]{bickel2015mathematical}
Peter~J Bickel and Kjell~A Doksum.
\newblock \emph{Mathematical statistics: basic ideas and selected topics,
  volumes I-II package}.
\newblock CRC Press, 2015.

\bibitem[Bilo{\v{s}} \& G{\"u}nnemann(2021)Bilo{\v{s}} and
  G{\"u}nnemann]{bilovs2021scalable}
Marin Bilo{\v{s}} and Stephan G{\"u}nnemann.
\newblock Scalable normalizing flows for permutation invariant densities.
\newblock In \emph{International Conference on Machine Learning}, pp.\
  957--967. PMLR, 2021.

\bibitem[Bjorck et~al.(2018)Bjorck, Gomes, Selman, and
  Weinberger]{bjorck2018understanding}
Nils Bjorck, Carla~P Gomes, Bart Selman, and Kilian~Q Weinberger.
\newblock Understanding batch normalization.
\newblock \emph{Advances in neural information processing systems}, 31, 2018.

\bibitem[Bueno \& Hylton(2021)Bueno and Hylton]{bueno2021representation}
Christian Bueno and Alan Hylton.
\newblock On the representation power of set pooling networks.
\newblock \emph{Advances in Neural Information Processing Systems},
  34:\penalty0 17170--17182, 2021.

\bibitem[Carion et~al.(2020)Carion, Massa, Synnaeve, Usunier, Kirillov, and
  Zagoruyko]{carion2020end}
Nicolas Carion, Francisco Massa, Gabriel Synnaeve, Nicolas Usunier, Alexander
  Kirillov, and Sergey Zagoruyko.
\newblock End-to-end object detection with transformers.
\newblock In \emph{European conference on computer vision}, pp.\  213--229.
  Springer, 2020.

\bibitem[Chang et~al.(2015)Chang, Funkhouser, Guibas, Hanrahan, Huang, Li,
  Savarese, Savva, Song, Su, et~al.]{chang2015shapenet}
Angel~X Chang, Thomas Funkhouser, Leonidas Guibas, Pat Hanrahan, Qixing Huang,
  Zimo Li, Silvio Savarese, Manolis Savva, Shuran Song, Hao Su, et~al.
\newblock Shapenet: An information-rich 3d model repository.
\newblock \emph{arXiv preprint arXiv:1512.03012}, 2015.

\bibitem[Chang et~al.(2022)Chang, Griffiths, and Levine]{chang2022object}
Michael Chang, Tom Griffiths, and Sergey Levine.
\newblock Object representations as fixed points: Training iterative refinement
  algorithms with implicit differentiation.
\newblock \emph{Advances in Neural Information Processing Systems},
  35:\penalty0 32694--32708, 2022.

\bibitem[Cohen \& Welling(2016)Cohen and Welling]{cohen2016group}
Taco Cohen and Max Welling.
\newblock Group equivariant convolutional networks.
\newblock In \emph{International conference on machine learning}, pp.\
  2990--2999. PMLR, 2016.

\bibitem[Cotter et~al.(2018)Cotter, Gupta, Jiang, Muller, Narayan, Wang, and
  Zhu]{cotter2018interpretable}
Andrew Cotter, Maya Gupta, Heinrich Jiang, James Muller, Taman Narayan, Serena
  Wang, and Tao Zhu.
\newblock Interpretable set functions.
\newblock \emph{arXiv preprint arXiv:1806.00050}, 2018.

\bibitem[Cotter et~al.(2019)Cotter, Gupta, Jiang, Louidor, Muller, Narayan,
  Wang, and Zhu]{cotter2019shape}
Andrew Cotter, Maya Gupta, Heinrich Jiang, Erez Louidor, James Muller, Tamann
  Narayan, Serena Wang, and Tao Zhu.
\newblock Shape constraints for set functions.
\newblock In \emph{International conference on machine learning}, pp.\
  1388--1396. PMLR, 2019.

\bibitem[Cuturi(2013)]{cuturi2013sinkhorn}
Marco Cuturi.
\newblock Sinkhorn distances: Lightspeed computation of optimal transport.
\newblock \emph{Advances in neural information processing systems}, 26, 2013.

\bibitem[Datta et~al.(2008)Datta, Joshi, Li, and Wang]{datta2008image}
Ritendra Datta, Dhiraj Joshi, Jia Li, and James~Z Wang.
\newblock Image retrieval: Ideas, influences, and trends of the new age.
\newblock \emph{ACM Computing Surveys (Csur)}, 40\penalty0 (2):\penalty0 1--60,
  2008.

\bibitem[Deng et~al.(2009)Deng, Dong, Socher, Li, Li, and
  Fei-Fei]{deng2009imagenet}
Jia Deng, Wei Dong, Richard Socher, Li-Jia Li, Kai Li, and Li~Fei-Fei.
\newblock Imagenet: A large-scale hierarchical image database.
\newblock In \emph{2009 IEEE conference on computer vision and pattern
  recognition}, pp.\  248--255. Ieee, 2009.

\bibitem[Deng \& Liu(2018)Deng and Liu]{deng2018deep}
Li~Deng and Yang Liu.
\newblock \emph{Deep learning in natural language processing}.
\newblock Springer, 2018.

\bibitem[Dolhansky \& Bilmes(2016)Dolhansky and Bilmes]{dolhansky2016deep}
Brian~W Dolhansky and Jeff~A Bilmes.
\newblock Deep submodular functions: Definitions and learning.
\newblock \emph{Advances in Neural Information Processing Systems}, 29, 2016.

\bibitem[Dosovitskiy et~al.(2020)Dosovitskiy, Beyer, Kolesnikov, Weissenborn,
  Zhai, Unterthiner, Dehghani, Minderer, Heigold, Gelly,
  et~al.]{dosovitskiy2020image}
Alexey Dosovitskiy, Lucas Beyer, Alexander Kolesnikov, Dirk Weissenborn,
  Xiaohua Zhai, Thomas Unterthiner, Mostafa Dehghani, Matthias Minderer, Georg
  Heigold, Sylvain Gelly, et~al.
\newblock An image is worth 16x16 words: Transformers for image recognition at
  scale.
\newblock \emph{arXiv preprint arXiv:2010.11929}, 2020.

\bibitem[Elsayed et~al.(2022)Elsayed, Mahendran, van Steenkiste, Greff, Mozer,
  and Kipf]{elsayed2022savi++}
Gamaleldin Elsayed, Aravindh Mahendran, Sjoerd van Steenkiste, Klaus Greff,
  Michael~C Mozer, and Thomas Kipf.
\newblock Savi++: Towards end-to-end object-centric learning from real-world
  videos.
\newblock \emph{Advances in Neural Information Processing Systems},
  35:\penalty0 28940--28954, 2022.

\bibitem[Emami et~al.(2021)Emami, He, Ranka, and
  Rangarajan]{emami2021efficient}
Patrick Emami, Pan He, Sanjay Ranka, and Anand Rangarajan.
\newblock Efficient iterative amortized inference for learning symmetric and
  disentangled multi-object representations.
\newblock In \emph{International Conference on Machine Learning}, pp.\
  2970--2981. PMLR, 2021.

\bibitem[Enderton(1977)]{enderton1977elements}
Herbert~B Enderton.
\newblock \emph{Elements of set theory}.
\newblock Academic press, 1977.

\bibitem[Erickson et~al.(2017)Erickson, Korfiatis, Akkus, and
  Kline]{erickson2017machine}
Bradley~J Erickson, Panagiotis Korfiatis, Zeynettin Akkus, and Timothy~L Kline.
\newblock Machine learning for medical imaging.
\newblock \emph{Radiographics}, 37\penalty0 (2):\penalty0 505--515, 2017.

\bibitem[Fei et~al.(2022)Fei, Zhu, Liu, Deng, Li, Deng, and
  Zhang]{fei2022dumlp}
Jiajun Fei, Ziyu Zhu, Wenlei Liu, Zhidong Deng, Mingyang Li, Huanjun Deng, and
  Shuo Zhang.
\newblock Dumlp-pin: a dual-mlp-dot-product permutation-invariant network for
  set feature extraction.
\newblock In \emph{Proceedings of the AAAI conference on artificial
  intelligence}, volume~36, pp.\  598--606, 2022.

\bibitem[Fujishige(2005)]{fujishige2005submodular}
Satoru Fujishige.
\newblock \emph{Submodular functions and optimization}.
\newblock Elsevier, 2005.

\bibitem[Garnelo et~al.(2018{\natexlab{a}})Garnelo, Rosenbaum, Maddison,
  Ramalho, Saxton, Shanahan, Teh, Rezende, and Eslami]{garnelo2018conditional}
Marta Garnelo, Dan Rosenbaum, Christopher Maddison, Tiago Ramalho, David
  Saxton, Murray Shanahan, Yee~Whye Teh, Danilo Rezende, and SM~Ali Eslami.
\newblock Conditional neural processes.
\newblock In \emph{International conference on machine learning}, pp.\
  1704--1713. PMLR, 2018{\natexlab{a}}.

\bibitem[Garnelo et~al.(2018{\natexlab{b}})Garnelo, Schwarz, Rosenbaum, Viola,
  Rezende, Eslami, and Teh]{garnelo2018neural}
Marta Garnelo, Jonathan Schwarz, Dan Rosenbaum, Fabio Viola, Danilo~J Rezende,
  SM~Eslami, and Yee~Whye Teh.
\newblock Neural processes.
\newblock \emph{arXiv preprint arXiv:1807.01622}, 2018{\natexlab{b}}.

\bibitem[Ge et~al.(2021)Ge, Liu, Li, Yoshie, and Sun]{ge2021ota}
Zheng Ge, Songtao Liu, Zeming Li, Osamu Yoshie, and Jian Sun.
\newblock Ota: Optimal transport assignment for object detection.
\newblock In \emph{Proceedings of the IEEE/CVF Conference on Computer Vision
  and Pattern Recognition}, pp.\  303--312, 2021.

\bibitem[Gens \& Domingos(2014)Gens and Domingos]{gens2014deep}
Robert Gens and Pedro~M Domingos.
\newblock Deep symmetry networks.
\newblock \emph{Advances in neural information processing systems}, 27, 2014.

\bibitem[Giannone \& Winther(2022)Giannone and Winther]{giannone2022scha}
Giorgio Giannone and Ole Winther.
\newblock Scha-vae: Hierarchical context aggregation for few-shot generation.
\newblock In \emph{International Conference on Machine Learning}, pp.\
  7550--7569. PMLR, 2022.

\bibitem[Goyal et~al.(2017)Goyal, Doll{\'a}r, Girshick, Noordhuis, Wesolowski,
  Kyrola, Tulloch, Jia, and He]{goyal2017accurate}
Priya Goyal, Piotr Doll{\'a}r, Ross Girshick, Pieter Noordhuis, Lukasz
  Wesolowski, Aapo Kyrola, Andrew Tulloch, Yangqing Jia, and Kaiming He.
\newblock Accurate, large minibatch sgd: Training imagenet in 1 hour.
\newblock \emph{arXiv preprint arXiv:1706.02677}, 2017.

\bibitem[Gunning et~al.(2019)Gunning, Stefik, Choi, Miller, Stumpf, and
  Yang]{gunning2019xai}
David Gunning, Mark Stefik, Jaesik Choi, Timothy Miller, Simone Stumpf, and
  Guang-Zhong Yang.
\newblock Xai—explainable artificial intelligence.
\newblock \emph{Science robotics}, 4\penalty0 (37):\penalty0 eaay7120, 2019.

\bibitem[Guo et~al.(2021)Guo, Cai, Liu, Mu, Martin, and Hu]{guo2021pct}
Meng-Hao Guo, Jun-Xiong Cai, Zheng-Ning Liu, Tai-Jiang Mu, Ralph~R Martin, and
  Shi-Min Hu.
\newblock Pct: Point cloud transformer.
\newblock \emph{Computational Visual Media}, 7:\penalty0 187--199, 2021.

\bibitem[Guo et~al.(2016)Guo, Liu, Oerlemans, Lao, Wu, and Lew]{guo2016deep}
Yanming Guo, Yu~Liu, Ard Oerlemans, Songyang Lao, Song Wu, and Michael~S Lew.
\newblock Deep learning for visual understanding: A review.
\newblock \emph{Neurocomputing}, 187:\penalty0 27--48, 2016.

\bibitem[Han et~al.(2022{\natexlab{a}})Han, Li, Lin, Lu, Zhang, and
  Zhang]{han2022universal}
Jiequn Han, Yingzhou Li, Lin Lin, Jianfeng Lu, Jiefu Zhang, and Linfeng Zhang.
\newblock Universal approximation of symmetric and anti-symmetric functions.
\newblock \emph{Communications in Mathematical Sciences}, 20\penalty0
  (5):\penalty0 1397--1408, 2022{\natexlab{a}}.

\bibitem[Han et~al.(2022{\natexlab{b}})Han, Wang, Chen, Chen, Guo, Liu, Tang,
  Xiao, Xu, Xu, et~al.]{han2022survey}
Kai Han, Yunhe Wang, Hanting Chen, Xinghao Chen, Jianyuan Guo, Zhenhua Liu,
  Yehui Tang, An~Xiao, Chunjing Xu, Yixing Xu, et~al.
\newblock A survey on vision transformer.
\newblock \emph{IEEE transactions on pattern analysis and machine
  intelligence}, 45\penalty0 (1):\penalty0 87--110, 2022{\natexlab{b}}.

\bibitem[Hausdorff(2021)]{hausdorff2021set}
Felix Hausdorff.
\newblock \emph{Set theory}, volume 119.
\newblock American Mathematical Soc., 2021.

\bibitem[He et~al.(2016)He, Zhang, Ren, and Sun]{he2016deep}
Kaiming He, Xiangyu Zhang, Shaoqing Ren, and Jian Sun.
\newblock Deep residual learning for image recognition.
\newblock In \emph{Proceedings of the IEEE conference on computer vision and
  pattern recognition}, pp.\  770--778, 2016.

\bibitem[Hess et~al.(2022)Hess, Petersson, and Svensson]{hess2022object}
Georg Hess, Christoffer Petersson, and Lennart Svensson.
\newblock Object detection as probabilistic set prediction.
\newblock In \emph{European Conference on Computer Vision}, pp.\  550--566.
  Springer, 2022.

\bibitem[Huang \& Belongie(2017)Huang and Belongie]{huang2017arbitrary}
Xun Huang and Serge Belongie.
\newblock Arbitrary style transfer in real-time with adaptive instance
  normalization.
\newblock In \emph{Proceedings of the IEEE international conference on computer
  vision}, pp.\  1501--1510, 2017.

\bibitem[Ioffe \& Szegedy(2015)Ioffe and Szegedy]{ioffe2015batch}
Sergey Ioffe and Christian Szegedy.
\newblock Batch normalization: Accelerating deep network training by reducing
  internal covariate shift.
\newblock In \emph{International conference on machine learning}, pp.\
  448--456. pmlr, 2015.

\bibitem[Islam(2022)]{islam2022machine}
ABM~Rezbaul Islam.
\newblock Machine learning in computer vision.
\newblock In \emph{Applications of Machine Learning and Artificial Intelligence
  in Education}, pp.\  48--72. IGI Global, 2022.

\bibitem[Jaderberg et~al.(2015)Jaderberg, Simonyan, Zisserman,
  et~al.]{jaderberg2015spatial}
Max Jaderberg, Karen Simonyan, Andrew Zisserman, et~al.
\newblock Spatial transformer networks.
\newblock \emph{Advances in neural information processing systems}, 28, 2015.

\bibitem[Jaegle et~al.(2021{\natexlab{a}})Jaegle, Borgeaud, Alayrac, Doersch,
  Ionescu, Ding, Koppula, Zoran, Brock, Shelhamer,
  et~al.]{jaegle2021perceiverio}
Andrew Jaegle, Sebastian Borgeaud, Jean-Baptiste Alayrac, Carl Doersch, Catalin
  Ionescu, David Ding, Skanda Koppula, Daniel Zoran, Andrew Brock, Evan
  Shelhamer, et~al.
\newblock Perceiver io: A general architecture for structured inputs \&
  outputs.
\newblock In \emph{International Conference on Learning Representations},
  2021{\natexlab{a}}.

\bibitem[Jaegle et~al.(2021{\natexlab{b}})Jaegle, Gimeno, Brock, Vinyals,
  Zisserman, and Carreira]{jaegle2021perceiver}
Andrew Jaegle, Felix Gimeno, Andy Brock, Oriol Vinyals, Andrew Zisserman, and
  Joao Carreira.
\newblock Perceiver: General perception with iterative attention.
\newblock In \emph{International conference on machine learning}, pp.\
  4651--4664. PMLR, 2021{\natexlab{b}}.

\bibitem[Jha et~al.(2022)Jha, Gong, Wang, Turner, and Yao]{jha2022neural}
Saurav Jha, Dong Gong, Xuesong Wang, Richard~E Turner, and Lina Yao.
\newblock The neural process family: Survey, applications and perspectives.
\newblock \emph{arXiv preprint arXiv:2209.00517}, 2022.

\bibitem[Jia et~al.(2022)Jia, Liu, and Huang]{jia2022improving}
Baoxiong Jia, Yu~Liu, and Siyuan Huang.
\newblock Improving object-centric learning with query optimization.
\newblock In \emph{The Eleventh International Conference on Learning
  Representations}, 2022.

\bibitem[Jin et~al.(2021)Jin, Tao, and Kong]{jin2021multi}
Xuebo Jin, Zhi Tao, and Jianlei Kong.
\newblock Multi-stream aggregation network for fine-grained crop pests and
  diseases image recognition.
\newblock \emph{International Journal of Cybernetics and Cyber-Physical
  Systems}, 1\penalty0 (1):\penalty0 52--67, 2021.

\bibitem[Johnson et~al.(2017)Johnson, Hariharan, Van Der~Maaten, Fei-Fei,
  Lawrence~Zitnick, and Girshick]{johnson2017clevr}
Justin Johnson, Bharath Hariharan, Laurens Van Der~Maaten, Li~Fei-Fei,
  C~Lawrence~Zitnick, and Ross Girshick.
\newblock Clevr: A diagnostic dataset for compositional language and elementary
  visual reasoning.
\newblock In \emph{Proceedings of the IEEE conference on computer vision and
  pattern recognition}, pp.\  2901--2910, 2017.

\bibitem[Jordan \& Mitchell(2015)Jordan and Mitchell]{jordan2015machine}
Michael~I Jordan and Tom~M Mitchell.
\newblock Machine learning: Trends, perspectives, and prospects.
\newblock \emph{Science}, 349\penalty0 (6245):\penalty0 255--260, 2015.

\bibitem[Kabra et~al.(2021)Kabra, Zoran, Erdogan, Matthey, Creswell, Botvinick,
  Lerchner, and Burgess]{kabra2021simone}
Rishabh Kabra, Daniel Zoran, Goker Erdogan, Loic Matthey, Antonia Creswell,
  Matt Botvinick, Alexander Lerchner, and Chris Burgess.
\newblock Simone: View-invariant, temporally-abstracted object representations
  via unsupervised video decomposition.
\newblock \emph{Advances in Neural Information Processing Systems},
  34:\penalty0 20146--20159, 2021.

\bibitem[Kairouz et~al.(2021)Kairouz, McMahan, Avent, Bellet, Bennis, Bhagoji,
  Bonawitz, Charles, Cormode, Cummings, et~al.]{kairouz2021advances}
Peter Kairouz, H~Brendan McMahan, Brendan Avent, Aur{\'e}lien Bellet, Mehdi
  Bennis, Arjun~Nitin Bhagoji, Kallista Bonawitz, Zachary Charles, Graham
  Cormode, Rachel Cummings, et~al.
\newblock Advances and open problems in federated learning.
\newblock \emph{Foundations and Trends{\textregistered} in Machine Learning},
  14\penalty0 (1--2):\penalty0 1--210, 2021.

\bibitem[Kalyan et~al.(2021)Kalyan, Rajasekharan, and
  Sangeetha]{kalyan2021ammus}
Katikapalli~Subramanyam Kalyan, Ajit Rajasekharan, and Sivanesan Sangeetha.
\newblock Ammus: A survey of transformer-based pretrained models in natural
  language processing.
\newblock \emph{arXiv preprint arXiv:2108.05542}, 2021.

\bibitem[Khan et~al.(2022)Khan, Naseer, Hayat, Zamir, Khan, and
  Shah]{khan2022transformers}
Salman Khan, Muzammal Naseer, Munawar Hayat, Syed~Waqas Zamir, Fahad~Shahbaz
  Khan, and Mubarak Shah.
\newblock Transformers in vision: A survey.
\newblock \emph{ACM computing surveys (CSUR)}, 54\penalty0 (10s):\penalty0
  1--41, 2022.

\bibitem[Kim et~al.(2018)Kim, Mnih, Schwarz, Garnelo, Eslami, Rosenbaum,
  Vinyals, and Teh]{kim2018attentive}
Hyunjik Kim, Andriy Mnih, Jonathan Schwarz, Marta Garnelo, Ali Eslami, Dan
  Rosenbaum, Oriol Vinyals, and Yee~Whye Teh.
\newblock Attentive neural processes.
\newblock In \emph{International Conference on Learning Representations}, 2018.

\bibitem[Kim et~al.(2021{\natexlab{a}})Kim, Oh, and Hong]{kim2021transformers}
Jinwoo Kim, Saeyoon Oh, and Seunghoon Hong.
\newblock Transformers generalize deepsets and can be extended to graphs \&
  hypergraphs.
\newblock \emph{Advances in Neural Information Processing Systems},
  34:\penalty0 28016--28028, 2021{\natexlab{a}}.

\bibitem[Kim et~al.(2021{\natexlab{b}})Kim, Yoo, Lee, and Hong]{kim2021setvae}
Jinwoo Kim, Jaehoon Yoo, Juho Lee, and Seunghoon Hong.
\newblock Setvae: Learning hierarchical composition for generative modeling of
  set-structured data.
\newblock In \emph{Proceedings of the IEEE/CVF Conference on Computer Vision
  and Pattern Recognition}, pp.\  15059--15068, 2021{\natexlab{b}}.

\bibitem[Kimura(2021)]{kimura2021generalized}
Masanari Kimura.
\newblock Generalized t-sne through the lens of information geometry.
\newblock \emph{IEEE Access}, 9:\penalty0 129619--129625, 2021.

\bibitem[Kimura(2022)]{kimura2022generalization}
Masanari Kimura.
\newblock Generalization bounds for set-to-set matching with negative sampling.
\newblock In \emph{International Conference on Neural Information Processing},
  pp.\  468--476. Springer, 2022.

\bibitem[Kimura(2023)]{10281385}
Masanari Kimura.
\newblock On the decomposition of covariate shift assumption for the set-to-set
  matching.
\newblock \emph{IEEE Access}, 11:\penalty0 120728--120740, 2023.
\newblock \doi{10.1109/ACCESS.2023.3324044}.

\bibitem[Kimura \& Hino(2021)Kimura and Hino]{kimura2021alpha}
Masanari Kimura and Hideitsu Hino.
\newblock $\alpha$-geodesical skew divergence.
\newblock \emph{Entropy}, 23\penalty0 (5):\penalty0 528, 2021.

\bibitem[Kimura \& Hino(2022)Kimura and Hino]{kimura2022information}
Masanari Kimura and Hideitsu Hino.
\newblock Information geometrically generalized covariate shift adaptation.
\newblock \emph{Neural Computation}, 34\penalty0 (9):\penalty0 1944--1977,
  2022.

\bibitem[Kimura \& Tanaka(2019)Kimura and Tanaka]{kimura2019interpretation}
Masanari Kimura and Masayuki Tanaka.
\newblock Interpretation of feature space using multi-channel attentional
  sub-networks.
\newblock In \emph{CVPR Workshops}, pp.\  36--39, 2019.

\bibitem[Kimura \& Tanaka(2020)Kimura and Tanaka]{kimura2020new}
Masanari Kimura and Masayuki Tanaka.
\newblock New perspective of interpretability of deep neural networks.
\newblock In \emph{2020 3rd International Conference on Information and
  Computer Technologies (ICICT)}, pp.\  78--85. IEEE, 2020.

\bibitem[Kimura et~al.(2023)Kimura, Nakamura, and Saito]{kimura2023shift15m}
Masanari Kimura, Takuma Nakamura, and Yuki Saito.
\newblock Shift15m: Fashion-specific dataset for set-to-set matching with
  several distribution shifts.
\newblock In \emph{Proceedings of the IEEE/CVF Conference on Computer Vision
  and Pattern Recognition}, pp.\  3507--3512, 2023.

\bibitem[Kingma \& Ba(2014)Kingma and Ba]{kingma2014adam}
Diederik~P Kingma and Jimmy Ba.
\newblock Adam: A method for stochastic optimization.
\newblock \emph{arXiv preprint arXiv:1412.6980}, 2014.

\bibitem[Kingma \& Welling(2013)Kingma and Welling]{kingma2013auto}
Diederik~P Kingma and Max Welling.
\newblock Auto-encoding variational bayes.
\newblock \emph{arXiv preprint arXiv:1312.6114}, 2013.

\bibitem[Kipf et~al.(2021)Kipf, Elsayed, Mahendran, Stone, Sabour, Heigold,
  Jonschkowski, Dosovitskiy, and Greff]{kipf2021conditional}
Thomas Kipf, Gamaleldin~Fathy Elsayed, Aravindh Mahendran, Austin Stone, Sara
  Sabour, Georg Heigold, Rico Jonschkowski, Alexey Dosovitskiy, and Klaus
  Greff.
\newblock Conditional object-centric learning from video.
\newblock In \emph{International Conference on Learning Representations}, 2021.

\bibitem[Kitaev et~al.(2020)Kitaev, Kaiser, and Levskaya]{kitaev2020reformer}
Nikita Kitaev, {\L}ukasz Kaiser, and Anselm Levskaya.
\newblock Reformer: The efficient transformer.
\newblock \emph{arXiv preprint arXiv:2001.04451}, 2020.

\bibitem[Kondor(2007)]{kondor2007novel}
Risi Kondor.
\newblock A novel set of rotationally and translationally invariant features
  for images based on the non-commutative bispectrum.
\newblock \emph{arXiv preprint cs/0701127}, 2007.

\bibitem[Kone{\v{c}}n{\`y} et~al.(2016)Kone{\v{c}}n{\`y}, McMahan, Yu,
  Richt{\'a}rik, Suresh, and Bacon]{konevcny2016federated}
Jakub Kone{\v{c}}n{\`y}, H~Brendan McMahan, Felix~X Yu, Peter Richt{\'a}rik,
  Ananda~Theertha Suresh, and Dave Bacon.
\newblock Federated learning: Strategies for improving communication
  efficiency.
\newblock \emph{arXiv preprint arXiv:1610.05492}, 2016.

\bibitem[Krause \& Golovin(2014)Krause and Golovin]{krause2014submodular}
Andreas Krause and Daniel Golovin.
\newblock Submodular function maximization.
\newblock \emph{Tractability}, 3\penalty0 (71-104):\penalty0 3, 2014.

\bibitem[Krizhevsky et~al.(2012)Krizhevsky, Sutskever, and
  Hinton]{krizhevsky2012imagenet}
Alex Krizhevsky, Ilya Sutskever, and Geoffrey~E Hinton.
\newblock Imagenet classification with deep convolutional neural networks.
\newblock \emph{Advances in neural information processing systems}, 25, 2012.

\bibitem[Lee et~al.(2019)Lee, Lee, Kim, Kosiorek, Choi, and Teh]{lee2019set}
Juho Lee, Yoonho Lee, Jungtaek Kim, Adam Kosiorek, Seungjin Choi, and Yee~Whye
  Teh.
\newblock Set transformer: A framework for attention-based
  permutation-invariant neural networks.
\newblock In \emph{International conference on machine learning}, pp.\
  3744--3753. PMLR, 2019.

\bibitem[Levy(2012)]{levy2012basic}
Azriel Levy.
\newblock \emph{Basic set theory}.
\newblock Courier Corporation, 2012.

\bibitem[Li et~al.(2021)Li, Wen, Wu, Hu, Wang, Li, Liu, and He]{li2021survey}
Qinbin Li, Zeyi Wen, Zhaomin Wu, Sixu Hu, Naibo Wang, Yuan Li, Xu~Liu, and
  Bingsheng He.
\newblock A survey on federated learning systems: Vision, hype and reality for
  data privacy and protection.
\newblock \emph{IEEE Transactions on Knowledge and Data Engineering}, 2021.

\bibitem[Liao et~al.(2015)Liao, Hu, Zhu, and Li]{liao2015person}
Shengcai Liao, Yang Hu, Xiangyu Zhu, and Stan~Z Li.
\newblock Person re-identification by local maximal occurrence representation
  and metric learning.
\newblock In \emph{Proceedings of the IEEE conference on computer vision and
  pattern recognition}, pp.\  2197--2206, 2015.

\bibitem[Lin et~al.(2022)Lin, Wang, Liu, and Qiu]{lin2022survey}
Tianyang Lin, Yuxin Wang, Xiangyang Liu, and Xipeng Qiu.
\newblock A survey of transformers.
\newblock \emph{AI Open}, 2022.

\bibitem[Lisanti et~al.(2017)Lisanti, Martinel, Del~Bimbo, and
  Luca~Foresti]{lisanti2017group}
Giuseppe Lisanti, Niki Martinel, Alberto Del~Bimbo, and Gian Luca~Foresti.
\newblock Group re-identification via unsupervised transfer of sparse features
  encoding.
\newblock In \emph{Proceedings of the IEEE International Conference on Computer
  Vision}, pp.\  2449--2458, 2017.

\bibitem[Liu et~al.(2019)Liu, Jiang, He, Chen, Liu, Gao, and
  Han]{liu2019variance}
Liyuan Liu, Haoming Jiang, Pengcheng He, Weizhu Chen, Xiaodong Liu, Jianfeng
  Gao, and Jiawei Han.
\newblock On the variance of the adaptive learning rate and beyond.
\newblock \emph{arXiv preprint arXiv:1908.03265}, 2019.

\bibitem[Liu et~al.(2023)Liu, Yang, Tang, Yang, and Han]{liu2023flatformer}
Zhijian Liu, Xinyu Yang, Haotian Tang, Shang Yang, and Song Han.
\newblock Flatformer: Flattened window attention for efficient point cloud
  transformer.
\newblock In \emph{Proceedings of the IEEE/CVF Conference on Computer Vision
  and Pattern Recognition}, pp.\  1200--1211, 2023.

\bibitem[Locatello et~al.(2020)Locatello, Weissenborn, Unterthiner, Mahendran,
  Heigold, Uszkoreit, Dosovitskiy, and Kipf]{locatello2020object}
Francesco Locatello, Dirk Weissenborn, Thomas Unterthiner, Aravindh Mahendran,
  Georg Heigold, Jakob Uszkoreit, Alexey Dosovitskiy, and Thomas Kipf.
\newblock Object-centric learning with slot attention.
\newblock \emph{Advances in Neural Information Processing Systems},
  33:\penalty0 11525--11538, 2020.

\bibitem[Lov{\'a}sz(1983)]{lovasz1983submodular}
L{\'a}szl{\'o} Lov{\'a}sz.
\newblock Submodular functions and convexity.
\newblock \emph{Mathematical Programming The State of the Art: Bonn 1982}, pp.\
   235--257, 1983.

\bibitem[Lowe(2004)]{lowe2004distinctive}
David~G Lowe.
\newblock Distinctive image features from scale-invariant keypoints.
\newblock \emph{International journal of computer vision}, 60:\penalty0
  91--110, 2004.

\bibitem[Luo et~al.(2022)Luo, Ji, Zhong, Chen, Lei, Duan, and
  Li]{luo2022clip4clip}
Huaishao Luo, Lei Ji, Ming Zhong, Yang Chen, Wen Lei, Nan Duan, and Tianrui Li.
\newblock Clip4clip: An empirical study of clip for end to end video clip
  retrieval and captioning.
\newblock \emph{Neurocomputing}, 508:\penalty0 293--304, 2022.

\bibitem[Manay et~al.(2006)Manay, Cremers, Hong, Yezzi, and
  Soatto]{manay2006integral}
Siddharth Manay, Daniel Cremers, Byung-Woo Hong, Anthony~J Yezzi, and Stefano
  Soatto.
\newblock Integral invariants for shape matching.
\newblock \emph{IEEE Transactions on pattern analysis and machine
  intelligence}, 28\penalty0 (10):\penalty0 1602--1618, 2006.

\bibitem[Maron et~al.(2018)Maron, Ben-Hamu, Shamir, and
  Lipman]{maron2018invariant}
Haggai Maron, Heli Ben-Hamu, Nadav Shamir, and Yaron Lipman.
\newblock Invariant and equivariant graph networks.
\newblock \emph{arXiv preprint arXiv:1812.09902}, 2018.

\bibitem[Melville \& Sindhwani(2010)Melville and
  Sindhwani]{melville2010recommender}
Prem Melville and Vikas Sindhwani.
\newblock Recommender systems.
\newblock \emph{Encyclopedia of machine learning}, 1:\penalty0 829--838, 2010.

\bibitem[Minaee et~al.(2021)Minaee, Boykov, Porikli, Plaza, Kehtarnavaz, and
  Terzopoulos]{minaee2021image}
Shervin Minaee, Yuri Boykov, Fatih Porikli, Antonio Plaza, Nasser Kehtarnavaz,
  and Demetri Terzopoulos.
\newblock Image segmentation using deep learning: A survey.
\newblock \emph{IEEE transactions on pattern analysis and machine
  intelligence}, 44\penalty0 (7):\penalty0 3523--3542, 2021.

\bibitem[Misra et~al.(2021)Misra, Girdhar, and Joulin]{misra2021end}
Ishan Misra, Rohit Girdhar, and Armand Joulin.
\newblock An end-to-end transformer model for 3d object detection.
\newblock In \emph{Proceedings of the IEEE/CVF International Conference on
  Computer Vision}, pp.\  2906--2917, 2021.

\bibitem[Mitchell(1997)]{mitchell1997machine}
Tom~M Mitchell.
\newblock Machine learning, 1997.

\bibitem[Murphy et~al.(2018)Murphy, Srinivasan, Rao, and
  Ribeiro]{murphy2018janossy}
Ryan~L Murphy, Balasubramaniam Srinivasan, Vinayak Rao, and Bruno Ribeiro.
\newblock Janossy pooling: Learning deep permutation-invariant functions for
  variable-size inputs.
\newblock \emph{arXiv preprint arXiv:1811.01900}, 2018.

\bibitem[Nelson(2017)]{nelson2017assessing}
Kenric~P Nelson.
\newblock Assessing probabilistic inference by comparing the generalized mean
  of the model and source probabilities.
\newblock \emph{Entropy}, 19\penalty0 (6):\penalty0 286, 2017.

\bibitem[Oh \& Kwak(2016)Oh and Kwak]{oh2016generalized}
Jiyong Oh and Nojun Kwak.
\newblock Generalized mean for robust principal component analysis.
\newblock \emph{Pattern Recognition}, 54:\penalty0 116--127, 2016.

\bibitem[Otter et~al.(2020)Otter, Medina, and Kalita]{otter2020survey}
Daniel~W Otter, Julian~R Medina, and Jugal~K Kalita.
\newblock A survey of the usages of deep learning for natural language
  processing.
\newblock \emph{IEEE transactions on neural networks and learning systems},
  32\penalty0 (2):\penalty0 604--624, 2020.

\bibitem[Ou et~al.(2022)Ou, Xu, Su, Li, Zhao, and Bian]{ou2022learning}
Zijing Ou, Tingyang Xu, Qinliang Su, Yingzhen Li, Peilin Zhao, and Yatao Bian.
\newblock Learning neural set functions under the optimal subset oracle.
\newblock \emph{Advances in Neural Information Processing Systems},
  35:\penalty0 35021--35034, 2022.

\bibitem[Park et~al.(2022)Park, Jeong, Cho, and Park]{park2022fast}
Chunghyun Park, Yoonwoo Jeong, Minsu Cho, and Jaesik Park.
\newblock Fast point transformer.
\newblock In \emph{Proceedings of the IEEE/CVF Conference on Computer Vision
  and Pattern Recognition}, pp.\  16949--16958, 2022.

\bibitem[Pascanu et~al.(2013)Pascanu, Mikolov, and
  Bengio]{pascanu2013difficulty}
Razvan Pascanu, Tomas Mikolov, and Yoshua Bengio.
\newblock On the difficulty of training recurrent neural networks.
\newblock In \emph{International conference on machine learning}, pp.\
  1310--1318. Pmlr, 2013.

\bibitem[Portugal et~al.(2018)Portugal, Alencar, and Cowan]{portugal2018use}
Ivens Portugal, Paulo Alencar, and Donald Cowan.
\newblock The use of machine learning algorithms in recommender systems: A
  systematic review.
\newblock \emph{Expert Systems with Applications}, 97:\penalty0 205--227, 2018.

\bibitem[Preechakul et~al.(2021)Preechakul, Piansaddhayanon, Naowarat,
  Khandhawit, Sriswasdi, and Chuangsuwanich]{preechakul2021set}
Konpat Preechakul, Chawan Piansaddhayanon, Burin Naowarat, Tirasan Khandhawit,
  Sira Sriswasdi, and Ekapol Chuangsuwanich.
\newblock Set prediction in the latent space.
\newblock \emph{Advances in Neural Information Processing Systems},
  34:\penalty0 25516--25527, 2021.

\bibitem[Qi et~al.(2017{\natexlab{a}})Qi, Su, Mo, and Guibas]{qi2017pointnet}
Charles~R Qi, Hao Su, Kaichun Mo, and Leonidas~J Guibas.
\newblock Pointnet: Deep learning on point sets for 3d classification and
  segmentation.
\newblock In \emph{Proceedings of the IEEE conference on computer vision and
  pattern recognition}, pp.\  652--660, 2017{\natexlab{a}}.

\bibitem[Qi et~al.(2017{\natexlab{b}})Qi, Yi, Su, and Guibas]{qi2017pointnet++}
Charles~Ruizhongtai Qi, Li~Yi, Hao Su, and Leonidas~J Guibas.
\newblock Pointnet++: Deep hierarchical feature learning on point sets in a
  metric space.
\newblock \emph{Advances in neural information processing systems}, 30,
  2017{\natexlab{b}}.

\bibitem[Radford et~al.(2021)Radford, Kim, Hallacy, Ramesh, Goh, Agarwal,
  Sastry, Askell, Mishkin, Clark, et~al.]{radford2021learning}
Alec Radford, Jong~Wook Kim, Chris Hallacy, Aditya Ramesh, Gabriel Goh,
  Sandhini Agarwal, Girish Sastry, Amanda Askell, Pamela Mishkin, Jack Clark,
  et~al.
\newblock Learning transferable visual models from natural language
  supervision.
\newblock In \emph{International conference on machine learning}, pp.\
  8748--8763. PMLR, 2021.

\bibitem[Reisert(2008)]{reisert2008group}
Marco Reisert.
\newblock \emph{Group integration techniques in pattern analysis: a kernel
  view}.
\newblock PhD thesis, Freiburg (Breisgau), Univ., Diss., 2008, 2008.

\bibitem[Rui et~al.(1999)Rui, Huang, and Chang]{rui1999image}
Yong Rui, Thomas~S Huang, and Shih-Fu Chang.
\newblock Image retrieval: Current techniques, promising directions, and open
  issues.
\newblock \emph{Journal of visual communication and image representation},
  10\penalty0 (1):\penalty0 39--62, 1999.

\bibitem[Russakovsky et~al.(2015)Russakovsky, Deng, Su, Krause, Satheesh, Ma,
  Huang, Karpathy, Khosla, Bernstein, et~al.]{russakovsky2015imagenet}
Olga Russakovsky, Jia Deng, Hao Su, Jonathan Krause, Sanjeev Satheesh, Sean Ma,
  Zhiheng Huang, Andrej Karpathy, Aditya Khosla, Michael Bernstein, et~al.
\newblock Imagenet large scale visual recognition challenge.
\newblock \emph{International journal of computer vision}, 115:\penalty0
  211--252, 2015.

\bibitem[Saito et~al.(2020)Saito, Nakamura, Hachiya, and
  Fukumizu]{saito2020exchangeable}
Yuki Saito, Takuma Nakamura, Hirotaka Hachiya, and Kenji Fukumizu.
\newblock Exchangeable deep neural networks for set-to-set matching and
  learning.
\newblock In \emph{European Conference on Computer Vision}, pp.\  626--646.
  Springer, 2020.

\bibitem[Sajjadi et~al.(2022)Sajjadi, Duckworth, Mahendran, van Steenkiste,
  Pavetic, Lucic, Guibas, Greff, and Kipf]{sajjadi2022object}
Mehdi~SM Sajjadi, Daniel Duckworth, Aravindh Mahendran, Sjoerd van Steenkiste,
  Filip Pavetic, Mario Lucic, Leonidas~J Guibas, Klaus Greff, and Thomas Kipf.
\newblock Object scene representation transformer.
\newblock \emph{Advances in Neural Information Processing Systems},
  35:\penalty0 9512--9524, 2022.

\bibitem[Salimans \& Kingma(2016)Salimans and Kingma]{salimans2016weight}
Tim Salimans and Durk~P Kingma.
\newblock Weight normalization: A simple reparameterization to accelerate
  training of deep neural networks.
\newblock \emph{Advances in neural information processing systems}, 29, 2016.

\bibitem[Seitzer et~al.(2022)Seitzer, Horn, Zadaianchuk, Zietlow, Xiao,
  Simon-Gabriel, He, Zhang, Sch{\"o}lkopf, Brox, et~al.]{seitzer2022bridging}
Maximilian Seitzer, Max Horn, Andrii Zadaianchuk, Dominik Zietlow, Tianjun
  Xiao, Carl-Johann Simon-Gabriel, Tong He, Zheng Zhang, Bernhard
  Sch{\"o}lkopf, Thomas Brox, et~al.
\newblock Bridging the gap to real-world object-centric learning.
\newblock In \emph{The Eleventh International Conference on Learning
  Representations}, 2022.

\bibitem[Shen et~al.(2021)Shen, Li, Tan, Bansal, Rohrbach, Chang, Yao, and
  Keutzer]{shen2021much}
Sheng Shen, Liunian~Harold Li, Hao Tan, Mohit Bansal, Anna Rohrbach, Kai-Wei
  Chang, Zhewei Yao, and Kurt Keutzer.
\newblock How much can clip benefit vision-and-language tasks?
\newblock \emph{arXiv preprint arXiv:2107.06383}, 2021.

\bibitem[Simonyan \& Zisserman(2014)Simonyan and Zisserman]{simonyan2014very}
Karen Simonyan and Andrew Zisserman.
\newblock Very deep convolutional networks for large-scale image recognition.
\newblock \emph{arXiv preprint arXiv:1409.1556}, 2014.

\bibitem[Sinha \& Fleuret(2023)Sinha and Fleuret]{sinha2023deepemd}
Atul~Kumar Sinha and Fran{\c{c}}ois Fleuret.
\newblock Deepemd: A transformer-based fast estimation of the earth mover's
  distance.
\newblock 2023.

\bibitem[Sinkhorn(1964)]{sinkhorn1964relationship}
Richard Sinkhorn.
\newblock A relationship between arbitrary positive matrices and doubly
  stochastic matrices.
\newblock \emph{The annals of mathematical statistics}, 35\penalty0
  (2):\penalty0 876--879, 1964.

\bibitem[Smeulders et~al.(2000)Smeulders, Worring, Santini, Gupta, and
  Jain]{smeulders2000content}
Arnold~WM Smeulders, Marcel Worring, Simone Santini, Amarnath Gupta, and Ramesh
  Jain.
\newblock Content-based image retrieval at the end of the early years.
\newblock \emph{IEEE Transactions on pattern analysis and machine
  intelligence}, 22\penalty0 (12):\penalty0 1349--1380, 2000.

\bibitem[Soelch et~al.(2019)Soelch, Akhundov, van~der Smagt, and
  Bayer]{soelch2019deep}
Maximilian Soelch, Adnan Akhundov, Patrick van~der Smagt, and Justin Bayer.
\newblock On deep set learning and the choice of aggregations.
\newblock In \emph{Artificial Neural Networks and Machine Learning--ICANN 2019:
  Theoretical Neural Computation: 28th International Conference on Artificial
  Neural Networks, Munich, Germany, September 17--19, 2019, Proceedings, Part I
  28}, pp.\  444--457. Springer, 2019.

\bibitem[S{\o}nderby et~al.(2016)S{\o}nderby, Raiko, Maal{\o}e, S{\o}nderby,
  and Winther]{sonderby2016ladder}
Casper~Kaae S{\o}nderby, Tapani Raiko, Lars Maal{\o}e, S{\o}ren~Kaae
  S{\o}nderby, and Ole Winther.
\newblock Ladder variational autoencoders.
\newblock \emph{Advances in neural information processing systems}, 29, 2016.

\bibitem[Sonka et~al.(2014)Sonka, Hlavac, and Boyle]{sonka2014image}
Milan Sonka, Vaclav Hlavac, and Roger Boyle.
\newblock \emph{Image processing, analysis, and machine vision}.
\newblock Cengage Learning, 2014.

\bibitem[Strubell et~al.(2019)Strubell, Ganesh, and
  McCallum]{strubell2019energy}
Emma Strubell, Ananya Ganesh, and Andrew McCallum.
\newblock Energy and policy considerations for deep learning in nlp.
\newblock \emph{arXiv preprint arXiv:1906.02243}, 2019.

\bibitem[Sun et~al.(2020)Sun, Wang, Liu, Siegel, and Sarma]{sun2020pointgrow}
Yongbin Sun, Yue Wang, Ziwei Liu, Joshua Siegel, and Sanjay Sarma.
\newblock Pointgrow: Autoregressively learned point cloud generation with
  self-attention.
\newblock In \emph{Proceedings of the IEEE/CVF Winter Conference on
  Applications of Computer Vision}, pp.\  61--70, 2020.

\bibitem[Sun et~al.(2021)Sun, Cao, Yang, and Kitani]{sun2021rethinking}
Zhiqing Sun, Shengcao Cao, Yiming Yang, and Kris~M Kitani.
\newblock Rethinking transformer-based set prediction for object detection.
\newblock In \emph{Proceedings of the IEEE/CVF international conference on
  computer vision}, pp.\  3611--3620, 2021.

\bibitem[Tan \& Le(2019)Tan and Le]{tan2019efficientnet}
Mingxing Tan and Quoc Le.
\newblock Efficientnet: Rethinking model scaling for convolutional neural
  networks.
\newblock In \emph{International conference on machine learning}, pp.\
  6105--6114. PMLR, 2019.

\bibitem[Tang et~al.(2023)Tang, Cho, Lei, and Bansal]{tang2023perceiver}
Zineng Tang, Jaemin Cho, Jie Lei, and Mohit Bansal.
\newblock Perceiver-vl: Efficient vision-and-language modeling with iterative
  latent attention.
\newblock In \emph{Proceedings of the IEEE/CVF Winter Conference on
  Applications of Computer Vision}, pp.\  4410--4420, 2023.

\bibitem[Tjoa \& Guan(2020)Tjoa and Guan]{tjoa2020survey}
Erico Tjoa and Cuntai Guan.
\newblock A survey on explainable artificial intelligence (xai): Toward medical
  xai.
\newblock \emph{IEEE transactions on neural networks and learning systems},
  32\penalty0 (11):\penalty0 4793--4813, 2020.

\bibitem[Tschiatschek et~al.(2016)Tschiatschek, Djolonga, and
  Krause]{tschiatschek2016learning}
Sebastian Tschiatschek, Josip Djolonga, and Andreas Krause.
\newblock Learning probabilistic submodular diversity models via noise
  contrastive estimation.
\newblock In \emph{Artificial Intelligence and Statistics}, pp.\  770--779.
  PMLR, 2016.

\bibitem[Vahdat \& Kautz(2020)Vahdat and Kautz]{vahdat2020nvae}
Arash Vahdat and Jan Kautz.
\newblock Nvae: A deep hierarchical variational autoencoder.
\newblock \emph{Advances in neural information processing systems},
  33:\penalty0 19667--19679, 2020.

\bibitem[Van~Erven \& Harremos(2014)Van~Erven and Harremos]{van2014renyi}
Tim Van~Erven and Peter Harremos.
\newblock R{\'e}nyi divergence and kullback-leibler divergence.
\newblock \emph{IEEE Transactions on Information Theory}, 60\penalty0
  (7):\penalty0 3797--3820, 2014.

\bibitem[Vaswani et~al.(2017)Vaswani, Shazeer, Parmar, Uszkoreit, Jones, Gomez,
  Kaiser, and Polosukhin]{vaswani2017attention}
Ashish Vaswani, Noam Shazeer, Niki Parmar, Jakob Uszkoreit, Llion Jones,
  Aidan~N Gomez, {\L}ukasz Kaiser, and Illia Polosukhin.
\newblock Attention is all you need.
\newblock \emph{Advances in neural information processing systems}, 30, 2017.

\bibitem[Vikstr{\"o}m \& Ilin(2022)Vikstr{\"o}m and Ilin]{vikstrom2022learning}
Oscar Vikstr{\"o}m and Alexander Ilin.
\newblock Learning explicit object-centric representations with vision
  transformers.
\newblock \emph{arXiv preprint arXiv:2210.14139}, 2022.

\bibitem[Wagstaff et~al.(2019)Wagstaff, Fuchs, Engelcke, Posner, and
  Osborne]{wagstaff2019limitations}
Edward Wagstaff, Fabian Fuchs, Martin Engelcke, Ingmar Posner, and Michael~A
  Osborne.
\newblock On the limitations of representing functions on sets.
\newblock In \emph{International Conference on Machine Learning}, pp.\
  6487--6494. PMLR, 2019.

\bibitem[Wagstaff et~al.(2022)Wagstaff, Fuchs, Engelcke, Osborne, and
  Posner]{wagstaff2022universal}
Edward Wagstaff, Fabian~B Fuchs, Martin Engelcke, Michael~A Osborne, and Ingmar
  Posner.
\newblock Universal approximation of functions on sets.
\newblock \emph{Journal of Machine Learning Research}, 23\penalty0
  (151):\penalty0 1--56, 2022.

\bibitem[Wang et~al.(2022)Wang, Zhang, Qing, Tang, Zuo, Gao, Jin, and
  Sang]{wang2022hybrid}
Xiang Wang, Shiwei Zhang, Zhiwu Qing, Mingqian Tang, Zhengrong Zuo, Changxin
  Gao, Rong Jin, and Nong Sang.
\newblock Hybrid relation guided set matching for few-shot action recognition.
\newblock In \emph{Proceedings of the IEEE/CVF Conference on Computer Vision
  and Pattern Recognition}, pp.\  19948--19957, 2022.

\bibitem[Wei-Shi et~al.(2009)Wei-Shi, Shaogang, and Tao]{wei2009associating}
Zheng Wei-Shi, Gong Shaogang, and Xiang Tao.
\newblock Associating groups of people.
\newblock In \emph{Proceedings of the British Machine Vision Conference}, pp.\
  23--1, 2009.

\bibitem[Wen et~al.(2022)Wen, Zhou, Zhang, Chen, Ma, Yan, and
  Sun]{wen2022transformers}
Qingsong Wen, Tian Zhou, Chaoli Zhang, Weiqi Chen, Ziqing Ma, Junchi Yan, and
  Liang Sun.
\newblock Transformers in time series: A survey.
\newblock \emph{arXiv preprint arXiv:2202.07125}, 2022.

\bibitem[Wolf et~al.(2019)Wolf, Debut, Sanh, Chaumond, Delangue, Moi, Cistac,
  Rault, Louf, Funtowicz, et~al.]{wolf2019huggingface}
Thomas Wolf, Lysandre Debut, Victor Sanh, Julien Chaumond, Clement Delangue,
  Anthony Moi, Pierric Cistac, Tim Rault, R{\'e}mi Louf, Morgan Funtowicz,
  et~al.
\newblock Huggingface's transformers: State-of-the-art natural language
  processing.
\newblock \emph{arXiv preprint arXiv:1910.03771}, 2019.

\bibitem[Wu \& He(2018)Wu and He]{wu2018group}
Yuxin Wu and Kaiming He.
\newblock Group normalization.
\newblock In \emph{Proceedings of the European conference on computer vision
  (ECCV)}, pp.\  3--19, 2018.

\bibitem[Wu et~al.(2015)Wu, Song, Khosla, Yu, Zhang, Tang, and Xiao]{wu20153d}
Zhirong Wu, Shuran Song, Aditya Khosla, Fisher Yu, Linguang Zhang, Xiaoou Tang,
  and Jianxiong Xiao.
\newblock 3d shapenets: A deep representation for volumetric shapes.
\newblock In \emph{Proceedings of the IEEE conference on computer vision and
  pattern recognition}, pp.\  1912--1920, 2015.

\bibitem[Wu et~al.(2022)Wu, Dvornik, Greff, Kipf, and Garg]{wu2022slotformer}
Ziyi Wu, Nikita Dvornik, Klaus Greff, Thomas Kipf, and Animesh Garg.
\newblock Slotformer: Unsupervised visual dynamics simulation with
  object-centric models.
\newblock In \emph{The Eleventh International Conference on Learning
  Representations}, 2022.

\bibitem[Xiang et~al.(2021)Xiang, Zhang, Song, Yu, and Cai]{xiang2021walk}
Tiange Xiang, Chaoyi Zhang, Yang Song, Jianhui Yu, and Weidong Cai.
\newblock Walk in the cloud: Learning curves for point clouds shape analysis.
\newblock In \emph{Proceedings of the IEEE/CVF International Conference on
  Computer Vision}, pp.\  915--924, 2021.

\bibitem[Xie et~al.(2017)Xie, Girshick, Doll{\'a}r, Tu, and
  He]{xie2017aggregated}
Saining Xie, Ross Girshick, Piotr Doll{\'a}r, Zhuowen Tu, and Kaiming He.
\newblock Aggregated residual transformations for deep neural networks.
\newblock In \emph{Proceedings of the IEEE conference on computer vision and
  pattern recognition}, pp.\  1492--1500, 2017.

\bibitem[Xiong \& Lai(2023)Xiong and Lai]{xiong2023similarity}
Jianghao Xiong and Jianhuang Lai.
\newblock Similarity metric learning for rgb-infrared group re-identification.
\newblock In \emph{Proceedings of the IEEE/CVF Conference on Computer Vision
  and Pattern Recognition}, pp.\  13662--13671, 2023.

\bibitem[Xu et~al.(2016)Xu, Dehghani, Corrigan, Caulfield, and
  Moloney]{xu2016convolutional}
Xiaofan Xu, Alireza Dehghani, David Corrigan, Sam Caulfield, and David Moloney.
\newblock Convolutional neural network for 3d object recognition using
  volumetric representation.
\newblock In \emph{2016 first international workshop on sensing, processing and
  learning for intelligent machines (SPLINE)}, pp.\  1--5. IEEE, 2016.

\bibitem[Young et~al.(2018)Young, Hazarika, Poria, and
  Cambria]{young2018recent}
Tom Young, Devamanyu Hazarika, Soujanya Poria, and Erik Cambria.
\newblock Recent trends in deep learning based natural language processing.
\newblock \emph{ieee Computational intelligenCe magazine}, 13\penalty0
  (3):\penalty0 55--75, 2018.

\bibitem[Yu et~al.(2018)Yu, Dou, Bai, Zhang, Xu, and Bai]{yu2018hard}
Rui Yu, Zhiyong Dou, Song Bai, Zhaoxiang Zhang, Yongchao Xu, and Xiang Bai.
\newblock Hard-aware point-to-set deep metric for person re-identification.
\newblock In \emph{Proceedings of the European conference on computer vision
  (ECCV)}, pp.\  188--204, 2018.

\bibitem[Yuan et~al.(2022)Yuan, Li, and Xue]{yuan2022unsupervised}
Jinyang Yuan, Bin Li, and Xiangyang Xue.
\newblock Unsupervised learning of compositional scene representations from
  multiple unspecified viewpoints.
\newblock In \emph{Proceedings of the AAAI Conference on Artificial
  Intelligence}, volume~36, pp.\  8971--8979, 2022.

\bibitem[Zaheer et~al.(2017)Zaheer, Kottur, Ravanbakhsh, Poczos, Salakhutdinov,
  and Smola]{zaheer2017deep}
Manzil Zaheer, Satwik Kottur, Siamak Ravanbakhsh, Barnabas Poczos, Russ~R
  Salakhutdinov, and Alexander~J Smola.
\newblock Deep sets.
\newblock \emph{Advances in neural information processing systems}, 30, 2017.

\bibitem[Zerveas et~al.(2021)Zerveas, Jayaraman, Patel, Bhamidipaty, and
  Eickhoff]{zerveas2021transformer}
George Zerveas, Srideepika Jayaraman, Dhaval Patel, Anuradha Bhamidipaty, and
  Carsten Eickhoff.
\newblock A transformer-based framework for multivariate time series
  representation learning.
\newblock In \emph{Proceedings of the 27th ACM SIGKDD conference on knowledge
  discovery \& data mining}, pp.\  2114--2124, 2021.

\bibitem[Zhang et~al.(2022{\natexlab{a}})Zhang, Wan, Shen, and
  Wu]{zhang2022patchformer}
Cheng Zhang, Haocheng Wan, Xinyi Shen, and Zizhao Wu.
\newblock Patchformer: An efficient point transformer with patch attention.
\newblock In \emph{Proceedings of the IEEE/CVF Conference on Computer Vision
  and Pattern Recognition}, pp.\  11799--11808, 2022{\natexlab{a}}.

\bibitem[Zhang et~al.(2020)Zhang, Burghouts, and Snoek]{zhang2020set}
David~W Zhang, Gertjan~J Burghouts, and Cees~GM Snoek.
\newblock Set prediction without imposing structure as conditional density
  estimation.
\newblock \emph{arXiv preprint arXiv:2010.04109}, 2020.

\bibitem[Zhang et~al.(2022{\natexlab{b}})Zhang, Wu, Zhang, Zhu, Lin, Zhang,
  Sun, He, Mueller, Manmatha, et~al.]{zhang2022resnest}
Hang Zhang, Chongruo Wu, Zhongyue Zhang, Yi~Zhu, Haibin Lin, Zhi Zhang, Yue
  Sun, Tong He, Jonas Mueller, R~Manmatha, et~al.
\newblock Resnest: Split-attention networks.
\newblock In \emph{Proceedings of the IEEE/CVF conference on computer vision
  and pattern recognition}, pp.\  2736--2746, 2022{\natexlab{b}}.

\bibitem[Zhang et~al.(2019{\natexlab{a}})Zhang, He, Sra, and
  Jadbabaie]{zhang2019gradient}
Jingzhao Zhang, Tianxing He, Suvrit Sra, and Ali Jadbabaie.
\newblock Why gradient clipping accelerates training: A theoretical
  justification for adaptivity.
\newblock \emph{arXiv preprint arXiv:1905.11881}, 2019{\natexlab{a}}.

\bibitem[Zhang et~al.(2022{\natexlab{c}})Zhang, Tozzo, Higgins, and
  Ranganath]{zhang2022set}
Lily Zhang, Veronica Tozzo, John Higgins, and Rajesh Ranganath.
\newblock Set norm and equivariant skip connections: Putting the deep in deep
  sets.
\newblock In \emph{International Conference on Machine Learning}, pp.\
  26559--26574. PMLR, 2022{\natexlab{c}}.

\bibitem[Zhang et~al.(2022{\natexlab{d}})Zhang, Guo, Zhang, Li, Miao, Cui,
  Qiao, Gao, and Li]{zhang2022pointclip}
Renrui Zhang, Ziyu Guo, Wei Zhang, Kunchang Li, Xupeng Miao, Bin Cui, Yu~Qiao,
  Peng Gao, and Hongsheng Li.
\newblock Pointclip: Point cloud understanding by clip.
\newblock In \emph{Proceedings of the IEEE/CVF Conference on Computer Vision
  and Pattern Recognition}, pp.\  8552--8562, 2022{\natexlab{d}}.

\bibitem[Zhang et~al.(2019{\natexlab{b}})Zhang, Hare, and
  Prugel-Bennett]{zhang2019deep}
Yan Zhang, Jonathon Hare, and Adam Prugel-Bennett.
\newblock Deep set prediction networks.
\newblock \emph{Advances in Neural Information Processing Systems}, 32,
  2019{\natexlab{b}}.

\bibitem[Zhang et~al.(2019{\natexlab{c}})Zhang, Hare, and
  Pr{\"u}gel-Bennett]{zhang2019fspool}
Yan Zhang, Jonathon Hare, and Adam Pr{\"u}gel-Bennett.
\newblock Fspool: Learning set representations with featurewise sort pooling.
\newblock In \emph{International Conference on Learning Representations},
  2019{\natexlab{c}}.

\bibitem[Zhang et~al.(2021)Zhang, Zhang, Lacoste-Julien, Burghouts, and
  Snoek]{zhang2021multiset}
Yan Zhang, David~W Zhang, Simon Lacoste-Julien, Gertjan~J Burghouts, and
  Cees~GM Snoek.
\newblock Multiset-equivariant set prediction with approximate implicit
  differentiation.
\newblock \emph{arXiv preprint arXiv:2111.12193}, 2021.

\bibitem[Zhang et~al.(2022{\natexlab{e}})Zhang, Zhang, Lacoste-Julien,
  Burghouts, and Snoek]{zhang2022unlocking}
Yan Zhang, David~W Zhang, Simon Lacoste-Julien, Gertjan~J Burghouts, and
  Cees~GM Snoek.
\newblock Unlocking slot attention by changing optimal transport costs.
\newblock In \emph{NeurIPS'22 Workshop on All Things Attention: Bridging
  Different Perspectives on Attention}, 2022{\natexlab{e}}.

\bibitem[Zheng et~al.(2015)Zheng, Shen, Tian, Wang, Wang, and
  Tian]{zheng2015scalable}
Liang Zheng, Liyue Shen, Lu~Tian, Shengjin Wang, Jingdong Wang, and Qi~Tian.
\newblock Scalable person re-identification: A benchmark.
\newblock In \emph{Proceedings of the IEEE international conference on computer
  vision}, pp.\  1116--1124, 2015.

\bibitem[Zheng et~al.(2014)Zheng, Gong, and Xiang]{zheng2014group}
Wei-Shi Zheng, Shaogang Gong, and Tao Xiang.
\newblock Group association: Assisting re-identification by visual context.
\newblock \emph{Person Re-Identification}, pp.\  183--201, 2014.

\bibitem[Zheng et~al.(2017)Zheng, Zheng, and Yang]{zheng2017unlabeled}
Zhedong Zheng, Liang Zheng, and Yi~Yang.
\newblock Unlabeled samples generated by gan improve the person
  re-identification baseline in vitro.
\newblock In \emph{Proceedings of the IEEE international conference on computer
  vision}, pp.\  3754--3762, 2017.

\bibitem[Zhong et~al.(2018)Zhong, Arandjelovic, and
  Zisserman]{zhong2018compact}
Yujie Zhong, Relja Arandjelovic, and Andrew Zisserman.
\newblock Compact deep aggregation for set retrieval.
\newblock In \emph{Proceedings of the European conference on computer vision
  (ECCV) workshops}, pp.\  0--0, 2018.

\bibitem[Zhou et~al.(2021{\natexlab{a}})Zhou, Kang, Jin, Yang, Lian, Jiang,
  Hou, and Feng]{zhou2021deepvit}
Daquan Zhou, Bingyi Kang, Xiaojie Jin, Linjie Yang, Xiaochen Lian, Zihang
  Jiang, Qibin Hou, and Jiashi Feng.
\newblock Deepvit: Towards deeper vision transformer.
\newblock \emph{arXiv preprint arXiv:2103.11886}, 2021{\natexlab{a}}.

\bibitem[Zhou et~al.(2021{\natexlab{b}})Zhou, Zhang, Peng, Zhang, Li, Xiong,
  and Zhang]{zhou2021informer}
Haoyi Zhou, Shanghang Zhang, Jieqi Peng, Shuai Zhang, Jianxin Li, Hui Xiong,
  and Wancai Zhang.
\newblock Informer: Beyond efficient transformer for long sequence time-series
  forecasting.
\newblock In \emph{Proceedings of the AAAI conference on artificial
  intelligence}, volume~35, pp.\  11106--11115, 2021{\natexlab{b}}.

\bibitem[Zhou(2021)]{zhou2021machine}
Zhi-Hua Zhou.
\newblock \emph{Machine learning}.
\newblock Springer Nature, 2021.

\bibitem[Zhu et~al.(2023)Zhu, Zhang, He, Guo, Zeng, Qin, Zhang, and
  Gao]{zhu2023pointclip}
Xiangyang Zhu, Renrui Zhang, Bowei He, Ziyu Guo, Ziyao Zeng, Zipeng Qin,
  Shanghang Zhang, and Peng Gao.
\newblock Pointclip v2: Prompting clip and gpt for powerful 3d open-world
  learning.
\newblock In \emph{Proceedings of the IEEE/CVF International Conference on
  Computer Vision}, pp.\  2639--2650, 2023.

\bibitem[Zwartsenberg et~al.(2023)Zwartsenberg, Scibior, Niedoba, Lioutas,
  Sefas, Liu, Dabiri, Lavington, Campbell, and
  Wood]{zwartsenberg2023conditional}
Berend Zwartsenberg, Adam Scibior, Matthew Niedoba, Vasileios Lioutas, Justice
  Sefas, Yunpeng Liu, Setareh Dabiri, Jonathan~Wilder Lavington, Trevor
  Campbell, and Frank Wood.
\newblock Conditional permutation invariant flows.
\newblock \emph{Transactions on Machine Learning Research}, 2023.

\end{thebibliography}
% \bibliographystyle{tmlr}

%\appendix
%\section{Appendix}
%You may include other additional sections here.

\end{document}